\newtheorem{theorem}{Theorem}
\definecolor{cvprblue}{rgb}{0.21,0.49,0.74}
\title{Imputation-free and Alignment-free:
Incomplete Multi-view Clustering Driven by
Consensus Semantic Learning}
\author{
    Yuzhuo Dai\textsuperscript{1}, 
    Jiaqi Jin\textsuperscript{1}, 
    Zhibin Dong\textsuperscript{1}, 
    Siwei Wang\textsuperscript{2,}\thanks{Corresponding author}, 
    Xinwang Liu\textsuperscript{1,*}, 
    En Zhu\textsuperscript{1,*}, \\
    Xihong Yang\textsuperscript{1}, 
    Xinbiao Gan\textsuperscript{1}, 
    Yu Feng\textsuperscript{1} \\
    \textsuperscript{1}National University of Defense Technology, Changsha, China \\
    \textsuperscript{2}Intelligent Game and Decision Lab, Beijing, China \\
    {\tt\small \{yzdai24, wangsiwei13\}@nudt.edu.cn}
}
\begin{document}
\maketitle
\begin{abstract}
\noindent In incomplete multi-view clustering (IMVC), missing data induce prototype shifts within views and semantic inconsistencies across views. A feasible solution is to explore cross-view consistency in paired complete observations, further imputing and aligning the similarity relationships inherently shared across views. Nevertheless, existing methods are constrained by two-tiered limitations: (1) Neither instance- nor cluster-level consistency learning construct a semantic space shared across views to learn consensus semantics. The former enforces cross-view instances alignment, and wrongly regards unpaired observations with semantic consistency as negative pairs; the latter focuses on cross-view cluster counterparts while coarsely handling fine-grained intra-cluster relationships within views. (2) Excessive reliance on consistency results in unreliable imputation and alignment without incorporating view-specific cluster information. Thus, we propose an IMVC framework, imputation- and alignment-free for consensus semantics learning (FreeCSL). To bridge semantic gaps across all observations, we learn consensus prototypes from available data to discover a shared space, where semantically similar observations are pulled closer for consensus semantics learning. To capture semantic relationships within specific views, we design a heuristic graph clustering based on modularity to recover cluster structure with intra-cluster compactness and inter-cluster separation for cluster semantics enhancement. Extensive experiments demonstrate, compared to state-of-the-art competitors, FreeCSL achieves more confident and robust assignments on IMVC task.
\end{abstract}

\section{Introduction}
\label{sec:intorduction}
Thanks to representation learning enhanced by data observed from different perspectives, multi-view clustering (MVC) has achieved significant breakthroughs in the field of unsupervised learning \cite{zhou2024survey, yan2024differentiable, ke2024rethinking, cui2024novel,hu2023deep,fang2023comprehensive, li2018survey,10486880}. However, in practical applications, the assumption of data completeness is often difficult to satisfy that incomplete multi-view clustering (IMVC) is introduced \cite{tang2024incomplete, liu2018late, wang2021generative, feng2024partial, pu2024adaptive,10506102,zhang2021one}. In IMVC, missing data causes prototypes shifts within views and semantic misalignment across views, due to the discrepancy between the distributions of complete and incomplete instances \cite{huang2020partially, li2023incomplete}. More and more studies \cite{jin2023deep, dong2024subgraph} have noted that variations in complete instances across different views further exacerbate prototype misalignment. It is challenging to achieve semantic consistency on cluster assignments across all view data.

\begin{figure}[t!]
\includegraphics[width=\linewidth]{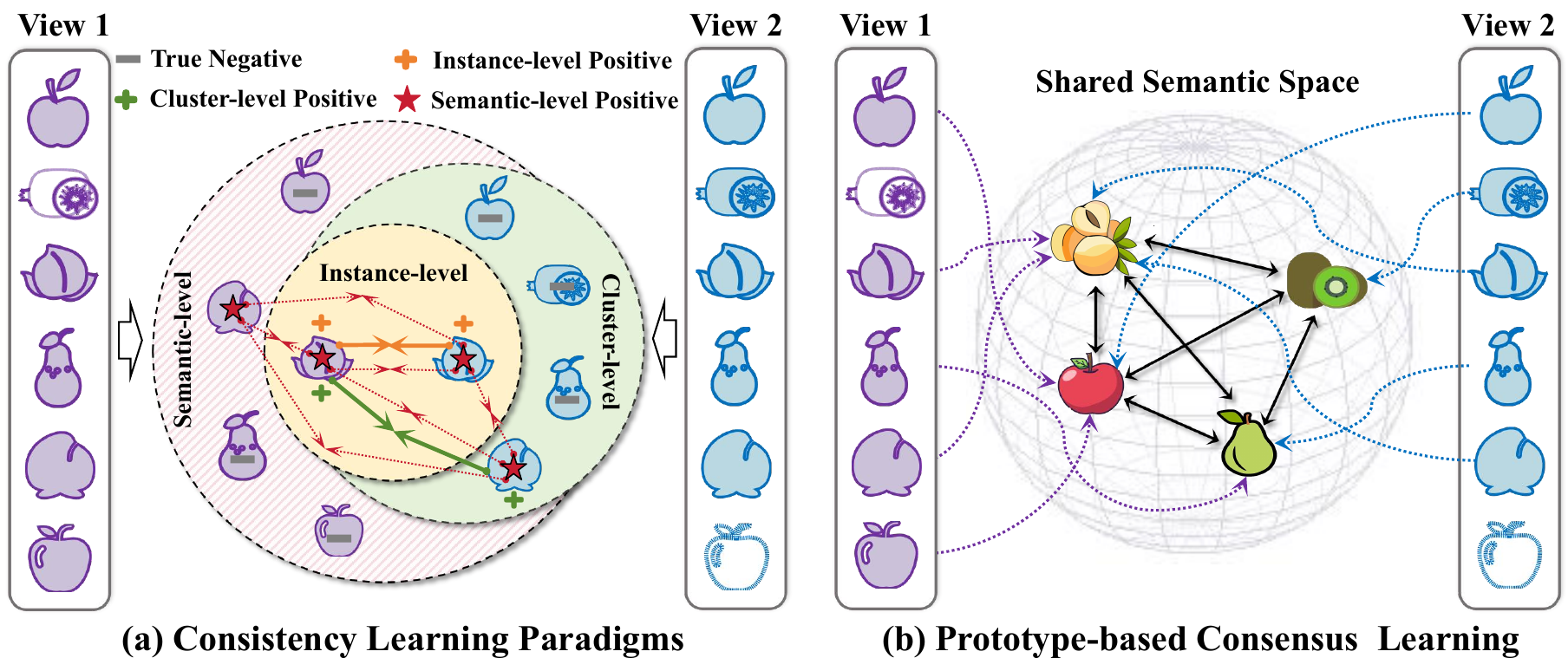}
\vspace{-16pt}
\caption{Research Motivation for Consensus Semantic Learning. (a) two existing paradigms, instance- and cluster-level, either treat cross-view unpaired observations with similar semantics as false negatives or neglect to treat within-view observations with similar semantics as true positives; (b) we propose a novel semantic-level paradigm based on contrastive clustering with a set of consensus prototypes to foster semantic consistency across all view data. }
\label{fig:intro}
\vspace{-9.5pt}
\end{figure}
To alleviate prototype shifts and misalignment while promoting semantic consensus in cluster assignments, existing IMVC methods explore consistency information from complete instances for imputation and alignment \cite{li2023cross,lin2022dual,tang2022deep}. Unfortunately, they still suffer from several significant drawbacks in practical applications. In terms of consistency learning, as shown in Fig. \ref{fig:intro} (a), one widely used paradigm is instance-level \cite{wang2022rethinking,huang2020deep}, which pulls paired observations (the same instance in different views) closer in the representation space by enforcing highly similar representations, but may inadvertently discard view-specific information \cite{sun2024robust,lu2024survey,yang2021partially}. More importantly, it tends to introduce false-negative noise, where unpaired observations with the same semantics are incorrectly treated as negative pairs \cite{lu2024decoupled,guo2024robust,caron2020unsupervised}. \cite{SURE},\cite{zeng2023semantic} and \cite{lu2024decoupled} attempted to optimize this issue by proposing a cluster-level paradigm \cite{MvCLN} that encourages observations to find their cluster counterparts across different views. It learns a cluster space shared across views but not applicable within specific views, as there are no clustering interactions among intra-view observations.

Since the above consistency learning paradigms fail to account for semantic consistency across all view data, they cannot mitigate semantic gaps among intra-cluster observations \cite{shen2021you,li2021contrastive,zhong2020deep}. Therefore, imputation of missing data is required to restore the original data distribution, including neighborhood-based recovery via cross-view graph structure transfer \cite{wang2022incomplete}, adversarial generation or contrastive prediction through cross-view mutual information interaction \cite{wang2021generative,zhang2020deep,jiang2019dm2c,xu2019adversarial}, and prototype-based imputation via cross-view sample-prototype relationship inheritance \cite{li2023incomplete}. Meanwhile, cross-view alignment of assignments \cite{tang2022deep,li2021contrastive}, prototypes \cite{jin2023deep, li2023incomplete,wangalign}, or distributions \cite{dong2024subgraph,xu2023adaptive} is also a crucial approach for further enhancing consistent learning. More related works are enumerated in Appendix A. Both imputation and alignment are limited by the consistent information from cross-view paired data and cannot fully exploit within-view unpaired data to mine view-specific cluster information, \emph{i.e.,} within-cluster compactness and between-cluster separation \cite{he2024robust,li2023incomplete}. Particularly, once the amount of missing data is too excessive to provide sufficient consistent information, model performance may even decline rapidly due to the noise introduced by improper imputation and alignment.

Realizing the above issues, we ponder: imputation and alignment aim to restore the similarity relationships inherited from other complete views for the clustering task. Can we directly bridge the semantic gaps while exploring the semantic relationships among all data in consistency learning, thereby avoiding imputation and alignment operations with uncertainty noise? Thus, we propose an IMVC framework, a novel semantic-level paradigm, as shown in Fig. \ref{fig:intro} (b), that is driven by imputation- and alignment-free consensus semantic learning (FreeCSL). Notably, our proposed consensus learning involves concurrent interaction among all data, rather than being limited to within or across views. To bridge semantic gaps among view observations and learn cluster semantics information, FreeCSL employs contrastive clustering based on consensus prototypes to discover a shared semantic space, where observations converge toward their semantic prototypes respectively. In practice, we set missing statistical weights for observations to facilitate view collaboration in consensus representations that integrate detailed information of all views while adapting the impact of different views with different missing instances. Based on consensus representation, our model can construct robust consensus prototypes for semantic-level clustering without imputation and alignment. To discover view-specific semantic relationships and further enrich consensus representations, FreeCSL exploit graph clustering to capture the cluster structure for each view, which maximizes graph modularity within views to enhance intra-cluster connections and reduce inter-cluster interactions. In short, our model encodes data correlation, discovered by semantic learning, into a shared space to obtain consensus semantic representations for instance clustering. Our prominent contributions can be summarized as follows:

\begin{itemize}
\item In terms of bridging semantic gaps, we design the consensus semantic learning module, a novel semantic-level paradigm based prototypical contrastive clustering, to discover a common semantic space where all observations are embedded as representations with consistent semantics, avoiding additional imputation and alignment.
\item In terms of exploring semantic relationships, we employ the cluster semantics enhancement module, a heuristic graph clustering method with modularity-based learning objective, to mine the inherent cluster structures that reveal the semantic correlations within views.
\item Extensive experiments show our model surpasses state-of-the-art (SOTA) competitors in complex tasks with high missing rates, multiple clusters and large-scale data.

\end{itemize}

\section{Method}
\label{sec:method}
In this section, FreeCSL, a deep IMVC method without imputation or alignment, is proposed to learn consensus semantic representations for clustering. The framework in Fig. \ref{fig:short} coordinates reconstruction (REC) module, cross-view consensus semantic learning (CSL) module, and within-view cluster semantic enhancement (CSE) module. 

\noindent 
\subsection{Problem Statement}

\textbf{Notations.} 
Given a multi-view dataset $\mathcal{X} = \{ \mathbf{X}^{{v}} \in \mathbb{R}^{N \times D_v} \}_{v=1}^{V}$ with $N$ instances across $V$ views, $\tilde{\mathbf{X}}^{{v}} = \{ \tilde{\mathbf {x}}_i^{{v}} \in \mathbb{R}^{D_v} \}_{i=1}^{N_v}$ is an incomplete subset of the $v$-th view with ${N_v}$ observations, and $\overline{\mathbf{X}}^{{m,n}} =\{ (\overline{\mathbf {x}}_i^{{m}},\overline{\mathbf {x}}_i^{{n}}) \}_{i=1}^{N_{mn}}$ is a pair-observed subset with ${N_{mn}}$ instances observed in both the $m$-th and $n$-th view. The task is to partition $N$ instances into $K$ clusters.

\noindent\textbf{Definition 1.} \textit{Instance-level Consistency (IC): $\forall m \neq n$, $\mathbf{x}_i^{{m}}$ and $\mathbf{x}_j^{{n}}$ are instance-level consistent across views if $i=j$ (they are cross-view observations of the same instance $\mathbf{x}$), expressed as $I(\mathbf{x}_i^{{m}},\mathbf{x}_j^{{n}})=1$ and 0 otherwise.}

\noindent\textbf{Definition 2.} 
\textit{Cluster-level Consistency (CC): $\forall m \neq n$, $\mathbf{x}_i^{{m}}$ and $\mathbf{x}_j^{{n}}$ are cluster-level consistent across views if they belong to the same cluster $k$, expressed as $C(\mathbf{x}_i^{{m}},\mathbf{x}_j^{{n}})= 1$ and 0 otherwise.}

\noindent\textbf{Definition 3.} 
\textit{Semantic-level Consensus (SC): $\forall m$ and $n$, $\mathbf{x}_i^{{m}}$ and $\mathbf{x}_j^{{n}}$ achieve semantic-level consensus in MVC task if all observations share a set of cluster prototypes $\mathbf{C}={\{\mathbf{c}_k\}_{k=1}^K}$ and $\arg\max\limits_{k} \mathcal{\rho}(\mathbf{x}_i^{m}, \mathbf{c}_k) = \arg\max\limits_{k} \mathcal{\rho}(\mathbf{x}_j^{n}, \mathbf{c}_k)$ , expressed as $S(\mathbf{x}_i^{{m}},\mathbf{x}_j^{{n}})=1$ and 0 otherwise.}

\begin{theorem}\label{thm:Theorem} 
Consensus semantic learning yields more confident and robust cluster assignments than instance- and cluster-level paradigms. (Proof is provided in Appendix B.)
\end{theorem}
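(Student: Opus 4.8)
The plan is to make precise what ``more confident and robust'' means and then compare the three paradigms under a common probabilistic model of the cluster assignments. First I would fix a latent-variable model: each instance $\mathbf{x}$ has a true cluster label $y\in\{1,\dots,K\}$, and each view observation $\mathbf{x}_i^{v}$ produces a soft assignment $\mathbf{q}_i^{v}\in\Delta^{K-1}$ (a point on the probability simplex) obtained from similarities $\rho(\mathbf{x}_i^{v},\cdot)$ to whatever prototypes the paradigm uses. ``Confidence'' I would quantify as the expected negative entropy $-\mathbb{E}[H(\mathbf{q})]$ (equivalently the peakedness of the assignment, or the margin $q_{(1)}-q_{(2)}$ between the top two coordinates), and ``robustness'' as stability of $\arg\max_k q_k$ under the perturbation induced by missing views, e.g. a bound on $\Pr[\arg\max \mathbf{q}^{v}\neq y]$ or a Lipschitz-type sensitivity of $\mathbf{q}$ to dropping observations. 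With these definitions, Definitions~1--3 translate into three different prototype sets: instance-level effectively uses per-view, per-instance anchors and a contrastive loss with false negatives; cluster-level uses per-view cluster centroids; semantic-level uses the \emph{single shared} set $\mathbf{C}=\{\mathbf{c}_k\}_{k=1}^K$ across all views.

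The key steps, in order, would be: (1) Show that the instance-level objective admits configurations with low confidence, by exhibiting the false-negative effect formally — two unpaired observations $\mathbf{x}_i^{m},\mathbf{x}_j^{n}$ with the same semantics ($S=1$) are pushed apart, so their induced assignments cannot both concentrate on the same coordinate; this yields a strictly positive lower bound on $\mathbb{E}[H(\mathbf{q})]$ that does not vanish as representations improve. (2) Show that the cluster-level paradigm removes the cross-view false negatives but, because it defines centroids per view with no intra-view clustering force, the within-view assignments need not be peaked: construct a simple two-point-per-view example where the cluster-level loss is minimized yet the margin $q_{(1)}-q_{(2)}$ is bounded away from $1$. (3) Show that under semantic-level consensus the shared prototypes $\mathbf{C}$ make $\arg\max_k \rho(\mathbf{x}_i^m,\mathbf{c}_k)$ view-independent by Definition~3, so the consensus representation (the missing-weighted combination described in the method) is an average of assignments that all point to the same coordinate; conclude the entropy is minimized (margin $\to 1$) at the optimum, giving strictly higher confidence than (1) and (2). (4) For robustness, bound the change in the consensus assignment when one view is dropped: since all per-view assignments agree on the top coordinate, the convex combination's $\arg\max$ is invariant to reweighting, whereas for instance-/cluster-level paradigms dropping a paired view can flip the alignment target; formalize this as $\Pr[\text{flip}]_{\text{SC}} \le \Pr[\text{flip}]_{\text{CC}} \le \Pr[\text{flip}]_{\text{IC}}$ under the model.

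I would present steps (1) and (2) as small explicit counterexample lemmas (they only need a handful of points and a $K=2$ or $K=3$ simplex computation), and step (3) as the main positive result, with step (4) following as a corollary of the convex-combination-with-common-argmax observation. The main obstacle I expect is step~(3): making the claim ``consensus prototypes exist and are attained at the optimum'' rigorous requires either assuming the optimizer reaches the SC condition of Definition~3 (a clean but somewhat strong hypothesis) or showing the FreeCSL loss has SC configurations as its global minimizers — the latter needs a careful argument that the prototypical contrastive clustering term, together with the reconstruction and modularity terms, does not admit lower-loss non-consensus solutions. I would handle this by stating the theorem's guarantee conditionally on the model reaching SC (mirroring how Definitions~1--3 are posited), so that the comparison in steps (1)--(4) is against the respective paradigms' own optima, keeping the proof self-contained modulo that modeling assumption.
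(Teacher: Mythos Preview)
Your proposal is coherent but takes a genuinely different route from the paper. The paper's argument works at the level of the contrastive \emph{objective functions}: it writes a single generic positive/negative-pair objective, then for each paradigm decomposes the expected loss into a clean part plus a paradigm-specific corruption term. For instance-level this is a multiplicative false-negative factor $(1+\epsilon)$ on the negative-pair penalty, with $\epsilon=\mathbb{P}(C{=}1\mid I{=}0)$ tied explicitly to the missing rate $r$ and the number of clusters $K$; for cluster-level it is an additive cross-view error $\delta^{m,n}=\delta^{m,n}_{se}+\delta^{m,n}_{st}$ coming from per-view prototype misalignment and distribution shift. Semantic-level is then shown to make both $\epsilon_{sc}\to 0$ and $\delta^{m,n}\to 0$ by virtue of the shared prototype set $\mathbf{C}$, leaving an uncorrupted expected objective; ``robustness'' is argued by tracking how these corruption terms scale with $r$, $K$, $V$.

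You instead work at the level of the \emph{output assignments}: confidence via entropy/margin of $\mathbf{q}$, robustness via $\arg\max$-stability under view deletion, and explicit small-$K$ counterexamples for the negative claims. This buys you a cleaner link between the theorem's words and measurable clustering quantities, and your convex-combination-with-common-argmax step for robustness is tighter than the paper's scaling heuristics. What you lose is the explicit dependence on the missing rate $r$, which is the paper's central lever in the IMVC setting, and your critique of cluster-level in step~(2) targets only the absent intra-view force, whereas the paper's main complaint is the non-eliminable cross-view $\delta^{m,n}$; you should fold that in. Your worry about step~(3) is legitimate, but note that the paper handles it the same way you propose --- it effectively assumes the shared-prototype configuration is reached and reasons from Definition~3, rather than proving attainment from the full loss.
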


\begin{theorem}\label{thm:Theorem2} 
Since paired observations $ (\overline{\mathbf {x}}_i^{{m}},\overline{\mathbf {x}}_i^{n})$ inherently satisfy instance- and cluster-level consistency, they can achieve semantic consensus via a shared set of prototypes $\mathbf{C}$. (Proof is provided in Appendix B.)
\end{theorem}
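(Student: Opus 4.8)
The plan is to split the argument into a qualitative part, in which paired observations inherit instance- and cluster-level consistency for free, and a constructive part, in which a single prototype set $\mathbf{C}$ is exhibited that realizes semantic-level consensus on them. For the qualitative part, fix a pair $(\overline{\mathbf{x}}_i^{m},\overline{\mathbf{x}}_i^{n})$. By Definition~1 the index matches itself, so $I(\overline{\mathbf{x}}_i^{m},\overline{\mathbf{x}}_i^{n})=1$ with no further assumption. Since both observations are generated from a single underlying instance $\mathbf{x}_i$, which carries one latent cluster membership $k(i)\in\{1,\dots,K\}$, both belong to cluster $k(i)$, hence $C(\overline{\mathbf{x}}_i^{m},\overline{\mathbf{x}}_i^{n})=1$ by Definition~2. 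This reproduces the hypothesis of the statement, so the real content is the passage from IC and CC to SC.

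For the constructive part I would work in the consensus embedding space. Let $f^{v}$ be the view encoders, $\mathbf{z}_i^{v}=f^{v}(\overline{\mathbf{x}}_i^{v})$ the view-specific representations, and $\mathbf{z}_i$ the fused consensus representation; for a pair both views are observed, so the missing-weight fusion collapses to an ordinary convex combination of $\mathbf{z}_i^{m}$ and $\mathbf{z}_i^{n}$. Take the shared prototypes to be the class centroids $\mathbf{c}_k=|\mathcal{S}_k|^{-1}\sum_{j\in\mathcal{S}_k}\mathbf{z}_j$ with $\mathcal{S}_k=\{j:k(j)=k\}$ --- equivalently, the prototypes produced by the CSL module --- and let $\rho$ be any similarity strictly decreasing in a metric $d$ on that space (cosine similarity on the unit sphere, or $-\|\cdot\|$); in Definition~3 I read $\rho(\overline{\mathbf{x}}_i^{m},\mathbf{c}_k)$ as $\rho(\mathbf{z}_i^{m},\mathbf{c}_k)$, the prototypes living in embedding space. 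The key step is an auxiliary claim: at a minimizer of the combined REC, CSL and CSE objective there are constants $\varepsilon\ge 0$ and $\delta\ge 0$ such that $\max_i d(\mathbf{z}_i^{m},\mathbf{z}_i)\le\varepsilon$ and $\max_i d(\mathbf{z}_i^{n},\mathbf{z}_i)\le\varepsilon$ (cross-view discrepancy controlled by reconstruction together with the contrastive-clustering consistency term, which on paired data ties both views to the same soft assignment), $\max_i d(\mathbf{z}_i,\mathbf{c}_{k(i)})\le\delta$ (intra-cluster compactness from CSL, reinforced by the modularity objective of CSE), and the prototype margin $\gamma:=\min_{k\neq k'} d(\mathbf{c}_k,\mathbf{c}_{k'})$ satisfies $\gamma>2(\varepsilon+\delta)$.

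Granting the auxiliary claim, a reverse-triangle-inequality / Voronoi estimate concludes the proof: for every $k'\neq k(i)$,
\[
 d(\mathbf{z}_i^{m},\mathbf{c}_{k(i)})\le\varepsilon+\delta<\gamma-(\varepsilon+\delta)\le d(\mathbf{c}_{k(i)},\mathbf{c}_{k'})-(\varepsilon+\delta)\le d(\mathbf{z}_i^{m},\mathbf{c}_{k'}),
\]
so the ball of radius $\varepsilon+\delta$ about $\mathbf{z}_i$ lies strictly inside the Voronoi cell of $\mathbf{c}_{k(i)}$; because $\rho$ strictly decreases in $d$, this forces $\arg\max_{k}\rho(\mathbf{z}_i^{m},\mathbf{c}_k)=k(i)$, and the identical bound with $n$ in place of $m$ gives $\arg\max_{k}\rho(\mathbf{z}_i^{n},\mathbf{c}_k)=k(i)$. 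The two maximizers therefore coincide, so $S(\overline{\mathbf{x}}_i^{m},\overline{\mathbf{x}}_i^{n})=1$ by Definition~3 and the pair attains semantic consensus through the single prototype set $\mathbf{C}$; together with Theorem~1 this shows the semantic-level paradigm retains every guarantee available on paired data.

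The main obstacle is the auxiliary claim, and inside it the strict inequality $\gamma>2(\varepsilon+\delta)$: nothing in a finite-step optimization forces the two embeddings of a paired instance to fall inside one Voronoi cell rather than straddling a boundary. I would discharge it in one of two ways. Either (i) adopt a clustering-friendly geometry assumption --- a prototype-separation condition of the kind standard in the deep-clustering literature --- as an explicit hypothesis, and verify it is preserved under the convex fusion step; or (ii) argue that at the global optimum of the combined objective the cross-view term vanishes ($\varepsilon\to 0$) and the prototypical contrastive loss drives $\delta\to 0$, while under unit-norm embeddings $\gamma$ is bounded below by a constant depending only on $K$ (the simplex-ETF separation $\sqrt{2K/(K-1)}$, up to the embedding scale), so the inequality holds for every sufficiently well-trained model. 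The remaining steps --- the implication from IC to CC and the Voronoi estimate above --- are then routine.
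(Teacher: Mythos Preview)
Your proposal is correct but takes a substantially more quantitative route than the paper. The paper's own argument for the SC step is almost purely definitional: having established $C(\overline{\mathbf{x}}_i^{m},\overline{\mathbf{x}}_i^{n})=1$, it simply identifies ``belongs to cluster $k$'' with ``nearest prototype is $\mathbf{c}_k$'' and argues by contradiction --- if $\overline{\mathbf{x}}_i^{n}$ were assigned to $\mathbf{c}_{k'}$ with $k'\neq k$, that would already violate CC, because the prototype \emph{is} the central representative of the cluster. No margin condition, no Voronoi estimate, no appeal to the training objective appears; the passage from CC to SC is treated as tautological once a single shared prototype set exists.

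Your version separates ground-truth cluster membership from learned prototype assignment and introduces the $(\varepsilon,\delta,\gamma)$ geometry to make that identification honest. This is more rigorous --- it exposes exactly the assumption the paper sweeps under the rug, namely that each observation actually lands in the Voronoi cell of its own class prototype --- and your two proposed discharges of the auxiliary claim are reasonable ways to recover that assumption. The cost is added machinery (and a hypothesis $\gamma>2(\varepsilon+\delta)$ that the paper never states); the benefit is that your argument would survive scrutiny in a setting where clusters and prototypes are not definitionally the same object, whereas the paper's proof leans entirely on that conflation.
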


\begin{figure*}
  \centering  
 \includegraphics[width=16.2cm, height=9cm]
  {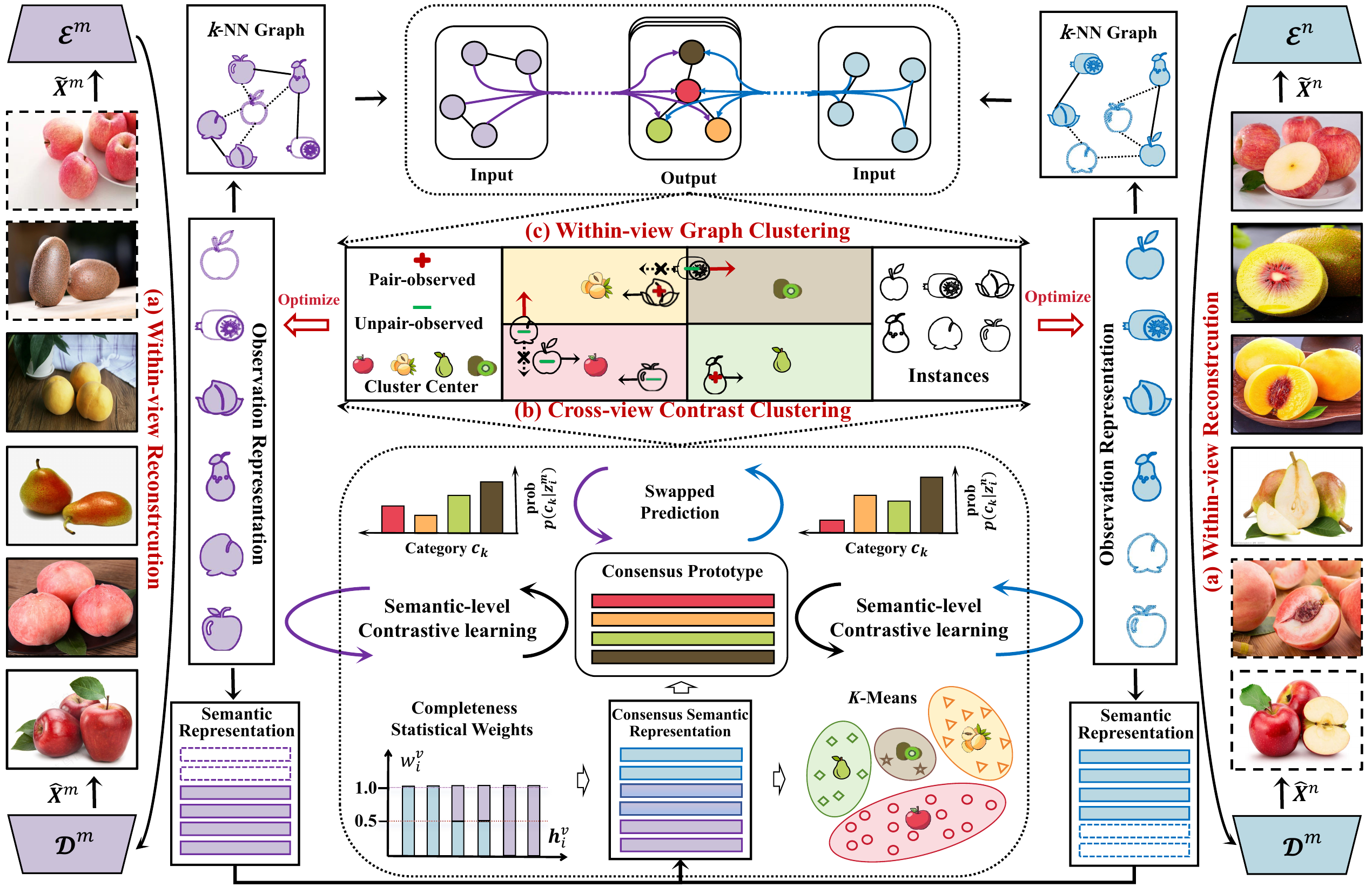}
    \caption{The framework of FreeCSL. (a) Reconstruction module, encodes observations into clustering-friendly representations for each view; (b) Consensus semantic learning module, learns semantic representations through cross-view contrastive clustering in a shared semantic space, where paired observations are assigned to their nearest semantic prototype for consistent assignments. (c) Cluster semantic enhancement module, enriches semantic representations with cluster structure information through within-view graph clustering, which applies GCN to aggregate view-specific semantic information and maximizes spectral modularity to recover cluster structure with greater separation and lower entropy. Ultimately, perform $k$-means on the consensus semantic representations to predict cluster labels.}
    \label{fig:short}
    \vspace{-5pt}
\end{figure*}

\subsection{Within-view Reconstruction}
To avoid clustering instability from similarity measures on manifold structures in high-dimensional spaces, we use autoencoders to encode view observations $\tilde{\mathbf{X}}^{{v}}$ into clustering-friendly low-dimensional representations $\tilde{\mathbf{Z}}^{{v}}$. Considering multiple views are mostly heterogeneous and differently distributed, we provide an independent encoder-decoder $\{\mathcal{E}_{v},\mathcal{D}_{v}\}_{v=1}^{V}$ for each view. The encoder embeds the latent representation and the decoder recovers the original data from it, which jointly minimize reconstruction loss:
\vspace{-3pt}
\begin{equation}
    \begin{aligned}
    \mathcal{L}_{rec} & = \sum_{v=1}^{V} \left\| \tilde{\mathbf{X}}^{{v}} - {\mathcal{D}}_{v}(\tilde{\mathbf{Z}}^{{v}}) \right\|_F^2 \\
    & = \sum_{v=1}^{V} \sum_{i=1}^{N_v} \left\| \tilde{\mathbf{x}}_i^{{v}} - \mathcal{D}_{v}(\mathcal{E}_{v}(\tilde{\mathbf{x}}_i^{{v}})) \right\|_2^2,
    \end{aligned}
\label{eq:rec Loss}
\end{equation}
where $\mathcal{E}_{v}(\tilde{\mathbf{X}}^{{v}}; \mathbf{\theta}_v) : \tilde{\mathbf{X}}^{{v}} \in \mathbb{R}^{N_v \times D_v} \rightarrow \tilde{\mathbf{Z}}^{{v}} \in \mathbb{R}^{N_v \times d}$ and $\mathcal{D}_{v}(\tilde{\mathbf{Z}}^{{v}}; \mathbf{\phi}_v) : \tilde{\mathbf{Z}}^{{v}} \in \mathbb{R}^{N_v \times d} \rightarrow \hat{\mathbf{X}}^{{v}} \in \mathbb{R}^{N_v \times D_v}$.

\subsection{Cross-view Consensus Semantic Learning}\label{sec:3.3}
Based on Theorem \ref{thm:Theorem}, we design prototypical contrastive clustering for consensus semantic learning. To reach semantic-level consensus on assignments, consensus prototypes with all view information, are introduced into contrastive clustering to explore a shared semantic space, where all observations with similar semantics are pulled closer.

\textbf{Consensus Representations and Consensus Prototypes.} \label{sec:Consensus Representation and Consensus Prototype}Due to the assumptions of consistency and complementarity in MVC, inseparable clusters in one view will become linearly separable by introducing complementary information from other views \cite{xu2022deep,hu2017sharable,zhang2024regularized}. We utilize a fusion manner, denoted as $\mathbb{T}(\cdot)$, to map the latent representations $\mathbf{Z}^{{v}}$ into a linearly weighted representation space for consensus representations $\mathbf{Z} \in \mathbb{R}^{N \times d}$ that integrates consistency and complementary information from multiple views:
\begin{equation}
\begin{split}
    \mathbf{Z} = \mathbb{T}(\{\mathbf{Z}^{{v}}\}_{v=1}^{V}) = \sum_{v=1}^{V} \mathbf{w}^{{v}} \mathbf{Z}^{{v}}
    = \Big \{\sum_{v=1}^{V} w_i^{{v}} \mathbf{z}_i^{{v}}\Big \} _{i=1}^{N},
\label{eq:consensus representations}
\end{split}
\end{equation}
where ${w}_i^{{v}}$ is the instance-level fine-grained fusion weight:
\begin{equation}
\begin{split}
{w}_i^{{v}} = \frac{\mathbb{I} \left( {\mathbf{z}_i^{{v}} \neq NaN } \right)
}{
\sum_{v'}^V\mathbb{I} \left( \mathbf{z}_i^{v'} \neq NaN \right)
},
\end{split}
\end{equation}
\noindent where $\mathbb{I}(\cdot)$ is the indicative function that takes $1$ when $\mathbf{z}_i^{{v}}$ is the representation of $i$-th instance $\mathbf{X}_i$ observed in view $v$, and 0 otherwise. We set a completeness statistical weight ${w}_i^{{v}}$ based on the number of observations in $V$ views for instance $\mathbf{X}_i$ as the fusion weight, which makes use of view-specific complementary information without discarding unpaired observations and mitigating the negative impact of missing noise by adapting to the differences in missing instances across different views.

We derive a set of consensus semantic prototypes $\mathbf{C} = \{\mathbf{c}_k \in \mathbb{R}^d \}_{k=1}^K$ via $k$-means on consensus representations $\mathbf{Z}$. As ``a representative embedding for semantically similar observations", $\mathbf{C}$ comprehensively captures the semantic information of all data \cite{yu2023sparse} and is continuously refined throughout consensus representation learning.

\textbf{Prototype-based Contrastive Clustering.} We perform semantic-level contrastive clustering in a shared vector space spanned by consensus prototypes, where the latent representation $\mathbf{z}_i^v$ encoded as the semantic representation $\mathbf{h}_i^v$ and $\mathbf{h}_i^v$ is assigned to the prototype $\mathbf{c}_k$ with the longest projected distance to obtain optimal assignments, in two steps:
\begin{itemize}
\item Semantic similarity measure: Encode the semantic representation ${\mathbf{h}^{{v}}_i}$ and project it onto each prototype, then calculate the probability $p_{i,k}^{{v}}$ of belonging to cluster $k$:
\begin{equation}
    p_{i,k}^{{v}} = \frac{\exp\left(\mathbf{h}_i^{{v}\top} \mathbf{c}_k / {\tau}  \right)}{\sum_{k^{'}}^K \exp\left(\mathbf{h}_i^{{v}\top} \mathbf{c}_{k^{'}} / {\tau} \right)},
\label{eq: the probability of representation imputated to prototype}
\end{equation}
\noindent where $\tau$ is a temperature parameter \cite{wu2018unsupervised}. $\mathbf{p}^{{v}}_i=\{p_{i,k}^{{v}}\}_{k=1}^K$ is the soft assignments of observation $\mathbf{x}^{{v}}_i$.

\item Swapped knowledge distillation: Based on Theorem \ref{thm:Theorem2}, $(\overline{\mathbf {x}}_i^{{m}},\overline{\mathbf {x}}_i^{{n}}) $ share the same cluster semantics on consensus prototypes $\mathbf{C}$, thus their true labels $\mathbf{y}_{i }^{{m}}, \mathbf{y}_{i}^{{n}}\in \mathbb{R}^{K}$ should be are distributionally consistent. To this end, ``swapped'' knowledge distillation (KD) utilizes pseudo-labels $\mathbf{q}_{i }^{{m}}$, $\mathbf{q}_{i}^{{n}}$ as mutual supervised signals to prompt
their cluster assignments $\mathbf{p}_{i }^{{m}}, \mathbf{p}_{i }^{{n}}$ as identical as possible:
\begin{equation}
    \mathbf{\ell}^{{m,n}}_{{cc}} = \ell_{{kd}}(\mathbf{H}^{{m}}, \mathbf{Q}^{{n}}) + \ell_{{kd}}(\mathbf{H}^{{n}}, \mathbf{Q}^{{m}}),
    \label{eq: cl loss}
\end{equation}
where $\displaystyle\ell_{{kd}}(\mathbf{H}^{{m}}, \mathbf{Q}^{{n}}) = -\frac{1}{N_{nm}} \sum_{i=1}^{N_{mn}}  \sum_{k=1}^K\mathbf{q}_{i}^{{n}} \log \mathbf{p}_{i}^{{m}}$.
\noindent We extend Eq. \eqref{eq: cl loss} to more than two views, fostering greater view collaboration to enhance semantic learning:
\begin{equation}
\mathcal{L}_{cc} =\sum_{m=1}^V \sum_{\substack{n=1 \\ n \neq m}}^V  \ell^{{m,n}}_{{cc}} .
\label{eq:total cl loss}
\end{equation}
\noindent \item Consensus label solution: To obtain pseudo-labels $\mathbf{Q}^{{m}} = \{ \mathbf{q}_{i }^{{m}} \}_{i=1}^{N_{mn}}$, project semantic representations $\mathbf{H}^{{m}}$ onto consensus prototypes $\mathbf{C}$ and maximize similarity between the representation $\mathbf{h}_i^{{m}}$ and its assigned prototype $\mathbf{c}_{k}$:
\begin{equation}
\mathbf{Q}^{*} = \arg\max_{\mathbf{Q}^{{m}} \in \mathcal{Q}} \, \text{Tr}\{({\mathbf{C}\mathbf{Q}^{{m}}})^\top \mathbf{H}^{{m}})\} + \alpha \mathcal{R}(\mathbf{Q}^{{m}}), \label{eq:optimal transport}
\end{equation} where $\mathcal{R}(\mathbf{Q}^{{m}})=-\sum_i^{N_{mn}}\mathbf{q}_{i}^{{m}} \log \mathbf{q}_{i}^{{m}}$, an entropy regularizer prevents trivial solutions via the smoothness $\alpha$. 

\end{itemize} 

\subsection{Within-view Cluster Semantic Enhancement}\label{sec:3.4}
Considering that graph pooling can discover data implicit correlation and reduce missing noise via aggregating the information of neighboring nodes, we design a graph clustering to enhance representations with cluster semantic information. It combines a heuristic learning objective based on modularity, an evaluation metric for cluster structure quality, to effectively recover the cluster structure\cite{yu2024non}. 

\textbf{Modularity.} Modularity \cite{newman2006modularity} quantifies the deviation between the actual cluster structure and the expected structure generated by random combinations in the null model. Modularity matrix $\mathbf{B}$ is defined as $\mathbf{B} = \mathbf{A} - \frac{\mathbf{d} \mathbf{d}^\top}{2m}$, 
where $\mathbf{A}$ is the adjacency matrix, $\mathbf{d}$ is the degree vector and $m$ is the total number of edges in the graph. The entry $b_{ij} \in \mathbf{B}$ measures connection strength between nodes $i$ and $j$.

\textbf{Modularity-based Graph clustering.} Graph clustering relies on graph pooling and node assignment.
For each view, we construct graph structures with $k$-nearest neighbors (KNN), then deploy graph convolutional network (GCN) $\mathcal{G}_v(\cdot)$ to aggregate graph embeddings:
\begin{equation}
    \begin{split}
    \mathcal{G}_v(\mathbf{Z}^{v};\mathbf{A}^{v}) = \sigma ( {\mathbf{D}^{v}}^{-\frac{1}{2}} {\mathbf{A}^{v}} {\mathbf{D}^{v}}^{-\frac{1}{2}} \mathbf{Z}^{v} \mathbf{W}^{v} 
    + \mathbf{Z}^{v} \mathbf{W}^{v}_s ),
    \end{split}
\label{eq:GCN}
\end{equation}
\noindent where $\sigma(\cdot)$ is an activation function used to reinforce the nonlinear aggregation capability of GCN. We replace a skip connection matrix $\mathbf{W}^{{v}}_s$ with the self-loop $\mathbf{A}^{{v}}+\mathbf{I}$ to alleviate the over-smoothing of node embeddings. Graph pooling operates on the latent representations $\mathbf{Z}^{{v}}$, while the adjacency matrix $\mathbf{A}^{{v}}$ is constructed from the original observations ${\mathbf{X}^{{v}}}$:\vspace{-8pt}
\begin{equation}
\begin{split}
    {a}^{v}_{i,j} = 
    \begin{cases}
    1, & \text{if } (\mathbf{{x}}^{v}_i \ and \ \mathbf{{x}}^{v}_j \neq NaN) \,
    \& \\
    \, &(\mathbf{{x}}^{v}_i \in \mathcal{N}_k(\mathbf{{x}}^{v}_j) \text{ or } \mathbf{{x}}^{v}_j \in \mathcal{N}_k(\mathbf{{x}}^{v}_i)), \\
    0, & \text{otherwise},
    \end{cases}
\end{split}
\label{eq:KNN}
\end{equation}
\noindent where $\mathcal{N}_k(\mathbf{{x}}_i^{{v}})$ is a set including the $k$-nearest neighbors of $\mathbf{{x}}_i^{{v}}$.
It brings neighbors closer in the embedding space without introducing missing noise into the graph structure.

The softmax classifier offers the soft assignment $\mathbf{P}^{{v}}$ for node embeddings encoded by $\mathcal{G}_v(\mathbf{Z}^{{v}};{\mathbf{A}}^{{v}})$:
\begin{equation}
\begin{split}
    \mathbf{P}^{{v}} = \text{Softmax}(\mathcal{G}_v(\mathbf{Z}^{{v}};{\mathbf{A}}^{{v}})).
\label{eq:P}
\end{split}
\end{equation}

To obtain a well-separated cluster assignment $\mathbf{P}^v$, we maximize the spectral modularity $\text{Tr}\left((\mathbf{P}^{{v}})^\top \mathbf{B}^{{v}} \mathbf{P}^{{v}}\right)$ \cite{brandes2006maximizing,kernighan1970efficient}, which captures intra-cluster connections and inter-cluster margins by mapping $\mathbf{P}^{{v}}$ to modularity $\mathbf{B}^{{v}}$; To further improve the confidence and robustness of $\mathbf{P}^{{v}}$, we apply a self-supervised signal $\mathbf{L}^{{v}}$, learned from contrastive clustering as detailed in Sec. \ref{sec:3.3}, to guide $\mathbf{P}^{{v}}$ in chasing $\mathbf{L}^{{v}}$ via self-knowledge distillation. Specifically, the following objective is designed to optimize $\mathbf{P}^{{v}}$:
\begin{equation}
\ell^{{v}}_{m}(\mathbf{P}^{{v}};\mathbf{L}^{{v}}) = -\frac{1}{2m} \text{Tr}\left((\mathbf{P}^{{v}})^\top \mathbf{B}^{{v}} \mathbf{P}^{{v}}\right) + \lambda\ \text{KL}(\mathbf{L}^{{v}} \parallel \mathbf{P}^{{v}}), \label{eq: spectral modularity loss}
\end{equation}

\noindent where Kullback-Leibler divergence $\text{KL}(\cdot)$ serves as a robust regularizer, leveraging $\lambda$ to regulate the information flow of $\mathbf{L}^{{v}} \in \mathbb{R}^{N \times K}$ to ensure stable cluster performance. We project semantic representations $\mathbf{H}^v$ learned through contrastive clustering onto their $k$-means prototypes and introduce Student’s t-distribution \cite{xie2016unsupervised} as kernel to predict high-confidence labels $\mathbf{L}^{{v}}$ with a nearly uniform distribution:
\begin{equation}
l^{{v}}_{i,k} = \frac{ \left( 1 + \frac{\|\mathbf{h}^{{v}}_i - \mathbf{c}^{{v}}_k\|^2}{\gamma} \right)^{-\frac{\gamma+1}{2}} }{ \sum_{k'} \left( 1 + \frac{\|\mathbf{h}^{{v}}_i - \mathbf{c}^{{v}}_{k'}\|^2}{\gamma} \right)^{-\frac{\gamma+1}{2}}} \in \mathbf{L}^{{v}}.
\label{eq:predict label}
\end{equation}

We conduct graph clustering on each view separately, then optimize assignments across all views concurrently:
\begin{equation}
\renewcommand{\arraystretch}{0.1}
\mathcal{L}_{gc} = \sum_{v=1}^{V}  \ell^{{v}}_{{m}}(\mathbf{P}^{{v}}, \mathbf{L}^{{v}}) .
\label{eq:gl loss}
\end{equation}

\subsection{Clustering Driven by Consensus Semantics}   
Our FreeCSL learns the semantic representations $\mathbf{H}^v$ end-to-end by jointly minimizing the reconstruction loss, contrastive clustering loss and graph clustering loss:
\begin{equation}
\renewcommand{\arraystretch}{0.1}
\mathcal{L} = \mathcal{L}_{{rec}} + \mathcal{L}_{cc} + \mathcal{L}_{gc}.
\label{eq:ovrall Loss}
\end{equation} 

Without searching for the optimal balancing weights of three loss terms, outstanding performance is easily achieved by applying $k$-means to the consensus semantic representations $\mathbf{H}$ learned via the fusion manner $\mathbb{T}(\{\mathbf{H}^{{v}}\}_{v=1}^{V})$.

The implementation for FreeCSL consists of two stages: warm-up training for the encoder-decoder, followed by fine-tuning for semantic representation learning and clustering. For details, refer to Algorithm \ref{Alg1}.

\begin{algorithm}[htp]
\SetAlgoNlRelativeSize{-1}
\SetAlgoLongEnd
\caption{FreeCSL for Learning and Clustering}
\label{Alg1}
\textbf{Input}: Complete, incomplete and pair-observed multi-view dataset
$\{\mathbf{X}^{{v}}\}_{v=1}^V$, $\{\tilde{\mathbf{X}}^{{v}}\}_{v=1}^V$, $ \{\overline{\mathbf{X}}^{{m,n}}\}_{m\neq n}^V$; networks $\{\mathcal{E}_{v},\mathcal{D}_{v},\mathcal{G}_{v}\}_{v=1}^{V}$; warm-up and fine-tuning epochs $e$, $E$. \\
\For{ $t = 1$ to $e$}{On $\{\tilde{\mathbf{X}}^{{v}}\}_{v=1}^{V}$, warming up $\{\mathcal{E}_{v},\mathcal{D}_{v}\}_{v=1}^{V}$ with Eq.\eqref{eq:rec Loss}.
}

\For{$t = 1$ to $E$}{
    Obtain latent representations $\{\mathbf{Z}^{{v}}|\mathcal{E}_{v}: \mathbf{{X}}^{{v}}  \rightarrow \mathbf{Z}^{{v}}\}_{v=1}^{V}$;\\
    Construct consensus prototypes $\mathbf{C}=\{\mathbf{c}_k\}_{k=1}^{K}$ as elaborated in Sec. \ref{sec:3.3}.\\
    Compute the reconstruction loss $\mathcal{L}_{rec}$ with Eq.\eqref{eq:rec Loss}. \\
    \tcp*[h]{Cross-view Contrastive Clustering}\\
    \For{$m,n = 1$ to $V$ $(m \neq n)$}
    {Compute the cluster probability $\mathbf{p}_i^{{m}}$,$\mathbf{p}_i^{{n}}$ with Eq.\eqref{eq: the probability of representation imputated to prototype};\\
    Solve consensus pseudo-label $\mathbf{q}_i^{{m}}$ with Eq.\eqref{eq:optimal transport};\\
    Get swapped loss $\mathbf{\ell}^{{m,n}}_{{cc}}$ with Eq.\eqref{eq: cl loss} on $\overline{\mathbf{X}}^{{m,n}}$.}
    Calculate contrastive clustering loss $\mathcal{L}_{cc}$ with Eq.\eqref{eq:total cl loss}\\
    \tcp*[h]{Within-view Graph Clustering}\\
    \For{$v = 1$ to $V$}{
    \ \ Construct adjacency matrix $\mathbf{A}^{{v}}$ with Eq.\eqref{eq:KNN}.\\
    Solve node assignment $\mathbf{P}^{{v}}$ with Eq.\eqref{eq:P}\\
    Solve pseudo-labels $\mathbf{L}^{{v}}$ with Eq.\eqref{eq:predict label};\\
    Compute KL-modularity loss $\mathcal{\ell}^{{v}}_{{m}}$ with Eq.\eqref{eq: spectral modularity loss}
    }
    Calculate graph clustering loss $\mathcal{L}_{{gc}}$ with Eq.\eqref{eq:gl loss}\\
    \tcp*[h]{Semantic Representation Learning}\\
    Calculate the overall loss $\mathcal{L}$ with Eq.\eqref{eq:ovrall Loss};\\
    Optimize $\{\mathcal{E}_{v},\mathcal{D}_{v},\mathcal{G}_{v}\}_{v=1}^{V}$ to minimize $\mathcal{L}$;\\
    Learn latent and consensus representations $ \{\mathbf{Z}^{{v}}\}_{v=1}^{V}$ and $ \mathbf{Z}$, semantic representations $\{\mathbf{H}^{{v}}\}_{v=1}^{V}$, consensus semantic representations $\mathbf{H}$, consensus prototypes $\mathbf{C}$.}
\textbf{Output}: $\hat{\mathbf{Y}} = \{{\hat{y}_i\}}^N_{i=1}$ predicted by $k$-means on $\mathbf{H}$.
\end{algorithm}

\section{Experiment}\label{sec:experiment}

\subsection{Experimental Settings}\label{experi:set}
We select four representative datasets namely Caltech-5V\cite{MFLVC}, ALOI-100\cite{du2021deep}, YoutubeFace10\cite{huang2023fast} and NoisyMNIST\cite{lin2021completer} to demonstrate our model’s performance in comparison with seven SOTA methods summarized in Table \ref{tab:methods}. To comprehensively evaluate experimental results, three metrics are adopted: accuracy (ACC), normalized mutual information
(NMI), and adjusted rand index (ARI). 
\begin{table}[!htbp]
\caption{SOTA methods categorized by the types of techniques for consistency, imputation, and alignment.}
\label{tab:methods}
\centering
\resizebox{0.49\textwidth}{!}{
\begin{tabular}{cccc}
\toprule
Competitors & Consistency & Imputation & Alignment \\
\midrule
{CPM-Nets (TPAMI'20)}       & instance-level       & mutual information interaction & \textbackslash    \\
{COMPLETER (CVPR'21)}       & instance-level       & mutual information interaction & \textbackslash     \\
{DIMVC (AAAI'22)}          & instance-level       & \textbackslash & assignment-based      \\
{SURE (TPAMI'23)}          & cluster-level        & graph structure transfer & \textbackslash  \\
{ProImp (IJCAI'23)}        & instance-level       & sample-prototype relationship inheritance & prototype-based\\
{ICMVC (AAAI'24)}          & instance-level       & graph structure transfer & assignment-based\\
{DIVIDE (AAAI'24)}         & cluster-level        & mutual information interaction & \textbackslash \\
\bottomrule
\end{tabular}}
\end{table}

\subsection{Implementation details}
Models are trained on an NVIDIA 3090 GPU using Adam optimizer in PyTorch 2.1.0. The warming up and fine-tuning learning rates are 0.0003 and 0.0005, with a batch size of 512. The hyperparameters are set as follows for different datasets: smoothness $\alpha=0.5$, temperature $\tau \in \{0.1, 0.2\}$, neighbors $\zeta=3$, and regularizer weight $\lambda \in \{0.05, 0.1, 0.2, 0.3\}$. For all datasets, 4-layer autoencoders and 2-layer GCNs with same MLP structures are used for each view. The layers of contrastive clustering and graph clustering are shared across all views via a single FC layer.

\begin{table*}[!htbp]
\centering
\caption{Performance comparison on four multi - view benchmarks. The best and second - best results are highlighted in \textcolor{red}{red} and \textcolor{blue}{blue}.}
\vspace{-5pt}
\renewcommand{\arraystretch}{0.9} 
\resizebox{1\textwidth}{!}{
\begin{tabular}{@{\hspace{10pt}}cc|ccc|ccc|ccc|ccc@{\hspace{10pt}}}
\toprule
\multirow{2}{*}{} & Missing rates & \multicolumn{3}{c|}{$r = 0.1$} & \multicolumn{3}{c|}{$r = 0.3$} & \multicolumn{3}{c|}{$r = 0.5$} & \multicolumn{3}{c}{$r = 0.7$} \\ 
\cmidrule(lr){2 - 14}
& Metrics & ACC (\%) & NMI (\%) & ARI (\%) & ACC (\%) & NMI (\%) & ARI (\%) & ACC (\%) & NMI (\%) & ARI (\%) & ACC (\%) & NMI (\%) & ARI (\%) \\
\midrule
\multirow{9}{*}{\rotatebox{90}{\textbf{Caltech - 5V}}}
  & CPM - Nets & {85.72} & 74.62 & 71.92 &\textcolor{blue}{85.43} & 72.90 & \textcolor{blue}{70.84} & 81.08 & 68.61 & 66.08 & \textcolor{blue}{79.50} & 65.52 & \textcolor{blue}{62.70} \\
 & COMPLETER & 58.68 & 67.69 & 42.98 & 65.07 & 68.75 & 47.10 & 64.46 & 66.09 & 42.80 & 69.00 & 67.46 & 51.49 \\
 & DIMVC & \textcolor{blue}{85.86} & \textcolor{blue}{80.01} & \textcolor{blue}{74.63} & 78.90 & \textcolor{blue}{78.76} & 69.55 & 80.01 & 70.08 & \textcolor{blue}{66.35} & {{69.86}} & {{60.16}} & {{48.70}} \\
 & SURE & 75.64 & 69.12 & 62.83 & 77.89 & 67.89 & 61.15 & 77.22 & 65.99 & 60.99 & 73.96 & 60.89 & 53.66 \\
 & ProImp & 66.62 & 57.66 & 47.54 & 72.17 & 61.70 & 52.49 & 65.28 & 55.77 & 46.22 & 60.36 & 49.23 & 39.63 \\
 & ICMVC & 76.86 & 73.92 & 67.37 & 76.93 & 73.39 & 66.30 & 79.29 & \textcolor{blue}{72.67} & {66.21} & 72.21 & 66.63 & 58.18 \\
 & DIVIDE & 57.29 & 47.00 & 33.23 & 68.14 & 58.93 & 44.16 & \textcolor{blue}{81.57} & 71.45 & 59.20 & 74.36 & \textcolor{blue}{68.62} & 60.14 \\
 & \textbf{Ours} & \textcolor{red}{91.57} & \textcolor{red}{85.32} & \textcolor{red}{83.14} & \textcolor{red}{88.86} & \textcolor{red}{81.26} & \textcolor{red}{78.68} & \textcolor{red}{88.36} & \textcolor{red}{80.01} & \textcolor{red}{78.02} & \textcolor{red}{83.64} & \textcolor{red}{71.79} & \textcolor{red}{68.06} \\
\midrule
\multirow{9}{*}{\rotatebox{90}{\textbf{ALOI - 100}}}
 & CPM - Nets & 63.95 & 79.14 & 51.26 & 52.30 & 70.74 & 39.79 & 42.74 & 62.46 & 28.40 & 30.52 & 54.03 & 17.18 \\
 & COMPLETER & 21.86 & 46.94 & 12.10 & 22.04 & 46.45 & 11.62 & 19.38 & 44.81 & 10.78 & 17.18 & 43.68 & 9.60 \\
 & DIMVC & 56.90 & 75.00 & 35.59 & 50.55 & 72.51 & 28.10 & 52.54 & 71.45 & 32.56 & 48.97 & 70.80 & 26.16 \\
 & SURE & 64.59 & 77.74 & 52.61 & 60.22 & 74.76 & 47.21 & \textcolor{blue}{60.65} & \textcolor{blue}{74.84} & \textcolor{blue}{47.39} & \textcolor{blue}{50.23} & \textcolor{blue}{68.91} & \textcolor{blue}{37.48} \\
 & ProImp & 26.15 & 52.94 & 31.40 & 24.52 & 53.00 & 28.18 & 27.97 & 58.10 & 33.42 & 20.33 & 50.10 & 24.28 \\
 & ICMVC & 55.48 & 74.04 & 42.91 & 48.83 & 67.46 & 36.50 & 35.42 & 56.98 & 24.19 & 26.84 & 47.43 & 16.63 \\
 & DIVIDE & \textcolor{blue}{76.01} & \textcolor{blue}{88.59} & \textcolor{blue}{70.59} & \textcolor{blue}{61.90} & \textcolor{blue}{80.53} & \textcolor{blue}{53.29} & 52.56 & 73.17 & 40.57 & 42.66 & 67.82 & 30.93 \\
 & \textbf{Ours} & {\textcolor{red}{91.13}} & {\textcolor{red}{95.22}} & {\textcolor{red}{87.59}} & {\textcolor{red}{88.28}} & {\textcolor{red}{93.34}} & {\textcolor{red}{83.95}} & {\textcolor{red}{84.56}} & {\textcolor{red}{90.53}} & {\textcolor{red}{78.90}} & {\textcolor{red}{76.44}} & {\textcolor{red}{85.00}} & {\textcolor{red}{67.68}} \\
\midrule
\multirow{9}{*}{\rotatebox{90}{\textbf{YouTubeFace10}}}
 & CPM - Nets & 75.48 & 78.56 & 66.50 & 67.50 & 75.82 & 61.15 & 66.60 & 75.19 & 62.01 & 67.32 & 74.10 & 62.54 \\
 & COMPLETER & 45.72 & 52.04 & 21.03 & 50.24 & 52.81 & 27.77 & 36.41 & 35.24 & 15.18 & 40.98 & 39.92 & 20.82 \\
 & DIMVC & \textcolor{blue}{81.77} & \textcolor{blue}{81.32} & \textcolor{blue}{74.59} & \textcolor{blue}{71.96} & \textcolor{blue}{76.42} & \textcolor{blue}{62.29} & \textcolor{blue}{73.87} & \textcolor{blue}{75.44} & \textcolor{blue}{63.97} & \textcolor{blue}{74.94} & \textcolor{blue}{79.34} & \textcolor{blue}{68.96} \\
 & SURE & 26.65 & 23.60 & 11.41 & 26.43 & 22.40 & 10.55 & 31.84 & 28.77 & 15.17 & 29.37 & 27.48 & 12.85 \\
 & ProImp & 48.97 & 53.20 & 33.83 & 43.30 & 49.04 & 26.53 & 39.67 & 50.18 & 26.68 & 49.52 & 51.47 & 29.12 \\
 & ICMVC & 68.90 & 74.17 & 60.06 & 62.76 & 65.44 & 53.03 & 44.56 & 52.51 & 32.51 & 27.44 & 27.09 & 15.33 \\
 & DIVC & 31.85 & 29.64 & 11.35 & 35.86 & 30.81 & 12.31 & 32.74 & 30.82 & 12.74 & 35.17 & 33.47 & 15.93 \\
 & \textbf{Ours} & {\textcolor{red}{82.93}} & {\textcolor{red}{83.55}} & {\textcolor{red}{74.76}} & {\textcolor{red}{80.77}} & {\textcolor{red}{81.46}} & {\textcolor{red}{71.62}} & {\textcolor{red}{80.19}} & {\textcolor{red}{81.07}} & {\textcolor{red}{71.37}} & {\textcolor{red}{76.62}} & {\textcolor{red}{81.31}} & {\textcolor{red}{73.22}} \\
\midrule
\multirow{9}{*}{\rotatebox{90}{\textbf{NoisyMNIST}}}
 & CPM - Nets & 48.64 & 44.08 & 32.00 & 42.88 & 37.01 & 26.92 & 50.52 & 43.54 & 33.56 & 46.46 & 37.18 & 27.04 \\
 & COMPLETER & 76.55 & 87.95 & 77.18 & 78.16 & 86.11 & 76.83 & 64.79 & 65.77 & 54.68 & 41.15 & 40.60 & 27.58 \\
 & DIMVC & 55.88 & 54.78 & 41.29 & 56.77 & 58.29 & 43.51 & 62.28 & 62.66 & 48.01 & 65.94 & 67.43 & 54.58 \\
 & SURE & 98.61 & 95.89 & 96.96 & 97.11 & \textcolor{blue}{93.95} & \textcolor{blue}{91.64} & \textcolor{blue}{94.85} & \textcolor{blue}{89.06} & \textcolor{blue}{87.89} & \textcolor{blue}{84.00} & \textcolor{blue}{78.25} & \textcolor{blue}{76.02} \\
 & ProImp & 98.30 & 95.31 & 96.35 & 93.29 & 86.05 & 85.38 & 78.16 & 69.45 & 64.37 & 50.04 & 41.33 & 33.34 \\
 & ICMVC & \textcolor{blue}{98.78} & \textcolor{blue}{96.36} & \textcolor{blue}{97.35} & \textcolor{red}{97.75} & 93.71 & 95.11 & 81.64 & 79.24 & 75.47 & 53.91 & 50.91 & 44.56 \\
 & DIVIDE & 94.72 & 91.44 & 89.43 & 95.85 & 89.71 & 91.06 & 57.09 & 57.29 & 46.28 & 28.57 & 25.61 & 11.14 \\
 & \textbf{Ours} & {\textcolor{red}{99.13}} & {\textcolor{red}{97.23}} & {\textcolor{red}{98.10}} & \textcolor{blue}{{97.68}} & {\textcolor{red}{93.94}} & {\textcolor{red}{94.94}} & {\textcolor{red}{96.04}} & {\textcolor{red}{89.81}} & {\textcolor{red}{91.48}} & {\textcolor{red}{92.19}} & {\textcolor{red}{82.50}} & {\textcolor{red}{83.56}} \\
\bottomrule
\end{tabular}}
\label{tab:performance}
\end{table*}

\subsection{Competitiveness of FreeCSL}\label{sec:experi:results}
FreeCSL is compared with seven SOTA methods across three metrics in Table \ref{tab:performance}. The results indicate that FreeCSL excellently handles challenges of high missing rates, multiple clusters, and large-scale issues in IMVC:
\begin{itemize}
\item From the perspective of effectiveness, FreeCSL surpasses most SOTA models, particularly on challenging datasets like ALOI-100 (100 clusters) and YouTubeFace10 (30,000 samples).It achieves significant improvements in ACC, with gains of 15.12\%, 26.38\%, 23.91\%, 26.21\% for ALOI-100, and 1.16\%, 8.81\%, 6.32\%, 1.68\% for YouTubeFace10, at missing rate $r=0.1, 0.3, 0.5, 0.7$.

\item From the perspective of robustness, Fig. \ref{fig:performance} (visualized from Table \ref{tab:performance}) shows our model's stability as $r$ increases. FreeCSL declines gradually, unlike other models with sharp ACC drops, highlighting its robustness in IMVC task. On the small dataset Caltech-5V with $r=0.7$, its Acc remains at 84.56\%, while on the larger dataset NoisyMNIST, also achieves 92.19\%.
\end{itemize} 

The above phenomenon can be explained as follows: As $r$ increases, paired observations become scarce. Methods like instance-level consistency learning or cross-view imputation and alignment (\emph{e.g.,} CPM-Net, COMPLETER, ICMVC), constrained by their dependence on consistency information, will introduce biases such as false negatives and semantic inconsistencies, resulting in degraded performance. For the complex task ALOI-100, cluster-level methods like SURE and DIVIDE surpass instance-level by exploiting cross-view cluster consistency. But, their reliance on cross-view graphs or mutual information for imputation, while neglecting within-view cluster affiliations, restricts performance. ProImp acknowledges the need to integrate within-view and cross-view information for semantically consistent imputation. However, its prototype alignment accumulates errors, particularly with a large number of clusters, achieving only 20\% ACC on ALOI-100.

The superiority of FreeCSL stems from the synergistic effect of the CSL and CSE modules, which eliminate semantic gaps and ensure consistent assignments for all observations in a shared semantic space.

\subsection{Understanding FreeCSL}\label{experi:visual}
\begin{table*}[!htbp]
\centering
\caption{Ablation study on Caltech-5V and ALOI-100. $\checkmark$ denotes FreeCSL with
the component and the best results are highlighted in \textcolor{red}{red}.}
\vspace{-6pt} 
\small 
\renewcommand{\arraystretch}{0.9} 
\resizebox{1\textwidth}{!}{
\begin{tabular}{@{\hspace{8pt}}cccc|ccc|ccc|ccc|ccc@{\hspace{8pt}}} 
\toprule
\noalign{\vspace{-2pt}} 
\multirow{2}{*}{} & \multicolumn{3}{c|}{Components} & \multicolumn{3}{c|}{$r=0.1$} & \multicolumn{3}{c|}{$r=0.3$} & \multicolumn{3}{c|}{$r=0.5$} & \multicolumn{3}{c}{$r=0.7$} \\
\noalign{\vspace{-2pt}} \cmidrule(lr){2-16}
\noalign{\vspace{-2pt}}
& $\mathcal{L}_{rec}$ & $\mathcal{L}_{cc}$ & $\mathcal{L}_{gc}$ & ACC (\%) & NMI (\%) & ARI (\%) & ACC (\%) & NMI (\%) & ARI (\%) & ACC (\%) & NMI (\%) & ARI (\%) & ACC (\%) & NMI (\%) & ARI (\%) \\
\noalign{\vspace{-2pt}}\\\midrule

\multirow{4}{*}{\rotatebox{90}{\textbf{Caltech-5V}}}
 & \checkmark &   &   & 75.00 & 65.64 & 58.89 & 69.00 & 57.84 & 51.49 & 52.21 & 44.18 & 36.71 & 51.93 & 43.42 & 35.75 \\
 & \checkmark & \checkmark &   & 85.50 & 78.07 & 72.28 & 82.00 & 74.53 & 67.67 & 85.36 & 74.32 & 71.10 & 79.14 & 67.81 & 62.78 \\
 & \checkmark &   & \checkmark & 81.07 & 70.45 & 66.13 & 80.86 & 69.61 & 67.49 & 74.14 & 59.39 & 55.90 & 70.71 & 53.25 & 49.95 \\
 & \checkmark & \checkmark & \checkmark & \textcolor{red}{91.57} & \textcolor{red}{85.32} & \textcolor{red}{83.14} & \textcolor{red}{88.86} & \textcolor{red}{81.26} & \textcolor{red}{78.68} & \textcolor{red}{88.36} & \textcolor{red}{80.01} & \textcolor{red}{78.02} & \textcolor{red}{83.64} & \textcolor{red}{71.79} & \textcolor{red}{68.06} \\
\midrule
\multirow{4}{*}{\rotatebox{90}{\textbf{ALOI-100}}}
 & \checkmark &   &   & 63.81 & 77.92 & 49.38 & 44.69 & 65.64 & 31.11 & 32.20 & 55.83 & 18.94 & 23.78 & 48.90 & 11.31 \\
 & \checkmark & \checkmark &   & 87.69 & 93.64 & 84.12 & 85.25 & 91.02 & 79.45 & 80.23 & 88.04 & 74.15 & 75.29 & 83.38 & 65.72 \\
 & \checkmark &   & \checkmark & 81.06 & 85.96 & 71.21 & 64.13 & 75.87 & 51.49 & 47.52 & 65.59 & 34.29 & 32.73 & 56.35 & 19.23 \\
 & \checkmark & \checkmark & \checkmark & \textcolor{red}{91.13} & \textcolor{red}{95.22} & \textcolor{red}{87.59} & \textcolor{red}{88.28} & \textcolor{red}{93.34} & \textcolor{red}{83.95} & \textcolor{red}{84.56} & \textcolor{red}{90.53} & \textcolor{red}{78.90} & \textcolor{red}{76.44} & \textcolor{red}{85.00} & \textcolor{red}{67.68} \\
\bottomrule
\end{tabular}}
\label{tab:ablation study}
\end{table*}

\textbf{Ablation Study.} 
As shown in Table \ref{tab:ablation study}, ablation studies on FreeCSL's three components reveal that the CSL module contributes the most. On ALOI-100, it demonstrates exceptional performance and strong stability, with ACC dropping only 12.4\% as $r$ rises from $0.1$ to $0.7$. This is attributed to CSL's establishment of a shared semantic space via prototype-based semantic contrast, reducing semantic gaps within clusters. The CSE module also plays a positive role in discovering tighter cluster structures by optimizing modularity. Thanks to its enhanced semantics, both the REC and CSL modules achieve notable improvements. 

\begin{table*}[!htbp]
\caption{Imputation- and alignment-free study on Caltech-5V and ALOI-100. ILR and ISR are filled with K-NN imputation via cross-view graph for latent representations and semantic representations $\mathbf{Z}^{(v)}$, $\mathbf{H}^{(v)}$. The best results are highlighted in \textcolor{red}{red}.}
\label{tab:Imputation study}
\vspace{-5pt}
\renewcommand{\arraystretch}{0.95} 
\centering
\resizebox{0.99\textwidth}{!}{
\begin{tabular}{@{\hspace{8pt}}cc|ccc|ccc|ccc|ccc@{\hspace{8pt}}} 
\toprule
\multirow{2}{*}{} & Missing rates & \multicolumn{3}{c|}{$r=0.1$} & \multicolumn{3}{c|}{$r=0.3$} & \multicolumn{3}{c|}{$r=0.5$} & \multicolumn{3}{c}{$r=0.7$} \\
\cmidrule(lr){2-14}
& Metrics & ACC (\%) & NMI (\%) & ARI (\%) & ACC (\%) & NMI (\%) & ARI (\%) & ACC (\%) & NMI (\%) & ARI (\%) & ACC (\%) & NMI (\%) & ARI (\%) \\
\midrule

\multirow{3}{*}{\rotatebox{90}{\scriptsize\textbf{Caltech-5V}}}
& ILR & \textcolor{red}{91.71} & \textcolor{red}{85.76} & \textcolor{red}{83.44} & 89.14 & 81.69 & 79.07 & {88.79} & \textcolor{red}{80.79} & 78.37 & 77.86 & 65.64 & 55.12 \\
& ISR & 91.64 & 85.55 & 83.30 & \textcolor{red}{89.43} & \textcolor{red}{82.26} & \textcolor{red}{79.62} & \textcolor{red}{88.86} & 80.78 & \textcolor{red}{78.74} & 81.29 & 66.68 & 63.59 \\
& FreeCSL & 91.57 & 85.32 & 83.14 & 88.86 & 81.26 & 78.68 & 88.36 & 80.01 & 78.02 & \textcolor{red}{83.64} & \textcolor{red}{71.79} & \textcolor{red}{68.06} \\

\midrule
\multirow{3}{*}{\rotatebox{90}{\scriptsize\textbf{ALOI-100}}}
& ILR & 58.92 & 79.67 & 41.54 & 53.46 & 75.88 & 36.89 & 45.83 & 69.47 & 27.41 & 38.08 & 58.96 & 21.35 \\
& ISR & 90.16 & \textcolor{red}{95.38} & \textcolor{red}{87.86} & \textcolor{red}{88.91} & \textcolor{red}{94.08} & \textcolor{red}{85.48} & 82.53 & 88.73 & 71.54 & 65.36 & 77.85 & 46.06 \\
& FreeCSL & \textcolor{red}{91.13} & 95.22 & 87.59 & 88.28 & 93.34 & 83.95 & \textcolor{red}{84.56} & \textcolor{red}{90.53} & \textcolor{red}{78.90} & \textcolor{red}{76.44} & \textcolor{red}{85.00} & \textcolor{red}{67.68} \\
\bottomrule
\end{tabular}}
\end{table*}

\begin{figure}[t]  
    \centering  
    \begin{subfigure}[t]{0.45\linewidth}  
        \includegraphics[width=\linewidth]{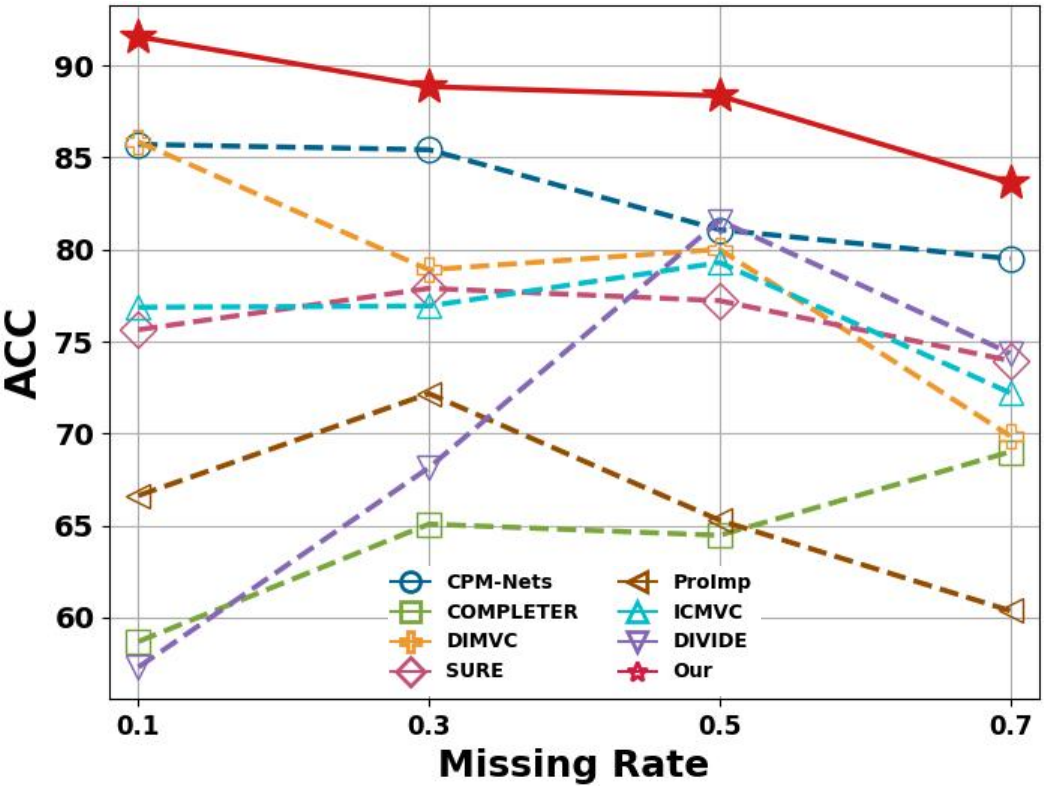}  
        \caption{Caltech-5V}  
    \end{subfigure}  
    \hspace{0.02\linewidth}
    \begin{subfigure}[t]{0.45\linewidth}  
        \includegraphics[width=\linewidth]{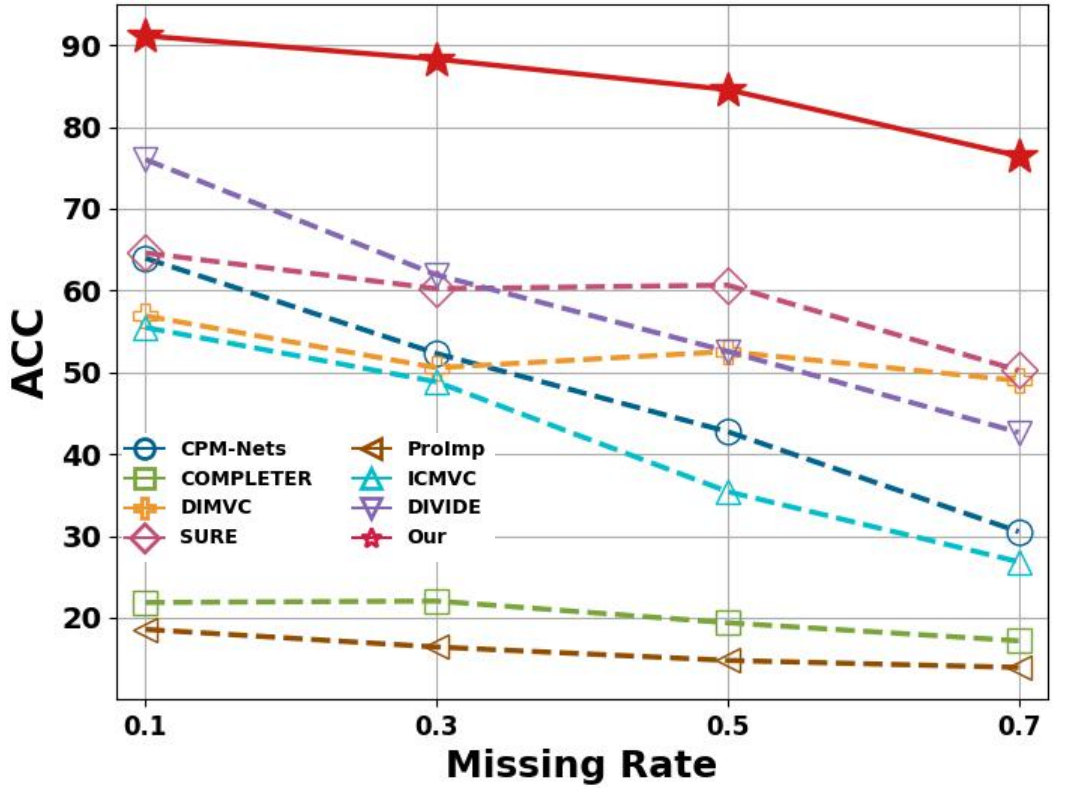}  
        \caption{ALOI-100}  
    \end{subfigure}  

    \vspace{0.1em} 

    \begin{subfigure}[t]{0.45\linewidth}  
        \includegraphics[width=\linewidth]{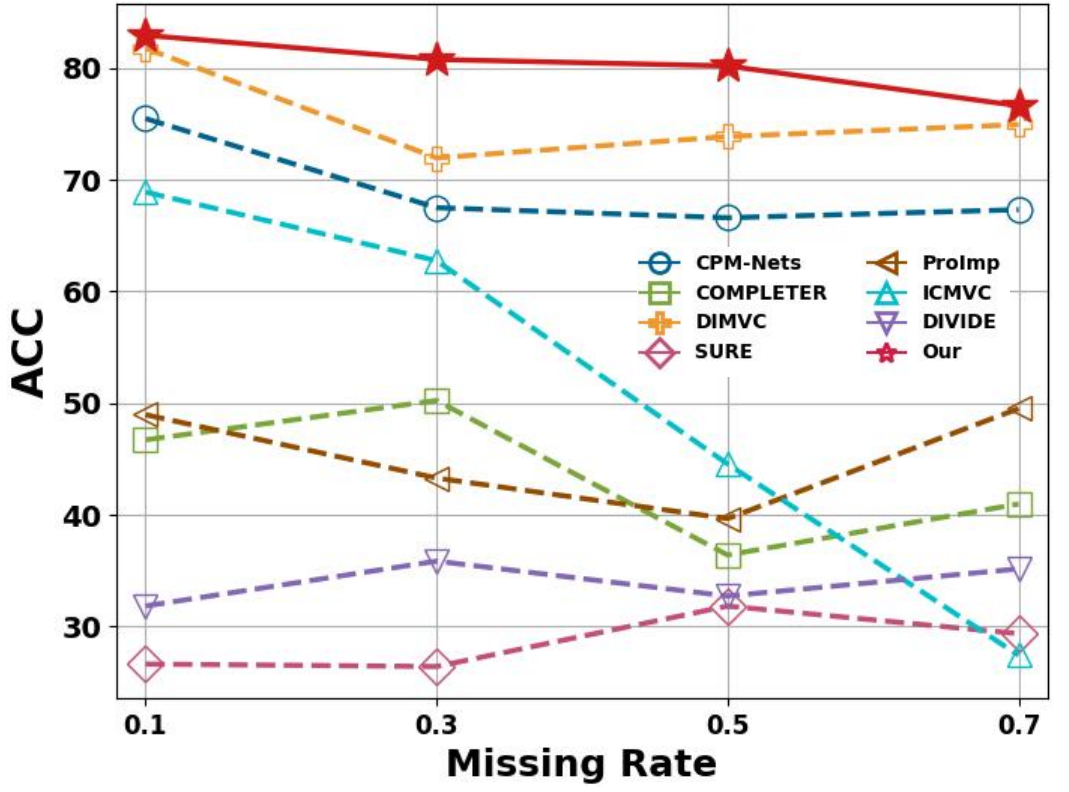} 
        \caption{YoutubeFace10}  
    \end{subfigure}  
    \hspace{0.02\linewidth}
    \begin{subfigure}[t]{0.45\linewidth}  
    \includegraphics[width=\linewidth]{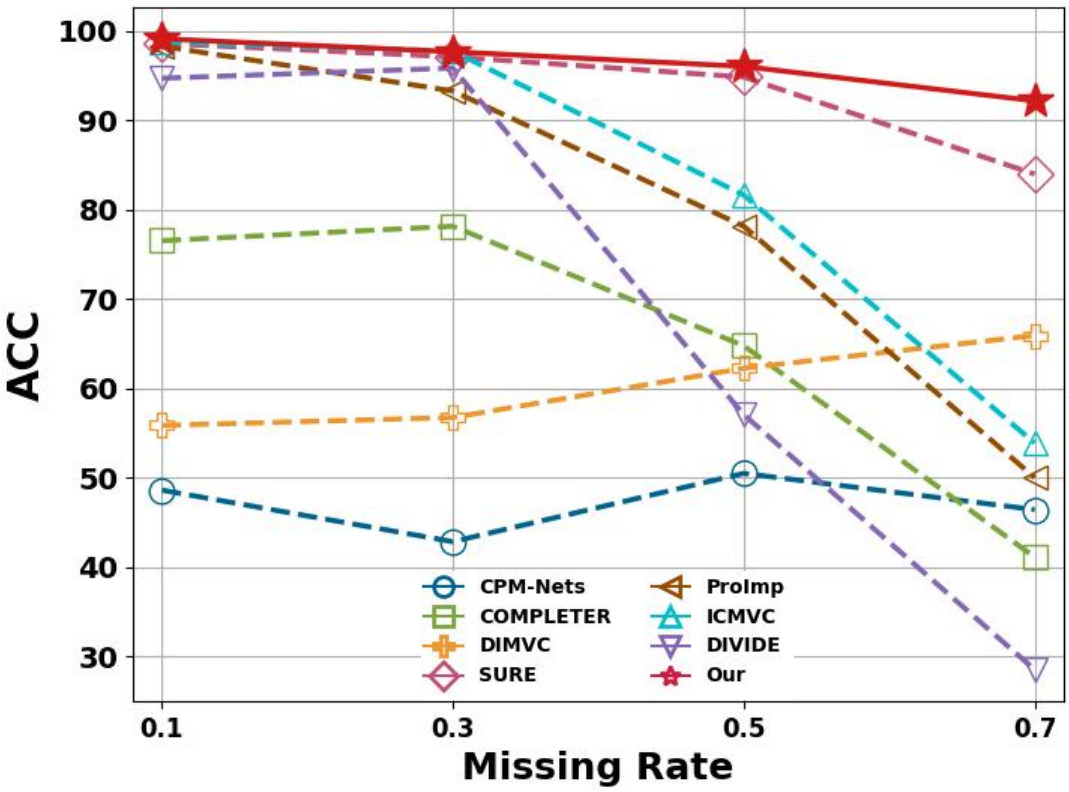}  
        \caption{NoisyMNIST}  
    \end{subfigure}  
    \vspace{-3pt}
    \caption{Visualization for Table \ref{tab:performance} based on metric ACC.}
    \vspace{-12pt}
\label{fig:performance}
\end{figure} 

\begin{figure}[htbp]
    \centering
    \begin{minipage}{0.5\textwidth}
        \begin{subfigure}{0.189\textwidth}
            \centering
            \includegraphics[width=\linewidth]{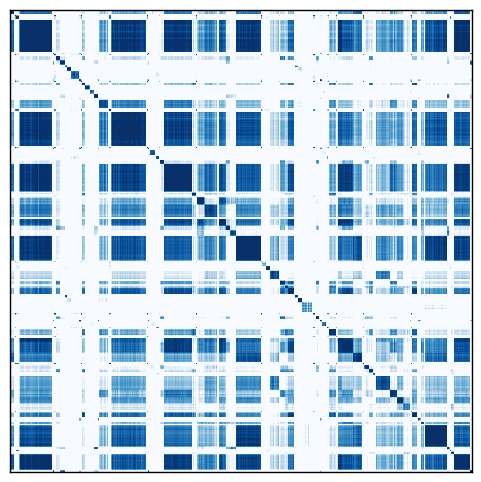}
            \caption{$\mathbf{Z}^{(1)}$}
        \end{subfigure}%
        \hfill
        \begin{subfigure}{0.189\textwidth}
            \centering
            \includegraphics[width=\linewidth]{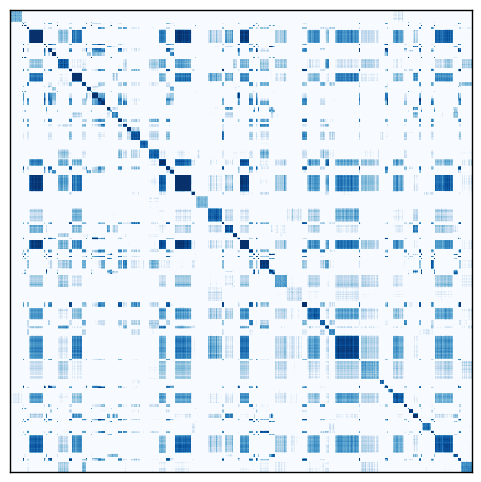}
            \caption{$\mathbf{Z}^{(2)}$}
        \end{subfigure}%
        \hfill
        \begin{subfigure}{0.189\textwidth}
            \centering
            \includegraphics[width=\linewidth]{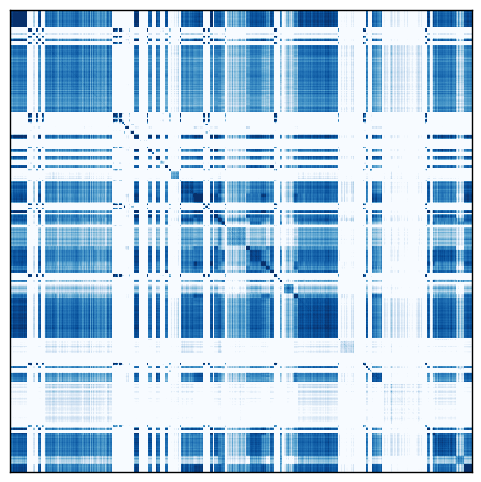}
            \caption{$\mathbf{Z}^{(3)}$}
        \end{subfigure}%
        \hfill
        \begin{subfigure}{0.189\textwidth}
            \centering
            \includegraphics[width=\linewidth]{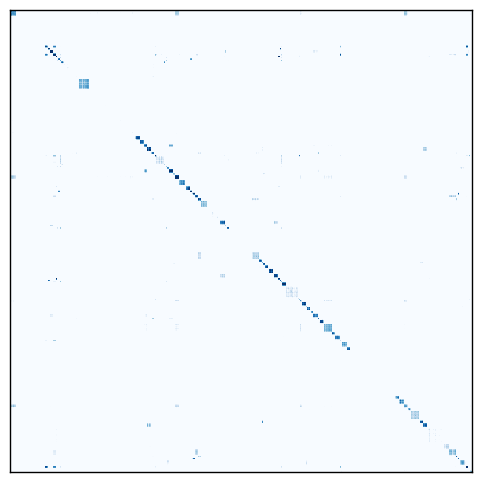}
            \caption{$\mathbf{Z}^{(4)}$}
        \end{subfigure}%
        \hfill
        \begin{subfigure}{0.19\textwidth}
            \centering
            \includegraphics[width=\linewidth]{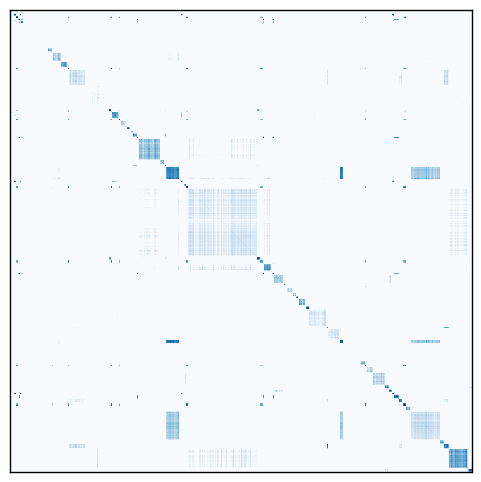}
            \caption{$\mathbf{Z}$}
        \end{subfigure}
    \end{minipage}
    \caption{Similarity matrices of $ \{\mathbf{Z}^{{v}}\}_{v=1}^{4}$, $\mathbf{Z}$ without consensus semantic learning on ALOI-100 with $r=0.5$.}
    \label{fig:one-row}
    \vspace{-10pt}
\end{figure}
\vspace{-2pt}
\begin{figure}[htbp]
    \centering
    \begin{minipage}{0.5\textwidth}
        \begin{subfigure}{0.19\textwidth}
            \centering
            \includegraphics[width=\linewidth]{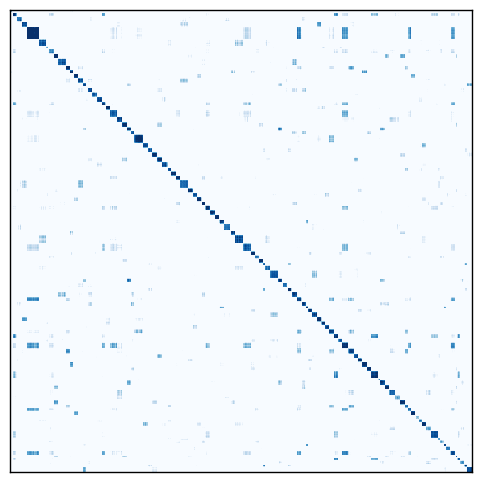}
            \caption{$\mathbf{H}^{(1)}$}
        \end{subfigure}%
        \hfill
        \begin{subfigure}{0.19\textwidth}
            \centering
            \includegraphics[width=\linewidth]{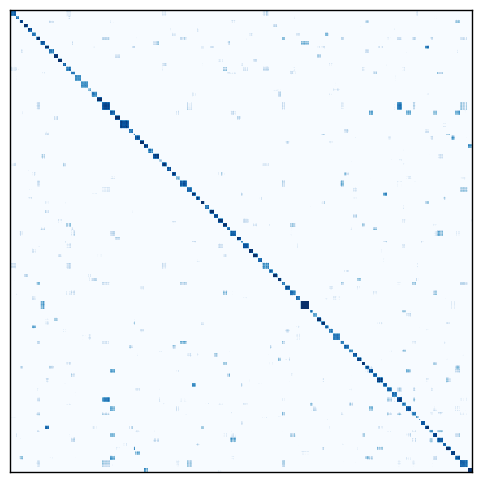}
            \caption{$\mathbf{H}^{(2)}$}
        \end{subfigure}%
        \hfill
        \begin{subfigure}{0.19\textwidth}
            \centering
            \includegraphics[width=\linewidth]{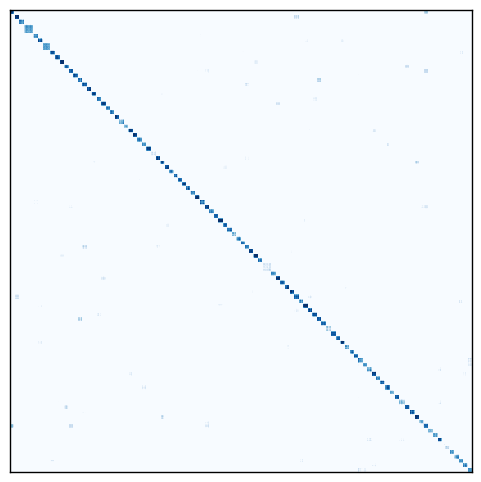}
            \caption{$\mathbf{H}^{(3)}$}
        \end{subfigure}%
        \hfill
        \begin{subfigure}{0.19\textwidth}
            \centering
            \includegraphics[width=\linewidth]{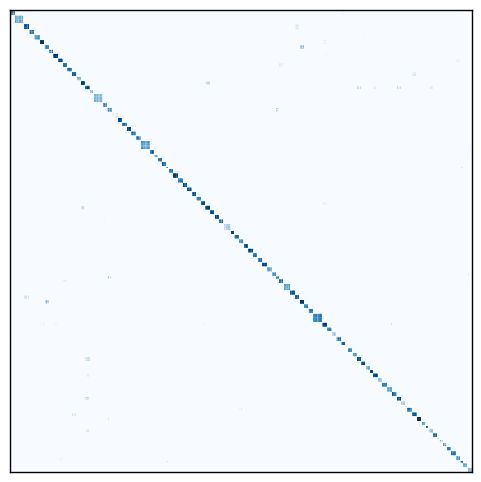}
            \caption{$\mathbf{H}^{(4)}$}
        \end{subfigure}%
        \hfill
        \begin{subfigure}{0.19\textwidth}
            \centering
            \includegraphics[width=\linewidth]{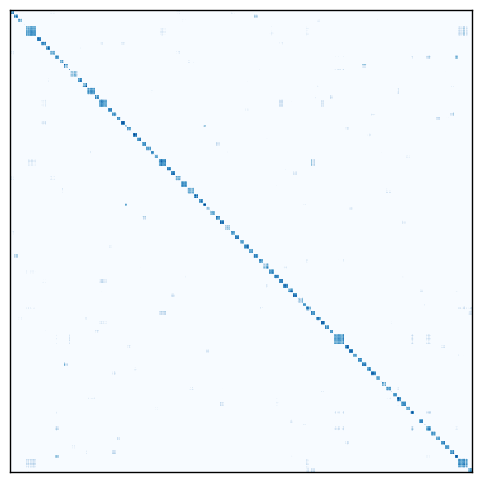}
            \caption{$\mathbf{H}$}
        \end{subfigure}
    \end{minipage}
    \vspace{-5pt}
    \caption{Similarity matrices of $ \{\mathbf{H}^{{v}}\}_{v=1}^{4}$, $\mathbf{H}$ with consensus semantic learning on ALOI-100 with $r=0.5$.}
    \label{fig:two-row}
\end{figure}

\textbf{Imputation- and Alignment-free CSL.} To verify FreeCSL can reduce semantic gaps and capture semantic relationships without requiring imputation or alignment, we set up two imputation experiments: one imputation for latent representations $\mathbf{Z}^{{v}}$ without consensus semantic learning named ILR, and the second for semantic representations $ \mathbf{H}^{{v}}$ learned from consensus semantic learning named ISR. They transfer complete $k$-nn graph structures from other views to incomplete views and utilize corresponding $k$ neighbors for imputation. The control groups perform $k$-means on the sum of all view matrices to predict cluster labels. 
Table \ref{tab:Imputation study} shows FreeCSL's robustness advantage at high $r$ on Caltech-5V, with notable performance disparity on ALOI-100. ILR, lacking consensus semantic learning, struggles with multi-cluster tasks due to cross-view semantic gaps.
ISR avoids semantic confusion via consensus semantic learning but underperforms FreeCSL due to biased imputation as consistency information decreases at higher $r$. FreeCSL, by contrast, integrates consistency and complementary information in consensus semantic representations for confident decisions.

To further illustrate how consensus semantic learning ``free up'' imputation and alignment, we construct cosine similarity matrices on ALOI-100 for view-specific latent representations $ \{\mathbf{Z}^{{v}}\}_{v=1}^{4}$ and their consensus $\mathbf{Z}$, as well as view-specific semantic representations $ \{\mathbf{H}^{{v}}\}_{v=1}^{4}$ and their consensus $\mathbf{H}$ in Fig. \ref{fig:two-row}, then analyze their information entropy. The similarity matrices of $ \{\mathbf{Z}^{{v}}\}_{v=1}^{4}$ are chaotic, with high uncertainty in intra- and inter- cluster relationships. The fusion manner $\mathbb{T}(\cdot)$ with Eq. (\ref{eq:consensus representations}) alleviates high entropy by incorporating view-specific complementary information. Compared to Fig. \ref{fig:one-row}, the similarity matrices of $ \{\mathbf{H}^{{v}}\}_{v=1}^{4}$ shows a clear block structure, with high intra-cluster similarity and distinct inter-cluster differences. This confirms $\mathbf{H}$, enhanced by view-specific information, reduces cross-view semantic gaps, while consensus semantic learning, capturing cluster semantic relationship, promotes high-confidence assignments as stated in Theorem \ref{thm:Theorem}.

\subsection{Analysis on FreeCSL}\label{experi:ablation}

\noindent\textbf{Convergence and Robustness Analysis.} 
In Fig. \ref{fig:Convergence}, we plot metrics and losses over training iterations, with error bands from 5 random Caltech-5V experiments to assess robustness. Both converge to stable values with minimal fluctuations and reach stability simultaneously. The gradual decrease in reconstruction loss indicates that the CSL and CSE modules are reasonable and beneficial, as they don't introduce significant discrepancies between semantic and latent representations, further underscoring the benefits of pre-training. Thanks to the harmonious collaboration of the three modules, our FreeCSL achieved satisfactory results within 30 iterations at minimal computational cost.

\noindent\textbf{Parameter Sensitivity Analysis.} 
Two hyperparameters in FreeCSL warrant investigation, namely, graph neighbors $\zeta$ in Eq. (\ref{eq:KNN}) and regularizer coefficient $\lambda$ in Eq. (\ref{eq: spectral modularity loss}). We expanded $\zeta$ and $\lambda$ to the range of 0.05 to 0.5 and 3 to 32 respectively. In Fig. \ref{fig:Sensitivity}, our model is insensitive to $\zeta$ but more affected by variations in $\lambda$. This is because the robust regularizer leverages semantic knowledge learned from the CSL module to guide the CSE model toward faster and more stable learning. A small $\lambda$ cannot provide effective consistency constraints while a large one makes CSL to dominate the learn direction, hindering CSE from exploring cluster structure information. Therefore, we set $\lambda \in [0.05, 0.2]$ to balance cooperative relationship between CSE and CSL. Without sacrificing ACC and NMI, we prefer to set the minimum $\zeta=3$ to reduce computational complexity. 
\begin{figure}[t!]
\includegraphics[width=8cm, height=4cm]{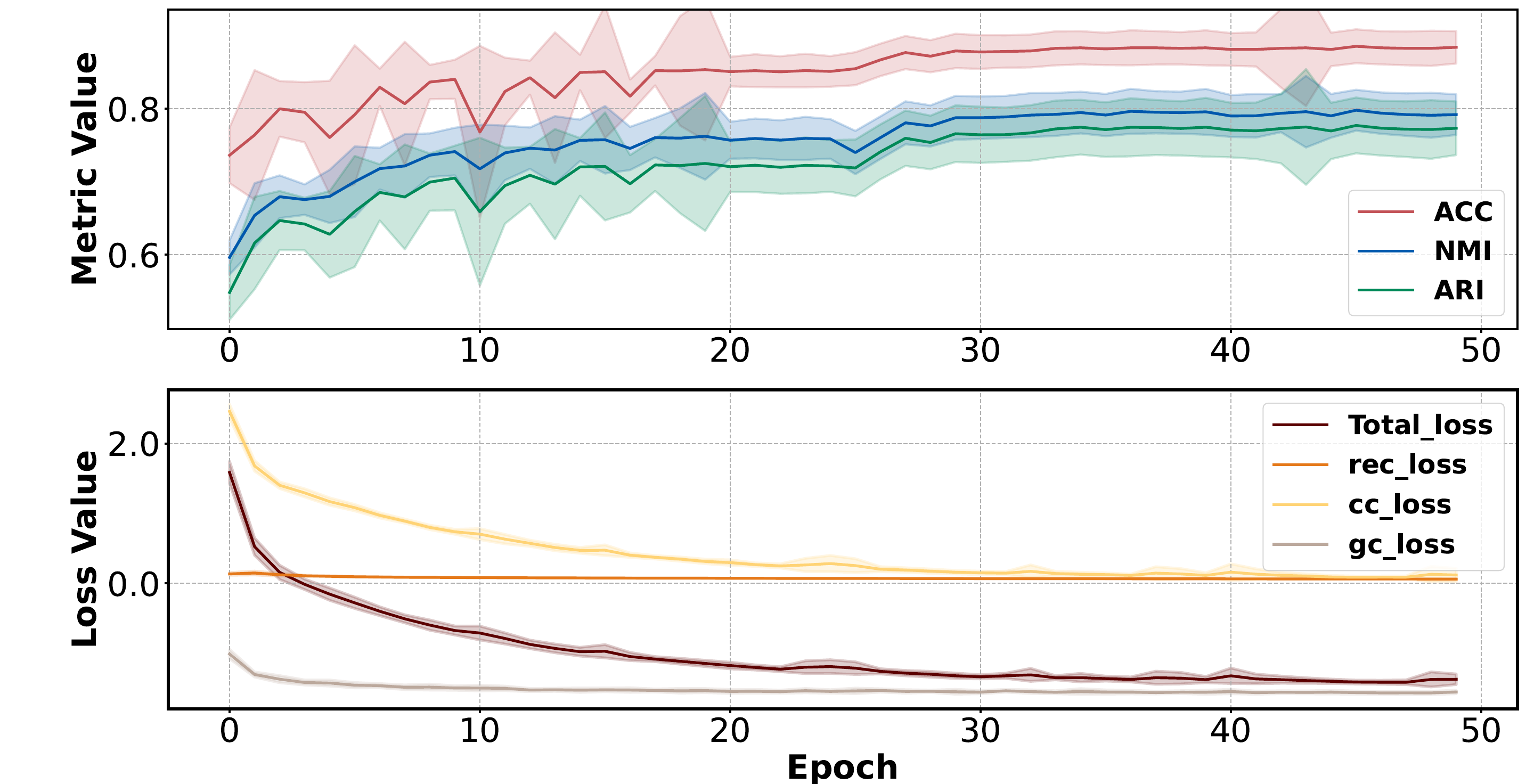}
\vspace{-5pt}
\caption{Convergence anlysis on Caltech-5V with $r=$ 0.5.}
\label{fig:Convergence}
\end{figure}

\begin{figure}[t]  
    \centering  
    \begin{subfigure}[t]{0.45\linewidth}  
        \includegraphics[width=\linewidth]{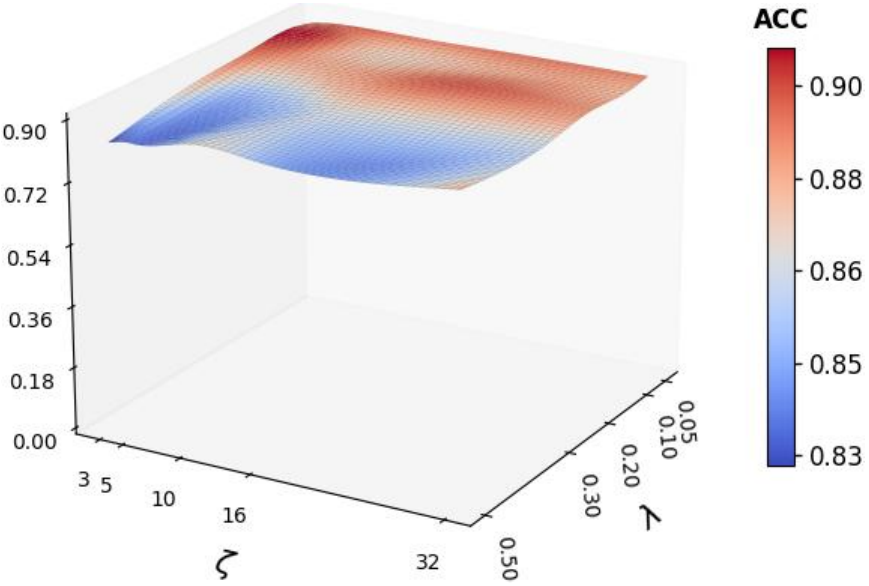}  
        \caption{ Caltech-5V}  
    \end{subfigure}  
   \hspace{0.01\linewidth}
    \begin{subfigure}[t]{0.45\linewidth}  
        \includegraphics[width=\linewidth]{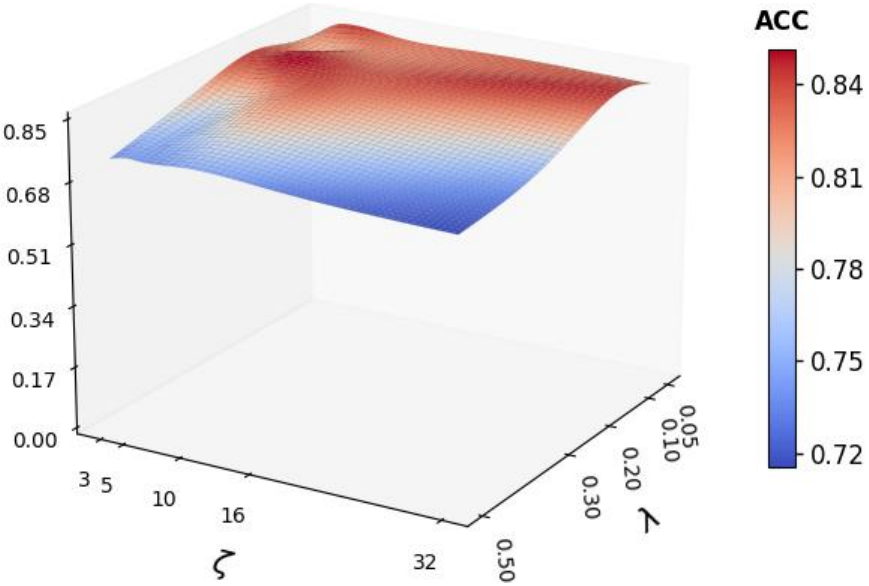}  
        \caption{ALOI-100}  
    \end{subfigure}  

    \vspace{-5pt}
    \caption{Parameter analyses for $\zeta$ and $\lambda$ with $r=0.5$.}
\label{fig:Sensitivity}
\end{figure}

\subsection{Visualization of Consensus Semantic Clusters}\label{experi:visual}
To further examine the quality of cluster structures formed by consensus semantic representations, t-SNE visualization based on true labels are plotted in Fig. \ref{fig:t-SNE} and clearly show minimal misclassification and clusters with strong intra-cluster cohesion while distinct inter-cluster separation. This indicates FreeCSL nearly recovers true cluster structures by learning consensus semantic information from all data.

\begin{figure}[t]  
    \centering  
    \begin{subfigure}[t]{0.45\linewidth}  
        \includegraphics[width=\linewidth]{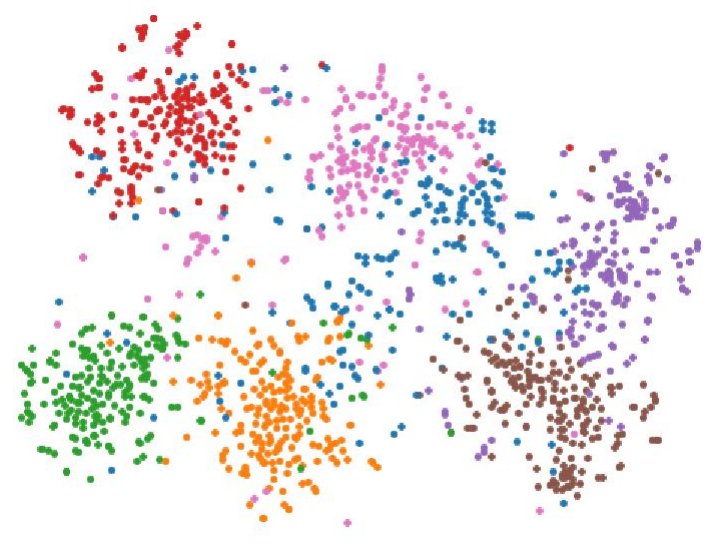}  
        \caption{Pre-training (NMI=44.18\%)}  
    \end{subfigure}  
    \hfill  
    \begin{subfigure}[t]{0.45\linewidth}  
        \includegraphics[width=\linewidth]{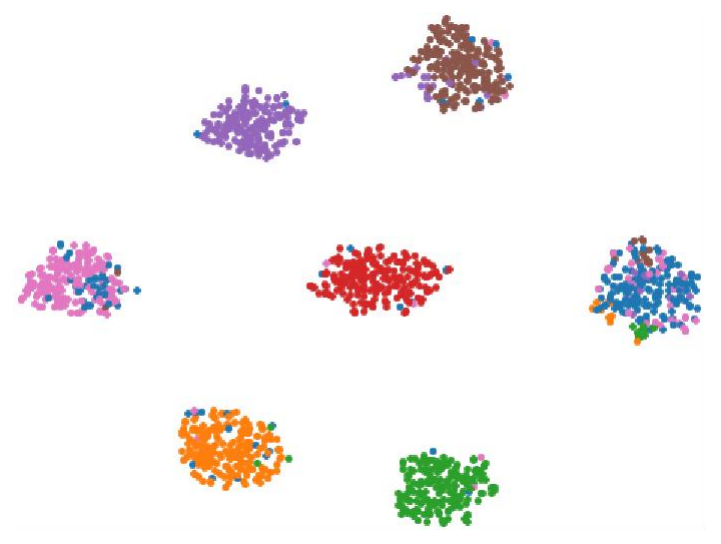}  
        \caption{Training (NMI=79.61\%)}  
    \end{subfigure}  

    \vspace{0.5em} 

    \begin{subfigure}[t]{0.45\linewidth}  
        \includegraphics[width=\linewidth]{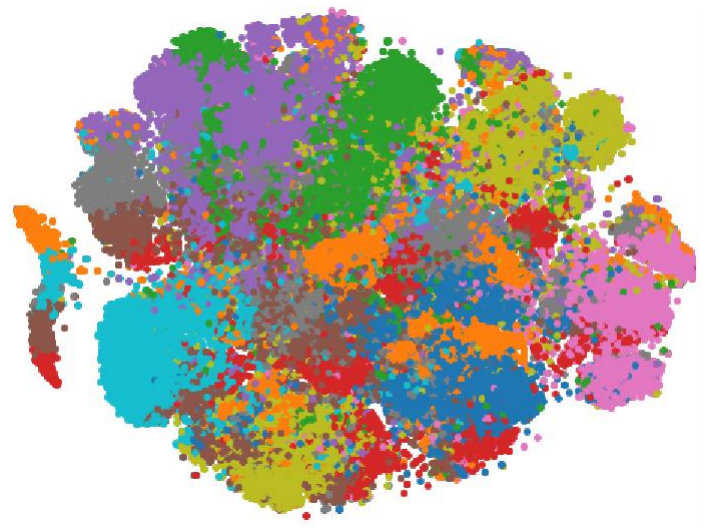}  
        \caption{Pre-training (NMI=33.74\%)}  
    \end{subfigure}  
    \hfill  
    \begin{subfigure}[t]{0.45\linewidth}  
        \includegraphics[width=\linewidth]{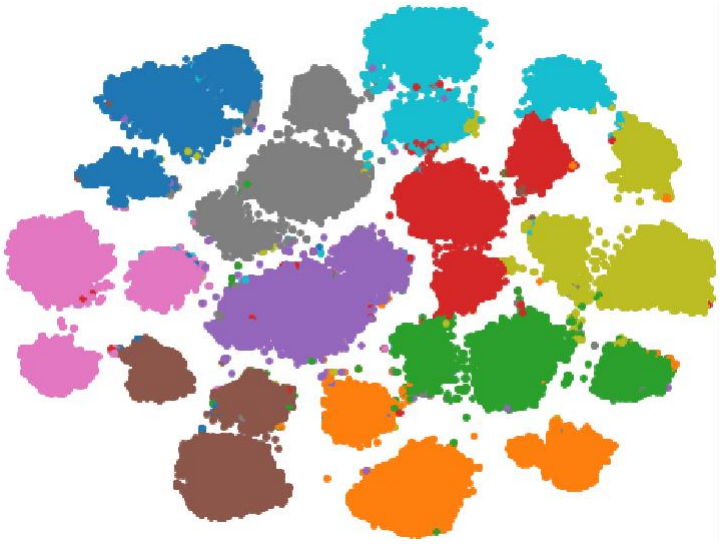}  
        \caption{Training (NMI=89.81\%)}  
    \end{subfigure}  
    \vspace{-3pt}
    \caption{Visualization for Caltech-5V and NoisyMNIST.}
    \vspace{-10pt}
\label{fig:t-SNE}
\end{figure} 
\section{Conclusion}\label{sec:conclusion}
In this paper, we propose FreeCSL, a novel semantic learning paradigm free from imputation and alignment compared to existing consistency learning. We design prototype-based contrastive clustering to discover a shared semantic space, where observations converge toward their respective semantic prototype and are encoded as consensus semantics representations for clustering. Furthermore, we employ modularity-inspired graph clustering to enrich semantic representation with view-specific cluster information. The effective synergy of consensus semantic learning and cluster semantic enhancement makes FreeCSL excel in most complex IMVC tasks.

\section*{Acknowledgments}
This work is supported by the National Key R\&D Program of China under Grant No.2022ZD0209103, the National Natural Science Foundation of China (project no. 62325604, 62276271, 62406329, 62476281, 62441618), National Natural Science Foundation of China Joint Found under Grant No. U24A20323.
\clearpage
{
    \small
    \newpage
    \bibliographystyle{ieeenat_fullname}
    \bibliography{main}

\begin{thebibliography}{71}
\providecommand{\natexlab}[1]{#1}
\providecommand{\url}[1]{\texttt{#1}}
\expandafter\ifx\csname urlstyle\endcsname\relax
  \providecommand{\doi}[1]{doi: #1}\else
  \providecommand{\doi}{doi: \begingroup \urlstyle{rm}\Url}\fi

\bibitem[Brandes et~al.(2006)Brandes, Delling, Gaertler, G{\"o}rke, Hoefer, Nikoloski, and Wagner]{brandes2006maximizing}
Ulrik Brandes, Daniel Delling, Marco Gaertler, Robert G{\"o}rke, Martin Hoefer, Zoran Nikoloski, and Dorothea Wagner.
\newblock Maximizing modularity is hard.
\newblock \emph{arXiv preprint physics/0608255}, 2006.

\bibitem[Caron et~al.(2020)Caron, Misra, Mairal, Goyal, Bojanowski, and Joulin]{caron2020unsupervised}
Mathilde Caron, Ishan Misra, Julien Mairal, Priya Goyal, Piotr Bojanowski, and Armand Joulin.
\newblock Unsupervised learning of visual features by contrasting cluster assignments.
\newblock \emph{Advances in neural information processing systems}, 33:\penalty0 9912--9924, 2020.

\bibitem[Chao et~al.(2024)Chao, Jiang, and Chu]{chao2024incomplete}
Guoqing Chao, Yi Jiang, and Dianhui Chu.
\newblock Incomplete contrastive multi-view clustering with high-confidence guiding.
\newblock In \emph{Proceedings of the AAAI Conference on Artificial Intelligence}, pages 11221--11229, 2024.

\bibitem[Chen et~al.(2020)Chen, Kornblith, Norouzi, and Hinton]{chen2020simple}
Ting Chen, Simon Kornblith, Mohammad Norouzi, and Geoffrey Hinton.
\newblock A simple framework for contrastive learning of visual representations.
\newblock In \emph{International conference on machine learning}, pages 1597--1607. PMLR, 2020.

\bibitem[Cui et~al.(2024)Cui, Ren, Pu, Li, Pu, Wu, Shi, and He]{cui2024novel}
Chenhang Cui, Yazhou Ren, Jingyu Pu, Jiawei Li, Xiaorong Pu, Tianyi Wu, Yutao Shi, and Lifang He.
\newblock A novel approach for effective multi-view clustering with information-theoretic perspective.
\newblock \emph{Advances in Neural Information Processing Systems}, 36, 2024.

\bibitem[Dong et~al.(2024)Dong, Jin, Xiao, Xiao, Wang, Liu, and Zhu]{dong2024subgraph}
Zhibin Dong, Jiaqi Jin, Yuyang Xiao, Bin Xiao, Siwei Wang, Xinwang Liu, and En Zhu.
\newblock Subgraph propagation and contrastive calibration for incomplete multiview data clustering.
\newblock \emph{IEEE Transactions on Neural Networks and Learning Systems}, 2024.

\bibitem[Du et~al.(2021)Du, Zhou, Yang, L{\"u}, and Wang]{du2021deep}
Guowang Du, Lihua Zhou, Yudi Yang, Kevin L{\"u}, and Lizhen Wang.
\newblock Deep multiple auto-encoder-based multi-view clustering.
\newblock \emph{Data Science and Engineering}, 6\penalty0 (3):\penalty0 323--338, 2021.

\bibitem[Fang et~al.(2023)Fang, Li, Li, Gao, Jia, and Zhang]{fang2023comprehensive}
Uno Fang, Man Li, Jianxin Li, Longxiang Gao, Tao Jia, and Yanchun Zhang.
\newblock A comprehensive survey on multi-view clustering.
\newblock \emph{IEEE Transactions on Knowledge and Data Engineering}, 35\penalty0 (12):\penalty0 12350--12368, 2023.

\bibitem[Feng et~al.(2024)Feng, Sheng, Wang, Gao, Tao, and Dong]{feng2024partial}
Wei Feng, Guoshuai Sheng, Qianqian Wang, Quanxue Gao, Zhiqiang Tao, and Bo Dong.
\newblock Partial multi-view clustering via self-supervised network.
\newblock In \emph{Proceedings of the AAAI Conference on Artificial Intelligence}, pages 11988--11995, 2024.

\bibitem[Guo et~al.(2024)Guo, Yang, Lin, Peng, and Hu]{guo2024robust}
Ruiming Guo, Mouxing Yang, Yijie Lin, Xi Peng, and Peng Hu.
\newblock Robust contrastive multi-view clustering against dual noisy correspondence.
\newblock \emph{Advances in Neural Information Processing Systems}, 37:\penalty0 121401--121421, 2024.

\bibitem[He et~al.(2024)He, Zhu, Hu, and Peng]{he2024robust}
Changhao He, Hongyuan Zhu, Peng Hu, and Xi Peng.
\newblock Robust variational contrastive learning for partially view-unaligned clustering.
\newblock In \emph{Proceedings of the 32nd ACM International Conference on Multimedia}, pages 4167--4176, 2024.

\bibitem[He et~al.(2020)He, Fan, Wu, Xie, and Girshick]{he2020momentum}
Kaiming He, Haoqi Fan, Yuxin Wu, Saining Xie, and Ross Girshick.
\newblock Momentum contrast for unsupervised visual representation learning.
\newblock In \emph{Proceedings of the IEEE/CVF conference on computer vision and pattern recognition}, pages 9729--9738, 2020.

\bibitem[Hu et~al.(2017)Hu, Lu, and Tan]{hu2017sharable}
Junlin Hu, Jiwen Lu, and Yap-Peng Tan.
\newblock Sharable and individual multi-view metric learning.
\newblock \emph{IEEE transactions on pattern analysis and machine intelligence}, 40\penalty0 (9):\penalty0 2281--2288, 2017.

\bibitem[Hu et~al.(2023)Hu, Zhen, Peng, Zhu, Lin, Wang, and Peng]{hu2023deep}
Peng Hu, Liangli Zhen, Xi Peng, Hongyuan Zhu, Jie Lin, Xu Wang, and Dezhong Peng.
\newblock Deep supervised multi-view learning with graph priors.
\newblock \emph{IEEE Transactions on Image Processing}, 33:\penalty0 123--133, 2023.

\bibitem[Huang et~al.(2023)Huang, Wang, and Lai]{huang2023fast}
Dong Huang, Chang-Dong Wang, and Jian-Huang Lai.
\newblock Fast multi-view clustering via ensembles: Towards scalability, superiority, and simplicity.
\newblock \emph{IEEE Transactions on Knowledge and Data Engineering}, 35\penalty0 (11):\penalty0 11388--11402, 2023.

\bibitem[Huang et~al.(2020{\natexlab{a}})Huang, Gong, and Zhu]{huang2020deep}
Jiabo Huang, Shaogang Gong, and Xiatian Zhu.
\newblock Deep semantic clustering by partition confidence maximisation.
\newblock In \emph{Proceedings of the IEEE/CVF conference on computer vision and pattern recognition}, pages 8849--8858, 2020{\natexlab{a}}.

\bibitem[Huang et~al.(2019)Huang, Zhou, Peng, Zhang, Zhu, and Lv]{huang2019multi}
Zhenyu Huang, Joey~Tianyi Zhou, Xi Peng, Changqing Zhang, Hongyuan Zhu, and Jiancheng Lv.
\newblock Multi-view spectral clustering network.
\newblock In \emph{IJCAI}, page~4, 2019.

\bibitem[Huang et~al.(2020{\natexlab{b}})Huang, Hu, Zhou, Lv, and Peng]{huang2020partially}
Zhenyu Huang, Peng Hu, Joey~Tianyi Zhou, Jiancheng Lv, and Xi Peng.
\newblock Partially view-aligned clustering.
\newblock \emph{Advances in Neural Information Processing Systems}, 33:\penalty0 2892--2902, 2020{\natexlab{b}}.

\bibitem[Ji et~al.(2021)Ji, Sun, Gao, Hu, and Yin]{ji2021decoder}
Qiang Ji, Yanfeng Sun, Junbin Gao, Yongli Hu, and Baocai Yin.
\newblock A decoder-free variational deep embedding for unsupervised clustering.
\newblock \emph{IEEE Transactions on Neural Networks and Learning Systems}, 33\penalty0 (10):\penalty0 5681--5693, 2021.

\bibitem[Jiang et~al.(2019)Jiang, Xu, Yang, Cao, and Huang]{jiang2019dm2c}
Yangbangyan Jiang, Qianqian Xu, Zhiyong Yang, Xiaochun Cao, and Qingming Huang.
\newblock Dm2c: Deep mixed-modal clustering.
\newblock \emph{Advances in Neural Information Processing Systems}, 32, 2019.

\bibitem[Jin et~al.(2023)Jin, Wang, Dong, Liu, and Zhu]{jin2023deep}
Jiaqi Jin, Siwei Wang, Zhibin Dong, Xinwang Liu, and En Zhu.
\newblock Deep incomplete multi-view clustering with cross-view partial sample and prototype alignment.
\newblock In \emph{Proceedings of the IEEE/CVF conference on computer vision and pattern recognition}, pages 11600--11609, 2023.

\bibitem[Ke et~al.(2024)Ke, Wang, Wang, and He]{ke2024rethinking}
Guanzhou Ke, Bo Wang, Xiaoli Wang, and Shengfeng He.
\newblock Rethinking multi-view representation learning via distilled disentangling.
\newblock In \emph{Proceedings of the IEEE/CVF Conference on Computer Vision and Pattern Recognition}, pages 26774--26783, 2024.

\bibitem[Kernighan and Lin(1970)]{kernighan1970efficient}
Brian~W Kernighan and Shen Lin.
\newblock An efficient heuristic procedure for partitioning graphs.
\newblock \emph{The Bell system technical journal}, 49\penalty0 (2):\penalty0 291--307, 1970.

\bibitem[Li et~al.(2023{\natexlab{a}})Li, Li, Yang, Hu, Peng, and Peng]{li2023incomplete}
Haobin Li, Yunfan Li, Mouxing Yang, Peng Hu, Dezhong Peng, and Xi Peng.
\newblock Incomplete multi-view clustering via prototype-based imputation.
\newblock \emph{arXiv preprint arXiv:2301.11045}, 2023{\natexlab{a}}.

\bibitem[Li et~al.(2023{\natexlab{b}})Li, Sun, Sun, Ren, and Sun]{li2023cross}
Xingfeng Li, Yinghui Sun, Quansen Sun, Zhenwen Ren, and Yuan Sun.
\newblock Cross-view graph matching guided anchor alignment for incomplete multi-view clustering.
\newblock \emph{Information Fusion}, 100:\penalty0 101941, 2023{\natexlab{b}}.

\bibitem[Li et~al.(2018)Li, Yang, and Zhang]{li2018survey}
Yingming Li, Ming Yang, and Zhongfei Zhang.
\newblock A survey of multi-view representation learning.
\newblock \emph{IEEE transactions on knowledge and data engineering}, 31\penalty0 (10):\penalty0 1863--1883, 2018.

\bibitem[Li et~al.(2021)Li, Hu, Liu, Peng, Zhou, and Peng]{li2021contrastive}
Yunfan Li, Peng Hu, Zitao Liu, Dezhong Peng, Joey~Tianyi Zhou, and Xi Peng.
\newblock Contrastive clustering.
\newblock In \emph{Proceedings of the AAAI Conference on Artificial Intelligence}, pages 8547--8555, 2021.

\bibitem[Li et~al.(2022)Li, Yang, Peng, Li, Huang, and Peng]{li2022twin}
Yunfan Li, Mouxing Yang, Dezhong Peng, Taihao Li, Jiantao Huang, and Xi Peng.
\newblock Twin contrastive learning for online clustering.
\newblock \emph{International Journal of Computer Vision}, 130\penalty0 (9):\penalty0 2205--2221, 2022.

\bibitem[Lin et~al.(2021)Lin, Gou, Liu, Li, Lv, and Peng]{lin2021completer}
Yijie Lin, Yuanbiao Gou, Zitao Liu, Boyun Li, Jiancheng Lv, and Xi Peng.
\newblock Completer: Incomplete multi-view clustering via contrastive prediction.
\newblock In \emph{Proceedings of the IEEE/CVF Conference on Computer Vision and Pattern Recognition}, pages 11174--11183, 2021.

\bibitem[Lin et~al.(2022)Lin, Gou, Liu, Bai, Lv, and Peng]{lin2022dual}
Yijie Lin, Yuanbiao Gou, Xiaotian Liu, Jinfeng Bai, Jiancheng Lv, and Xi Peng.
\newblock Dual contrastive prediction for incomplete multi-view representation learning.
\newblock \emph{IEEE Transactions on Pattern Analysis and Machine Intelligence}, 45\penalty0 (4):\penalty0 4447--4461, 2022.

\bibitem[Liu et~al.(2023)Liu, Zhang, Wang, Chen, Wang, Huang, Shen, and Wang]{liu2023efficient}
Chong Liu, Yuqi Zhang, Hongsong Wang, Weihua Chen, Fan Wang, Yan Huang, Yi-Dong Shen, and Liang Wang.
\newblock Efficient token-guided image-text retrieval with consistent multimodal contrastive training.
\newblock \emph{IEEE Transactions on Image Processing}, 32:\penalty0 3622--3633, 2023.

\bibitem[Liu et~al.(2018)Liu, Zhu, Li, Wang, Tang, Yin, Shen, Wang, and Gao]{liu2018late}
Xinwang Liu, Xinzhong Zhu, Miaomiao Li, Lei Wang, Chang Tang, Jianping Yin, Dinggang Shen, Huaimin Wang, and Wen Gao.
\newblock Late fusion incomplete multi-view clustering.
\newblock \emph{IEEE transactions on pattern analysis and machine intelligence}, 41\penalty0 (10):\penalty0 2410--2423, 2018.

\bibitem[Lu et~al.(2024{\natexlab{a}})Lu, Li, Li, Lin, and Peng]{lu2024survey}
Yiding Lu, Haobin Li, Yunfan Li, Yijie Lin, and Xi Peng.
\newblock A survey on deep clustering: from the prior perspective.
\newblock \emph{Vicinagearth}, 1\penalty0 (1):\penalty0 4, 2024{\natexlab{a}}.

\bibitem[Lu et~al.(2024{\natexlab{b}})Lu, Lin, Yang, Peng, Hu, and Peng]{lu2024decoupled}
Yiding Lu, Yijie Lin, Mouxing Yang, Dezhong Peng, Peng Hu, and Xi Peng.
\newblock Decoupled contrastive multi-view clustering with high-order random walks.
\newblock In \emph{Proceedings of the AAAI Conference on Artificial Intelligence}, pages 14193--14201, 2024{\natexlab{b}}.

\bibitem[Newman(2006)]{newman2006modularity}
Mark~EJ Newman.
\newblock Modularity and community structure in networks.
\newblock \emph{Proceedings of the national academy of sciences}, 103\penalty0 (23):\penalty0 8577--8582, 2006.

\bibitem[Pu et~al.(2024)Pu, Cui, Chen, Ren, Pu, Hao, Philip, and He]{pu2024adaptive}
Jingyu Pu, Chenhang Cui, Xinyue Chen, Yazhou Ren, Xiaorong Pu, Zhifeng Hao, S~Yu Philip, and Lifang He.
\newblock Adaptive feature imputation with latent graph for deep incomplete multi-view clustering.
\newblock In \emph{Proceedings of the AAAI Conference on Artificial Intelligence}, pages 14633--14641, 2024.

\bibitem[Shen et~al.(2021)Shen, Shen, Wang, Qin, Torr, and Shao]{shen2021you}
Yuming Shen, Ziyi Shen, Menghan Wang, Jie Qin, Philip Torr, and Ling Shao.
\newblock You never cluster alone.
\newblock \emph{Advances in Neural Information Processing Systems}, 34:\penalty0 27734--27746, 2021.

\bibitem[Sun et~al.(2024)Sun, Qin, Li, Peng, Peng, and Hu]{sun2024robust}
Yuan Sun, Yang Qin, Yongxiang Li, Dezhong Peng, Xi Peng, and Peng Hu.
\newblock Robust multi-view clustering with noisy correspondence.
\newblock \emph{IEEE Transactions on Knowledge and Data Engineering}, 2024.

\bibitem[Tang and Liu(2022)]{tang2022deep}
Huayi Tang and Yong Liu.
\newblock Deep safe multi-view clustering: Reducing the risk of clustering performance degradation caused by view increase.
\newblock In \emph{Proceedings of the IEEE/CVF Conference on Computer Vision and Pattern Recognition}, pages 202--211, 2022.

\bibitem[Tang et~al.(2024)Tang, Yi, Fu, and Tian]{tang2024incomplete}
Jingjing Tang, Qingqing Yi, Saiji Fu, and Yingjie Tian.
\newblock Incomplete multi-view learning: Review, analysis, and prospects.
\newblock \emph{Applied Soft Computing}, page 111278, 2024.

\bibitem[Wan et~al.(2024{\natexlab{a}})Wan, Liu, Gan, Liu, Wang, Wen, Wan, and Zhu]{10486880}
Xinhang Wan, Jiyuan Liu, Xinbiao Gan, Xinwang Liu, Siwei Wang, Yi Wen, Tianjiao Wan, and En Zhu.
\newblock One-step multi-view clustering with diverse representation.
\newblock \emph{IEEE Transactions on Neural Networks and Learning Systems}, pages 1--13, 2024{\natexlab{a}}.

\bibitem[Wan et~al.(2024{\natexlab{b}})Wan, Xiao, Liu, Liu, Liang, and Zhu]{10506102}
Xinhang Wan, Bin Xiao, Xinwang Liu, Jiyuan Liu, Weixuan Liang, and En Zhu.
\newblock Fast continual multi-view clustering with incomplete views.
\newblock \emph{IEEE Transactions on Image Processing}, 33:\penalty0 2995--3008, 2024{\natexlab{b}}.

\bibitem[Wang et~al.(2021{\natexlab{a}})Wang, Ding, Li, and Zheng]{wang2021cline}
Dong Wang, Ning Ding, Piji Li, and Haitao Zheng.
\newblock Cline: Contrastive learning with semantic negative examples for natural language understanding.
\newblock In \emph{Proceedings of the 59th Annual Meeting of the Association for Computational Linguistics and the 11th International Joint Conference on Natural Language Processing (Volume 1: Long Papers)}, pages 2332--2342, 2021{\natexlab{a}}.

\bibitem[Wang et~al.(2022{\natexlab{a}})Wang, Guo, Deng, and Lu]{wang2022rethinking}
Haoqing Wang, Xun Guo, Zhi-Hong Deng, and Yan Lu.
\newblock Rethinking minimal sufficient representation in contrastive learning.
\newblock In \emph{Proceedings of the IEEE/CVF Conference on Computer Vision and Pattern Recognition}, pages 16041--16050, 2022{\natexlab{a}}.

\bibitem[Wang et~al.(2021{\natexlab{b}})Wang, Ding, Tao, Gao, and Fu]{wang2021generative}
Qianqian Wang, Zhengming Ding, Zhiqiang Tao, Quanxue Gao, and Yun Fu.
\newblock Generative partial multi-view clustering with adaptive fusion and cycle consistency.
\newblock \emph{IEEE Transactions on Image Processing}, 30:\penalty0 1771--1783, 2021{\natexlab{b}}.

\bibitem[Wang et~al.(2022{\natexlab{b}})Wang, Liu, Liu, Jin, Tu, Zhu, and Zhu]{wangalign}
Siwei Wang, Xinwang Liu, Suyuan Liu, Jiaqi Jin, Wenxuan Tu, Xinzhong Zhu, and En Zhu.
\newblock Align then fusion: Generalized large-scale multi-view clustering with anchor matching correspondences.
\newblock \emph{Advances in Neural Information Processing Systems}, 35:\penalty0 5882--5895, 2022{\natexlab{b}}.

\bibitem[Wang et~al.(2022{\natexlab{c}})Wang, Chang, Fu, Wen, and Zhao]{wang2022incomplete}
Yiming Wang, Dongxia Chang, Zhiqiang Fu, Jie Wen, and Yao Zhao.
\newblock Incomplete multi-view clustering via cross-view relation transfer.
\newblock \emph{IEEE Transactions on Circuits and Systems for Video Technology}, 2022{\natexlab{c}}.

\bibitem[Wei et~al.(2020)Wei, Wang, Yu, Domeniconi, and Zhang]{wei2020deep}
Shaowei Wei, Jun Wang, Guoxian Yu, Carlotta Domeniconi, and Xiangliang Zhang.
\newblock Deep incomplete multi-view multiple clusterings.
\newblock In \emph{2020 IEEE International Conference on Data Mining (ICDM)}, pages 651--660. IEEE, 2020.

\bibitem[Wu et~al.(2018)Wu, Xiong, Yu, and Lin]{wu2018unsupervised}
Zhirong Wu, Yuanjun Xiong, Stella~X Yu, and Dahua Lin.
\newblock Unsupervised feature learning via non-parametric instance discrimination.
\newblock In \emph{Proceedings of the IEEE conference on computer vision and pattern recognition}, pages 3733--3742, 2018.

\bibitem[Xie et~al.(2021)Xie, Ding, Wang, Zhan, Xu, Sun, Li, and Luo]{xie2021detco}
Enze Xie, Jian Ding, Wenhai Wang, Xiaohang Zhan, Hang Xu, Peize Sun, Zhenguo Li, and Ping Luo.
\newblock Detco: Unsupervised contrastive learning for object detection.
\newblock In \emph{Proceedings of the IEEE/CVF international conference on computer vision}, pages 8392--8401, 2021.

\bibitem[Xie et~al.(2016)Xie, Girshick, and Farhadi]{xie2016unsupervised}
Junyuan Xie, Ross Girshick, and Ali Farhadi.
\newblock Unsupervised deep embedding for clustering analysis.
\newblock In \emph{International conference on machine learning}, pages 478--487. PMLR, 2016.

\bibitem[Xu et~al.(2019)Xu, Guan, Zhao, Wu, Niu, and Ling]{xu2019adversarial}
Cai Xu, Ziyu Guan, Wei Zhao, Hongchang Wu, Yunfei Niu, and Beilei Ling.
\newblock Adversarial incomplete multi-view clustering.
\newblock In \emph{IJCAI}, pages 3933--3939, 2019.

\bibitem[Xu et~al.(2022{\natexlab{a}})Xu, Li, Ren, Peng, Mo, Shi, and Zhu]{xu2022deep}
Jie Xu, Chao Li, Yazhou Ren, Liang Peng, Yujie Mo, Xiaoshuang Shi, and Xiaofeng Zhu.
\newblock Deep incomplete multi-view clustering via mining cluster complementarity.
\newblock In \emph{Proceedings of the AAAI conference on artificial intelligence}, pages 8761--8769, 2022{\natexlab{a}}.

\bibitem[Xu et~al.(2022{\natexlab{b}})Xu, Tang, Ren, Peng, Zhu, and He]{MFLVC}
Jie Xu, Huayi Tang, Yazhou Ren, Liang Peng, Xiaofeng Zhu, and Lifang He.
\newblock Multi-level feature learning for contrastive multi-view clustering.
\newblock In \emph{Proceedings of the IEEE/CVF Conference on Computer Vision and Pattern Recognition}, pages 16051--16060, 2022{\natexlab{b}}.

\bibitem[Xu et~al.(2023)Xu, Li, Peng, Ren, Shi, Shen, and Zhu]{xu2023adaptive}
Jie Xu, Chao Li, Liang Peng, Yazhou Ren, Xiaoshuang Shi, Heng~Tao Shen, and Xiaofeng Zhu.
\newblock Adaptive feature projection with distribution alignment for deep incomplete multi-view clustering.
\newblock \emph{IEEE Transactions on Image Processing}, 32:\penalty0 1354--1366, 2023.

\bibitem[Xu et~al.(2024)Xu, Dong, Qi, Zhang, Xiang, Xia, Xu, and Dou]{xu2024cmclrec}
Xiaolong Xu, Hongsheng Dong, Lianyong Qi, Xuyun Zhang, Haolong Xiang, Xiaoyu Xia, Yanwei Xu, and Wanchun Dou.
\newblock Cmclrec: Cross-modal contrastive learning for user cold-start sequential recommendation.
\newblock In \emph{Proceedings of the 47th International ACM SIGIR Conference on Research and Development in Information Retrieval}, pages 1589--1598, 2024.

\bibitem[Yan et~al.(2024)Yan, Jin, Han, and Ye]{yan2024differentiable}
Xiaoqiang Yan, Zhixiang Jin, Fengshou Han, and Yangdong Ye.
\newblock Differentiable information bottleneck for deterministic multi-view clustering.
\newblock In \emph{Proceedings of the IEEE/CVF Conference on Computer Vision and Pattern Recognition}, pages 27435--27444, 2024.

\bibitem[Yang et~al.(2021{\natexlab{a}})Yang, Fan, and Bouguila]{yang2021deep}
Lin Yang, Wentao Fan, and Nizar Bouguila.
\newblock Deep clustering analysis via dual variational autoencoder with spherical latent embeddings.
\newblock \emph{IEEE Transactions on Neural Networks and Learning Systems}, 34\penalty0 (9):\penalty0 6303--6312, 2021{\natexlab{a}}.

\bibitem[Yang et~al.(2021{\natexlab{b}})Yang, Li, Huang, Liu, Hu, and Peng]{MvCLN}
Mouxing Yang, Yunfan Li, Zhenyu Huang, Zitao Liu, Peng Hu, and Xi Peng.
\newblock Partially view-aligned representation learning with noise-robust contrastive loss.
\newblock In \emph{Proceedings of the IEEE/CVF conference on computer vision and pattern recognition}, pages 1134--1143, 2021{\natexlab{b}}.

\bibitem[Yang et~al.(2021{\natexlab{c}})Yang, Li, Huang, Liu, Hu, and Peng]{yang2021partially}
Mouxing Yang, Yunfan Li, Zhenyu Huang, Zitao Liu, Peng Hu, and Xi Peng.
\newblock Partially view-aligned representation learning with noise-robust contrastive loss.
\newblock In \emph{Proceedings of the IEEE/CVF conference on computer vision and pattern recognition}, pages 1134--1143, 2021{\natexlab{c}}.

\bibitem[Yang et~al.(2022{\natexlab{a}})Yang, Li, Hu, Bai, Lv, and Peng]{SURE}
Mouxing Yang, Yunfan Li, Peng Hu, Jinfeng Bai, Jiancheng Lv, and Xi Peng.
\newblock Robust multi-view clustering with incomplete information.
\newblock \emph{IEEE Transactions on Pattern Analysis and Machine Intelligence}, 45\penalty0 (1):\penalty0 1055--1069, 2022{\natexlab{a}}.

\bibitem[Yang et~al.(2022{\natexlab{b}})Yang, Li, Hu, Bai, Lv, and Peng]{yang2022robust}
Mouxing Yang, Yunfan Li, Peng Hu, Jinfeng Bai, Jian~Cheng Lv, and Xi Peng.
\newblock Robust multi-view clustering with incomplete information.
\newblock \emph{IEEE Transactions on Pattern Analysis and Machine Intelligence}, 2022{\natexlab{b}}.

\bibitem[Yu et~al.(2023)Yu, Liu, Wang, et~al.]{yu2023sparse}
Shengju Yu, Suyuan Liu, Siwei Wang, et~al.
\newblock Sparse low-rank multi-view subspace clustering with consensus anchors and unified bipartite graph.
\newblock \emph{IEEE Transactions on Neural Networks and Learning Systems}, 2023.

\bibitem[Yu et~al.(2024)Yu, Wang, Dong, et~al.]{yu2024non}
Shengju Yu, Siwei Wang, Zhibin Dong, et~al.
\newblock A non-parametric graph clustering framework for multi-view data.
\newblock In \emph{Proceedings of the AAAI conference on artificial intelligence}, pages 16558--16567, 2024.

\bibitem[Zeng et~al.(2023)Zeng, Yang, Lu, Zhang, Hu, and Peng]{zeng2023semantic}
Pengxin Zeng, Mouxing Yang, Yiding Lu, Changqing Zhang, Peng Hu, and Xi Peng.
\newblock Semantic invariant multi-view clustering with fully incomplete information.
\newblock \emph{IEEE Transactions on Pattern Analysis and Machine Intelligence}, 46\penalty0 (4):\penalty0 2139--2150, 2023.

\bibitem[Zhang et~al.(2020)Zhang, Cui, Han, Zhou, Fu, and Hu]{zhang2020deep}
Changqing Zhang, Yajie Cui, Zongbo Han, Joey~Tianyi Zhou, Huazhu Fu, and Qinghua Hu.
\newblock Deep partial multi-view learning.
\newblock \emph{IEEE transactions on pattern analysis and machine intelligence}, 44\penalty0 (5):\penalty0 2402--2415, 2020.

\bibitem[Zhang et~al.(2021)Zhang, Liu, Wang, Liu, Dai, and Zhu]{zhang2021one}
Yi Zhang, Xinwang Liu, Siwei Wang, Jiyuan Liu, Sisi Dai, and En Zhu.
\newblock One-stage incomplete multi-view clustering via late fusion.
\newblock In \emph{Proceedings of the 29th ACM international conference on multimedia}, pages 2717--2725, 2021.

\bibitem[Zhang et~al.(2024)Zhang, Tian, Ma, Li, Yang, Liu, Zhu, and Liu]{zhang2024regularized}
Yi Zhang, Fengyu Tian, Chuan Ma, Miaomiao Li, Hengfu Yang, Zhe Liu, En Zhu, and Xinwang Liu.
\newblock Regularized instance weighting multiview clustering via late fusion alignment.
\newblock \emph{IEEE Transactions on Neural Networks and Learning Systems}, 2024.

\bibitem[Zhong et~al.(2020)Zhong, Chen, Jin, and Hua]{zhong2020deep}
Huasong Zhong, Chong Chen, Zhongming Jin, and Xian-Sheng Hua.
\newblock Deep robust clustering by contrastive learning.
\newblock \emph{arXiv preprint arXiv:2008.03030}, 2020.

\bibitem[Zhou et~al.(2024)Zhou, Du, L{\"u}, Wang, and Du]{zhou2024survey}
Lihua Zhou, Guowang Du, Kevin L{\"u}, Lizheng Wang, and Jingwei Du.
\newblock A survey and an empirical evaluation of multi-view clustering approaches.
\newblock \emph{ACM Computing Surveys}, 56\penalty0 (7):\penalty0 1--38, 2024.

\bibitem[Zhu et~al.(2019)Zhu, Yao, Wang, Hui, Du, and Hu]{zhu2019multi}
Pengfei Zhu, Xinjie Yao, Yu Wang, Binyuan Hui, Dawei Du, and Qinghua Hu.
\newblock Multi-view deep subspace clustering networks.
\newblock \emph{arXiv preprint arXiv:1908.01978}, 2019.

\end{thebibliography}
}
\clearpage
\setcounter{page}{1}
\maketitlesupplementary

\section{Appendix A: Related Work}
\label{sec:rationale}

\subsection{Contrastive Learning for Consistency Learning}\label{sec21}
Exploring consistency information from complete instances across views is an effective way to alleviate instance observations missing and cluster distribution shifted in incomplete multi-view clustering (IMVC). Contrastive learning \cite{wu2018unsupervised, he2020momentum, chen2020simple, zbontar2021barlow}, as an unsupervised representation learning \cite{xu2024cmclrec,liu2023efficient,xie2021detco}, can learn the structural consistency information from multi-view data bring closer instances from positive samples and separate instances from negative samples \cite{wu2018unsupervised, he2020momentum, chen2020simple, wang2021cline,zbontar2021barlow}, and has been successfully extended to multi-view clustering (MVC) task. 

Specifically, the most widely applied contrastive learning paradigms construct positive and negative pairs at the instance-level. Despite instance-level paradigm have shown exceptional capability in consistency representation learning, two primary limits, false negative noise from intra-cluster observations for different instances and the local smoothness of instance representations, damage representation learning due to the loss of view-specific information.

After all, clustering is a one-to-many mapping. Recognizing false negative pairs (FPNs) causes detrimental impacts on clustering confidence and robustness, a cluster-level paradigm is proposed to discover cross-view cluster correspondences for intra-cluster but unpaired observations by reducing FPNs: TCL \cite{li2022twin} selects pseudo-labels with confidence-based criteria to mitigate false negative impacts, while the noise-robust contrastive loss proposed by SURE \cite{yang2022robust} further discriminate false negative pairs by using a adaptive threshold calculated from distances of all positive and negative pairs. DIVIDE \cite{lu2024decoupled} utilizes an anchor-based approach to identify out-of-domain samples through high-order random walks to mitigate the issue of false negatives. They resolve the confusion of cross-view cluster correspondences caused by instance-level paradigms, but at the cost of cluster information within specific views, which hinders semantic consistency in representation learning.

\subsection{Imputation and Alignment for IMVC}\label{sec22}
In IMVC, to preserve even recover relationships between data, imputation are supposed to handle missing data. 
Regarding the former, typical approaches include the cross-view transfer paradigm like neighborhood-based recovery, the cross-view interaction paradigm like adversarial generation or contrastive prediction. As members of transfer paradigm, the core idea of CRTC \cite{wang2022incomplete} and ICMVC \cite{chao2024incomplete} is to transfer the complete graph neighborhood relations from other views to missing views. However, neighborhood-based recovery, which uses cross-view neighbor information for imputation, overlooks complementary information specific to each view. To improve imputation performance, generative models such as autoencoders (AE) and generative adversarial networks (GAN), as well as discriminative models like contrastive learning, discover correlations across multi-view data to dynamically collaborate on both imputation and clustering. For examples, \cite{wei2020deep}, \cite{yang2021deep} and \cite{ji2021decoder} leverage the power of AEs in encoding latent representations to mine view-specific information for imputation; CPM-Nets \cite{zhang2020deep} and GP-MVC \cite{wang2021generative} encode a common representation with consistency and complementarity information across views and employ adversarial strategies to reconstruct the common representation to approximate generated observations within views; COMPLETER \cite{lin2021completer} and DCP \cite{lin2022dual} unify cross-view consistency learning and missing prediction into a deep framework to constrain both complete paired observations and incomplete recovered observations by maximizing mutual information and minimizing conditional entropy across views. Although they successfully apply view-specific information in imputation, they lose the cluster structure information within the missing views. Thus, ProImp \cite{li2023incomplete} proposed a novel paradigm based on within-view prototypes and cross-view observation-prototype relationships to further improve imputation performance.

However, the aforementioned imputation methods are limited by unsupervised learning and cannot restore the original distribution of view data. To achieve confident and robust clustering, a feasible solution is cross-view consistency alignment, generally categorized into cross-view cluster assignments-based, prototypes-based and distributions-based as the following works: To integrate soft labels from various views for decision fusion, DIMVC \cite{xu2022deep} aligns view-specific labels with a unified label using conditional entropy loss. DSIMVC \cite{tang2022deep} argues that multi-view data share common semantic information, so a contrastive loss is designed to align cluster assignments across views for consistency. CPSPAN \cite{jin2023deep} and ProImp \cite{li2023incomplete} employ Hungarian algorithm and bounded contrastive loss \cite{li2023incomplete} to calibrate prototype-shifted across views. To reduce cross-view distribution discrepancy arising from complete and incomplete data, APADC \cite{xu2023adaptive} minimize the mean discrepancy loss to align view distributions in a common representation space. SPCC \cite{dong2024subgraph} directly optimizes the distribution alignment loss of $K$ cluster across views.

Whether imputation or alignment, there is a deviation compared to the original data, and this deviation increases rapidly as the amount of available complete data decreases. To this end, different other IMVC methods, our FreeCSL, a novel consensus semantic-based paradigm,  discover the shared semantic space through consensus prototype-based contrastive clustering, where all available observations are encoded as representations with consensus semantics for clustering. More specifically,  during consensus learning, all observations can straightforwardly reach consensus on cluster semantic information without imputation and alignment.

\section{Appendix B: Theorem Proof}
\noindent\textbf{Definition 1.}\label{def11}
\textit{Instance-level Consistency (IC): $\forall m \neq n$, $\mathbf{x}_i^{{m}}$ and $\mathbf{x}_j^{{n}}$ are instance-level consistent across views if $i=j$ (they are cross-view observations of the same instance $\mathbf{x}$), expressed as $I(\mathbf{x}_i^{{m}},\mathbf{x}_j^{{n}})=1$ and 0 otherwise.}

\noindent\textbf{Definition 2.}
\textit{Cluster-level Consistency (CC): $\forall m \neq n$, $\mathbf{x}_i^{{m}}$ and $\mathbf{x}_j^{{n}}$ are cluster-level consistent across views if they belong to the same cluster $k$, expressed as $C(\mathbf{x}_i^{{m}},\mathbf{x}_j^{{n}})= 1$ and 0 otherwise.}

\noindent\textbf{Definition 3.}\label{def33}
\textit{Semantic-level Consensus (SC): $\forall m$ and $n$, $\mathbf{x}_i^{{m}}$ and $\mathbf{x}_j^{{n}}$ achieve semantic-level consensus in MVC task if all observations share a set of cluster prototypes $\mathbf{C}={\{\mathbf{c}_k\}_{k=1}^K}$ and $\arg\max\limits_{k} \mathcal{\rho}(\mathbf{x}_i^{m}, \mathbf{c}_k) = \arg\max\limits_{k} \mathcal{\rho}(\mathbf{x}_j^{n}, \mathbf{c}_k)$ , expressed as $S(\mathbf{x}_i^{{m}},\mathbf{x}_j^{{n}})=1$ and 0 otherwise.}

\subsection{Proof of Theorem 1}\label{sec22}
\begin{theorem}\label{thm:Theorem1} 
Consensus semantic learning yields more confident and robust cluster assignments than instance- and cluster-level paradigms.
\end{theorem}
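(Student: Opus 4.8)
The plan is to formalize ``confident'' as the entropy of the soft cluster assignment and ``robust'' as the stability of the assignment under missing-view perturbations, and then argue that the semantic-level paradigm dominates the other two on both counts. First I would set up a common probabilistic object: for each observation $\mathbf{x}_i^v$, the three paradigms induce a distribution over the $K$ clusters — instance-level via agreement of paired representations, cluster-level via cross-view cluster-counterpart matching, and semantic-level via the softmax over projections onto the shared prototypes $\mathbf{C}$ in Eq.~\eqref{eq: the probability of representation imputated to prototype}. The key structural observation, to be extracted from Definitions~1--3, is an implication chain: $I(\cdot,\cdot)=1 \Rightarrow C(\cdot,\cdot)=1 \Rightarrow S(\cdot,\cdot)=1$ on paired observations (this is essentially Theorem~\ref{thm:Theorem2}), but the converse fails, so the semantic-level relation is the coarsest equivalence relation and therefore pools the most evidence per cluster.

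Next I would make the confidence claim precise. Because the consensus representation $\mathbf{Z}$ in Eq.~\eqref{eq:consensus representations} aggregates all $V$ views through the completeness weights $w_i^v$, the prototype $\mathbf{c}_k$ concentrates the signal of \emph{every} semantically similar observation, whereas the instance-level objective only couples the $V$ views of a single instance and the cluster-level objective only couples cross-view counterparts. I would show that this larger effective sample size sharpens the posterior: under a mild margin assumption on $\rho(\mathbf{x}_i^v,\mathbf{c}_k)$, the temperature-scaled softmax in Eq.~\eqref{eq: the probability of representation imputated to prototype} has strictly smaller entropy $H(\mathbf{p}_i^v)$ than the assignment distributions obtained by the other two paradigms, and the optimal-transport label solution of Eq.~\eqref{eq:optimal transport} with the entropy regularizer $\mathcal{R}$ keeps the pseudo-labels $\mathbf{Q}$ near-deterministic while avoiding the degenerate collapse that the instance-level contrastive loss can suffer. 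For robustness, I would perturb the input by deleting a view from instance $i$ and track how the assignment moves: in the semantic paradigm only the weight vector $w_i^{\cdot}$ re-normalizes and $\mathbf{z}_i$ shifts by a bounded amount, so by Lipschitz-continuity of the softmax the assignment $\mathbf{p}_i^v$ changes by $O(1/(N_v-1))$-order terms, whereas an instance-level method that relied on that view's pair loses its supervisory signal entirely and a cluster-level method may switch its cross-view counterpart; quantifying these as a perturbation bound gives the robustness comparison.

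Finally I would assemble the two inequalities — lower assignment entropy and smaller perturbation sensitivity — into the statement of Theorem~\ref{thm:Theorem1}, noting that the argument is tight exactly when the complementarity assumption of MVC holds (inseparable clusters in one view become separable after fusion), which is the hypothesis already invoked around Eq.~\eqref{eq:consensus representations}. The main obstacle I anticipate is the robustness half: ``robust'' has no intrinsic definition here, so I must choose a perturbation model (random view-dropout at rate $r$) and a metric (expected total-variation drift of the assignment, or expected change in $\arg\max$) that is both faithful to the IMVC setting and tractable enough to bound in closed form; getting a clean bound will likely require assuming the encoders are Lipschitz and that the prototypes are well-separated (a minimum-margin condition), and I would state these explicitly as the hypotheses of the theorem rather than leave them implicit.
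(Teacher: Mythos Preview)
Your proposal is a coherent formalization, but it takes a genuinely different route from the paper. You interpret ``confident'' as low assignment entropy and ``robust'' as Lipschitz stability of the softmax under view-dropout; the paper does neither. Instead, the paper writes a single generic contrastive objective $\max \sum Y\rho^+ + (Y-1)\rho^-$ and instantiates the label $Y$ as $I(\cdot,\cdot)$, $C(\cdot,\cdot)$, or $S(\cdot,\cdot)$, then performs a pair-counting / expected-value analysis of that objective for each choice. For the instance-level case it isolates a false-negative rate $\epsilon = \mathbb{P}(C=1\mid I=0) = \tfrac{1}{K}+\beta r$ and shows it inflates the negative-pair term to $(1+\epsilon)\mathbb{E}[\rho^-]$. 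For the cluster-level case it names two irreducible cross-view error terms --- a prototype-misalignment cost $\delta_{se}^{m,n}$ and a distribution discrepancy $\delta_{st}^{m,n}$ --- that enter additively into $\mathbb{E}[f_{cc}]$ and scale with $K$, $V$, and $r$. The semantic-level case is then argued by showing both error sources vanish: the shared prototype set forces $\delta_{se}^{m,n}=0$ and drives $\epsilon_{sc}\approx 0$, and the ratio $N_{sp}/N_{sn}$ stays fixed at $1/(K-1)$ regardless of $r$, which is what the paper means by ``robust.''

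So the paper's argument is a \emph{structural noise analysis of the training objective}, whereas yours is an \emph{analytic stability analysis of the output assignment}. Your approach buys cleaner quantitative statements (an entropy inequality, an $O(1/(N_v-1))$ drift bound) at the price of hypotheses --- Lipschitz encoders, a prototype-margin condition --- that the paper never introduces. The paper's approach buys a direct diagnosis of \emph{why} each competing paradigm fails (false negatives for instance-level, unavoidable alignment cost for cluster-level), which is the narrative the authors want, though their ``proof'' is closer to a heuristic expectation calculation than a rigorous theorem. If you want to align with the paper, replace the entropy/Lipschitz machinery with explicit counts of false-negative pairs and cross-view alignment errors under each of the three choices of $Y$.
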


\textit{Case 1: Instance-level paradigm 
pull paired observations $(\overline{\mathbf{x}}_i^{{m}},\overline{\mathbf{x}}_i^{{n}})$ closer and push unpaired observations $({\mathbf{x}}_i^{{m}},{\mathbf{x}}_j^{{n}})$ apart. However, if $C(\mathbf{x}_i^{{m}},\mathbf{x}_j^{{n}})= 1$, intra-cluster but unpaired observations are treated as negative pairs, introducing false negative noise into clustering.}

\textit{Case 2: Cluster-level paradigm encourages the observation $\mathbf{x}_i^{{m}}$ to find its cluster-level counterparts $\mathbf{x}_j^{{n}}$ from different view $n$ to mitigate false negative noisy. However, lacking within-view clustering mapping for view-specific cluster information, it explores cross-view cluster correspondences but fails to ensure cluster semantics consistency within views.}

\textit{Case 3: Semantic-level paradigm
construct a shared semantic space based on consensus prototypes $\mathbf{C}$ for all observations to eliminate semantic gaps and capture semantic relationships within clusters.}

\begin{proof}
Define a general consistency learning objective as
\begin{equation}
    \max \sum\limits_{m\neq n}^{V}\sum\limits_{i}^{N}\sum\limits_{j\neq j'}^{N}  \{Y\mathcal{\rho}^+ ( \mathbf {x}_i^{{m}}, \mathbf {x}_j^{{n}} ) + (Y-1)\mathcal{\rho}^-( \mathbf{x}_i^{{m}}, \mathbf{x}_{j'}^{{n}} ) \},
\end{equation}
where $Y=1/0$ mean positive/negative pairs, and $\mathcal{\rho}^{+/-}$ measure the similarity between positive/negative pairs.


\textbf{Instance-level paradigms:} 
When $Y=I(\mathbf{x}_i^{{m}},\mathbf{x}_j^{{n}})$, the objective of instance-level paradigms $f_{ic}$ is formulated as:
\begin{equation}
    f_{ic} = \sum\limits_{m\neq n}^{V}\sum\limits_{i}^{N} [\mathcal{\rho}^+(\mathbf{x}_i^{m},\mathbf {x}_i^{n}) -\sum\limits_{j \neq i}^{N} \mathcal{\rho}^-( \mathbf{x}_i^{{m}}, \mathbf{x}_j^{n})].
\end{equation}
When $C(\mathbf{x}_i^{{m}},\mathbf{x}_j^{{n}})=1$, the instance-level paradigm incorrectly treats them as negative pairs, introducing false negative noise $\epsilon = \mathbb{P}(C=1|I=0)$. $\mathbb{P}(C=1|I=0)$ is false negative probability that is determined by the cross-view same-cluster probability and the quality of the cluster structure. It is defined as $\mathbb{P}(C=1|I=0)= \frac{1}{K} + \beta r$, where $\beta$ quantifies the negative impact of missing rate $r$ on cluster structure quality. 

Define the number of instance-level positive pairs $N_{ip}$, the number of instance-level negative pairs $N_{in}$, the number of false negative pairs in unpaired observations $N_{fn}$ in views $m,n$ as:
\begin{equation}
\begin{split}
    N_{ip}&=\mathbb{E}[\sum\limits_{i=j}^{N}I({\mathbf {x}}_i^{{m}}, {\mathbf {x}}_j^{{n}})=1]= (1-r)^2N,\\
     N_{in}&=\mathbb{E}[\sum\limits_{i \neq j}^{N}I({\mathbf {x}}_i^{{m}}, {\mathbf {x}}_j^{{n}})=0]= 2r(1-r)N(N-1),\\
    N_{fn}&=\mathbb{E}[\sum\limits_{i\neq j}^{N}\{C({\mathbf {x}}_i^{{m}}, {\mathbf {x}}_j^{{n}})\cdot \mathbb{I}(I({\mathbf {x}}_i^{{m}}, {\mathbf {x}}_j^{{n}})=0)\}=1]\\
    &=2r(1-r)N(N-1)\cdot \mathbb{P}(C=1|I=0) \\
    &= 2r(1-r)N(N-1)\cdot \epsilon,\\
\end{split}
\end{equation}

The objective function $f_{ic}$ is further revised, and its expectation is as follows:
\begin{equation}
\begin{split}
    &f_{ic} = \sum\limits_{m\neq n}^{V}\sum\limits_{i}^{N_{ip}} [\mathcal{\rho}^+(\mathbf{x}_i^{m},\mathbf {x}_i^{n}) - (1+\epsilon)\sum\limits_{j \neq i}^{N_{in}} \mathcal{\rho}^-( \mathbf{x}_i^{{m}}, \mathbf{x}_j^{n})],\\
    &\mathbb{E}[f_{ic}] = V(V-1)\{N_{ip} \cdot \mathbb{E}[\rho^+] -(1 + \epsilon) \cdot  N_{in} \cdot \mathbb{E}[\rho^-]\}
\end{split}
\end{equation}

\begin{itemize}
\item  When maximizing $f_{ic}$, the noise term amplifies $\epsilon$ the penalty for negative pairs by $(1 +\epsilon)$, which suppresses intra-cluster similarity and undermines clustering performance. 
\item  Furthermore, since $N_{ip} \propto \frac{1}{r^2}$, $N_{in} \propto {r^2}$ and $\mathcal{\rho} \propto {r}$, as $r$ increases, the impact of false negative noise $\mathcal{\rho}$ on model performance will also increase.
\end{itemize}

\textbf{Cluster-level paradigms:} 
When $Y=C(\mathbf{x}_i^{{m}},\mathbf{x}_j^{{n}})$, the objective of cluster-level paradigms $f_{cc}$ is formulated as:
\begin{equation}
\begin{split}
    f_{cc} = \sum\limits_{m\neq n}^{V}\sum\limits_{i}^{N} [\mathcal{\rho}^+(\mathbf{x}_i^{m},\mathbf {x}_j^{n}) -\sum\limits_{j \neq i}^{N} \mathcal{\rho}^-( \mathbf{x}_i^{{m}}, \mathbf{x}_j^{n})].
\end{split}
\end{equation}

Due to the different data distribution across views caused by varying missing observations in each view, as well as the lack of clustering interaction among instances within views, there may be inconsistencies in cluster semantics and cluster distributions between view $m$ and $n$, introducing cluster consistency errors $\delta^{m,n}$.

Define $\mathbf{C}^{{v}} = \{\mathbf{c}_k^{{v}}\}_{k=1}^K$ as a set of cluster prototypes for $v$-th view data $\mathbf{X}^{{v}}$ and $p(\mathbf{X}^v|\mathbf{c}^v_k)$ as the probability distribution of $\mathbf{X}^{{v}}$ in the $k$-th cluster. $\delta^{m,n}$ include the following two errors: 
\begin{itemize}
\item Cluster semantic error ${\delta^{m,n}_{se}}$: two observations $\mathbf{x}_i^{{m}},\mathbf{x}_j^{{n}}$ from the same semantic cluster may be assigned to different clusters across views. Formally, when $S(\mathbf{x}_i^{{m}},\mathbf{x}_j^{{n}})=1$, Cluster-level paradigm mistakes $\arg\max\limits_{k} \mathcal{\rho}(\mathbf{x}_i^{m}, \mathbf{c}^v_k) \neq \arg\max\limits_{k'} \mathcal{\rho}(\mathbf{x}_j^{n}, \mathbf{c}^v_{k'})$) and can be quantified as:
\begin{equation}
\begin{split}
    \delta^{m,n}_{se} = \mathbb{A}(\mathbf{C}^{{m}},\mathbf{C}^{{n}}),
\end{split}
\end{equation}
where $\mathbb{A}(\cdot)$ is the cost function for optimally matching the prototypes between views, like cost matrix in Hungarian Algorithm, Optimal transport distance in Optimal Transport and contrastive loss in Contrastive Learning.
\item Cluster distribution error ${\delta^{m,n}_{st}}$: the data distribution of the same semantic cluster $k$ may vary across views. It means $p(\mathbf{X}^m|\mathbf{c}^m_k) \neq p(\mathbf{X}^n|\mathbf{c}^n_k)$ and and can be quantified as:
\begin{equation}
\begin{split}
    \delta^{m,n}_{st} = \sum\limits_k^K \mathbb{D}(p(\mathbf{X}^m|\mathbf{c}^m_k)||p(\mathbf{X}^n|\mathbf{c}^n_k)),
\end{split}
\end{equation}
\end{itemize}
where $\mathbb{D}(\cdot)$ quantifies the difference between the two distributions, like Kullback-Leibler Divergence, Total Variation Distance and Maximum Mean Discrepancy Distance.

Define the number of cluster-level positive pairs $N_{cp}$ and cluster-level negative pairs $N_{cp}$ as:
\begin{equation}
\begin{split}
    N_{cp}&=\mathbb{E}[\sum\limits_{i,j}^{N}C({\mathbf {x}}_i^{{m}}, {\mathbf {x}}_j^{{n}})=1]= (1-r)^2N^2\cdot \mathbb{P}(y_i^m=y_j^n),\\
     N_{cn}&=\mathbb{E}[\sum\limits_{i \neq j}^{N}C({\mathbf{x}}_i^{{m}}, {\mathbf {x}}_j^{{n}})=0]\\
     &= (1-r)^2N^2\cdot (1-\mathbb{P}(y_i^m=y_j^n)),
\end{split}
\end{equation}
where $\mathbb{P}(y_i^m=y_j^n)$ represents the probability that $\mathbf{x}_i^{m}$ and $\mathbf{x}_j^{n}$ belong to the same semantic cluster. If instances are uniformly distributed across $K$ clusters, $\mathbb{P}(y_i^m=y_j^n)=\frac{1}{K}$.

The objective function $f_{cc}$ is further revised, and its expectation is as follows:
\begin{equation}
\begin{split}
    &f_{cc} = \sum\limits_{m\neq n}^{V}\{\sum\limits_{i}^{N_{cp}} [\mathcal{\rho}^+(\mathbf{x}_i^{m},\mathbf {x}_j^{n}) -\sum\limits_{j \neq i}^{N_{cn}} \mathcal{\rho}^-( \mathbf{x}_i^{{m}}, \mathbf{x}_j^{n})]-\delta^{m,n}\},\\
    &\mathbb{E}[f_{cc}]= V(V-1)\{N_{cp} \cdot \mathbb{E}[\rho^+]
    -N_{cn} \cdot \mathbb{E}[\rho^-]-\mathbb{E}[\delta^{m,n}]\},
\end{split}
\end{equation}

\begin{itemize}
\item To ensure cluster semantic and distributions consistency, the cluster-level paradigm needs to optimize error term $\mathbb{E}[\delta^{m,n}]$. However, $\mathbb{E}[\delta^{m,n}]$ cannot be entirely eliminated and can only be minimized, which inevitably degrades the model's performance.
\item Furthermore, the missing rate $r$ disrupts the uniformity of the original cluster distribution ($\mathbb{P}(y_i^m=y_j^n)$ is no longer equal to $\frac{1}{K}$), thereby introducing both false negative and false positive noise in $N_{cp}$ and $N_{cn}$. This perturbation consequently exacerbates the degree of prototype and distribution shifts. As a result, $\mathbb{E}[\delta^{m,n}]$ will increase with $r$. \item Meanwhile, due to $\delta^{m,n} \propto K$ and $\mathbb{E}[f_{cc}] \propto V(V-1)$, an excessive number of clusters and views can cause $\mathbb{E}[\delta^{m,n}]$ to surge, significantly increasing the difficulty of optimization.
\end{itemize}

\textbf{Semantic-level paradigms:} Define the quantities of semantic-level positive and negative pairs:
\begin{equation}
    \begin{split}
    N_{sp} &= \mathbb{E} \left[ \sum_{i,j}^N S(\mathbf{x}_i^m, \mathbf{x}_j^n) = 1 \right] \\
    &= (1 - r)^2 N^2 \cdot \mathbb{P}(y_i^m = y_j^n),\\
    N_{sn} &= \mathbb{E} \left[ \sum_{i\neq j}^N S(\mathbf{x}_i^m, \mathbf{x}_j^n)
    = 0 \right] \\
    &= (1 - r)^2 N^2 \cdot \left( 1 - \mathbb{P}(y_i^m = y_j^n) \right),
    \end{split}
\end{equation}    
where \(\mathbb{P}(y_i^m = y_j^n)\) still represents the same-cluster probability of cross-view observations.

When $Y=S(\mathbf{x}_i^{{m}},\mathbf{x}_j^{{n}})$, the objective of semantic-level paradigms $f_{sc}$ is formulated as:
\begin{equation}
\begin{split}
    f_{sc} = \sum\limits_{m\neq n}^{V}\sum\limits_{i}^{N_{sp}} [\mathcal{\rho}^+(\mathbf{x}_i^{m},\mathbf {x}_j^{n}) -\sum\limits_{j \neq i}^{N_{sn}} \mathcal{\rho}^-( \mathbf{x}_i^{{m}}, \mathbf{x}_j^{n})].
\end{split}
\end{equation}

Compared with IC in False negative noise mitigation: semantic-level positive pairs $N_{sp}$ are defined as $S(\mathbf{x}_i^m, \mathbf{x}_j^n) = 1$, and its false negative noise $\epsilon_{\text{sc}}$ is quantified as:
\begin{equation}
\begin{split}
   \epsilon_{\text{sc}} = \mathbb{P}(\arg\max\limits_{k} \mathcal{\rho}(\mathbf{x}_i^{m}, \mathbf{c}_k) \neq \arg\max\limits_{k} \mathcal{\rho}(\mathbf{x}_j^{n}, \mathbf{c}_k) \mid C=1)
\end{split}
\end{equation}

\begin{itemize}
    \item $N_{sp}$ are constructed through consensus prototypes $\mathbf{C}$, avoiding cross-view matching: \[
   \epsilon_{\text{sc}} = \mathbb{P}(S(\mathbf{x}_i^m, \mathbf{x}_j^n) = 0 \mid C({\mathbf {x}}_i^{{m}}, {\mathbf {x}}_j^{{n}})=1) \approx 0
    \]
    Therefore, $N^{sc}_{fn}\propto \epsilon_{\text{sc}}\approx0$, 
    \item As the prototypes \( \mathbf{C} \) are optimized, the distance between different cluster prototypes \(\|\mathbf{c}_k - \mathbf{c}_{k'}\|\) increases, causing \(\mathbb{P}(\cdot) \propto \exp(-\|\mathbf{c}_k - \mathbf{c}_{k'}\|^2/\sigma^2)\) to decay exponentially. This drives \( N_{fn}^{sc} \to 0 \).

\end{itemize}

Compared with CC in Cluster 
Consistency Errors optimization: According to Definition 3, semantic-level paradigms enforces all views to share the same set of cluster prototypes \( \mathbf{C} \), fundamentally eliminating cross-view cluster semantic ambiguity. This is specifically manifested as:
\begin{itemize}
    \item Cross-view Semantic Consistency of shared Prototypes: \(\forall m, n, \mathbf{c}_k^m = \mathbf{c}_k^n = \mathbf{c}_k \), directly eliminating cluster semantic error \(\delta_{se}^{m,n}\) (i.e., \(\delta_{se}^{m,n} = 0\)).
    \item Implicit Constraint on Distribution Discrepancy: The shared prototypes project data from each view into a common space through the mapping function \(\psi(\cdot)\), causing the distribution discrepancy \(\delta_{st}^{m,n}\) to be constrained by the embedding distance \(\delta_{st}^{m,n} \propto \|\psi(\mathbf{X}^m) - \psi(\mathbf{X}^n)\|^2 \to 0 \), which is automatically minimized during optimization.
     \item False Positive/Negative Suppression: Due to sharing a set of semantic prototypes, the estimation of \(\mathbb{P}(y_i^m = y_j^n)\) remains $\frac{1}{K}$ unaffected by the view missing rate \(r\) (compared to \(\mathbb{P}(y_i^m = y_j^n) \neq 1/K\) in Cluster-level paradigms), thereby avoiding false negatives and false positives.
\end{itemize}

The objective function $f_{sc}$ can be formally expressed in expectation form as:
\begin{equation}
    \mathbb{E}[f_{sc}] = V(V-1)\{N_{sp} \cdot \mathbb{E}[\rho^+]
        -N_{sn} \cdot \mathbb{E}[\rho^-]\}.
\end{equation}
\begin{itemize}
    \item {Confidence and Robustness for Noise $\epsilon$ and Error $\delta$}: Compared to instance-level paradigms (containing explicit noise term $(1 + \epsilon)\mathbb{E}[\rho^-]$) and cluster-level paradigms (containing non-eliminable $\mathbb{E}[\delta^{m,n}]$), the semantic-level objective has no additional noise and error terms, and $N_{fn}^{sc}$, $N_{fp}^{sc}$ decays during optimization.
    \item {Confidence and Robustness for Missing Rate $r$}: Due to the shared prototype constraint, the ratio between $N_{sp}$ and $N_{sn}$ remains stable ($\mathbb{P}(y_i^m = y_j^n) = 1/K$). Even with high $r$, the objective function can still accurately model the cluster structure. 
\end{itemize}
\end{proof}

\subsection{Proof of Theorem 2}\label{sec22}
\begin{theorem}\label{thm:Theorem1} 
Since Paired observations $ (\overline{\mathbf {x}}_i^{{m}},\overline{\mathbf {x}}_i^{n})$ inherently satisfy instance- and cluster-level consistency, they can achieve semantic consensus via a shared set of prototypes $\mathbf{C}$.
\end{theorem}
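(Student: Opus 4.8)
The plan is to (i) verify the hypothesis directly from the definitions, (ii) exhibit an explicit shared prototype set $\mathbf{C}$ as a witness, and (iii) argue that the contrastive-clustering objective drives the two cross-view assignments of each pair to the same index, which is precisely what Definition~3 requires. First I would dispatch the hypothesis. For any paired observation $(\overline{\mathbf{x}}_i^{m},\overline{\mathbf{x}}_i^{n})\in\overline{\mathbf{X}}^{m,n}$ the two entries index the \emph{same} underlying instance $\mathbf{x}_i$, so the matching condition $i=i$ gives $I(\overline{\mathbf{x}}_i^{m},\overline{\mathbf{x}}_i^{n})=1$ by Definition~1; and since a single instance lies in exactly one ground-truth cluster $k^{\star}$, both observations inherit membership in $k^{\star}$, so $C(\overline{\mathbf{x}}_i^{m},\overline{\mathbf{x}}_i^{n})=1$ by Definition~2. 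Hence the pair is both instance- and cluster-consistent with no extra assumptions.

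Second, I would construct the witness $\mathbf{C}$. Take $\mathbf{C}=\{\mathbf{c}_k\}_{k=1}^{K}$ to be the $k$-means centroids on the consensus representations $\mathbf{Z}$ obtained through the fusion map $\mathbb{T}$: for a paired instance one has $\mathbf{z}_i = w_i^{m}\mathbf{z}_i^{m}+w_i^{n}\mathbf{z}_i^{n}$ with $w_i^{m}+w_i^{n}=1$, so $\mathbf{z}_i$ is a single point carrying both views' information and falls in exactly one Voronoi cell, say that of $\mathbf{c}_{k^{\star}}$. This same $\mathbf{C}$ is used identically by every view in the soft-assignment rule $p_{i,k}^{v}=\mathrm{softmax}_k\!\big(\mathbf{h}_i^{v\top}\mathbf{c}_k/\tau\big)$, where $\mathbf{h}_i^{v}$ is the semantic representation of $\mathbf{x}_i^{v}$, so $\rho(\mathbf{x}_i^{v},\mathbf{c}_k)$ is realized (up to a monotone transform) by $\mathbf{h}_i^{v\top}\mathbf{c}_k$ and the map $v\mapsto\arg\max_k \rho(\mathbf{h}_i^{v},\mathbf{c}_k)=\arg\max_k p_{i,k}^{v}$ is well defined for both $m$ and $n$. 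It then remains only to show these two arg-maxes agree.

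Third — the crux — I would show that the swapped knowledge-distillation loss $\ell_{cc}^{m,n}$ together with the consensus-label (optimal-transport) step forces $\mathbf{p}_i^{m}=\mathbf{p}_i^{n}$ at its minimizer. The summand $\ell_{kd}(\mathbf{H}^{m},\mathbf{Q}^{n})$ is the cross-entropy of $\mathbf{p}_i^{m}$ against the target $\mathbf{q}_i^{n}$ (and symmetrically with $m\leftrightarrow n$), which attains its entropy lower bound iff $\mathbf{p}_i^{m}=\mathbf{q}_i^{n}$ and $\mathbf{p}_i^{n}=\mathbf{q}_i^{m}$. Since each $\mathbf{q}_i^{v}$ is the entropy-regularized, balance-constrained sharpening of $\mathbf{p}_i^{v}$ produced by the $\arg\max$ over $\mathcal{Q}$, substitution makes $\mathbf{p}_i^{m}$ equal to a sharpening of $\mathbf{p}_i^{n}$ and vice versa, whose only self-consistent fixed point compatible with the balance polytope is that both distributions concentrate (near-one-hot) on a common index; by the construction of $\mathbf{C}$ that index is $k^{\star}$, because $\mathbf{z}_i$ (hence the view codes $\mathbf{z}_i^{v}$ from which $\mathbf{h}_i^{v}$ is trained) sits in the cell of $\mathbf{c}_{k^{\star}}$. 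Therefore $\arg\max_k \rho(\mathbf{h}_i^{m},\mathbf{c}_k)=\arg\max_k \rho(\mathbf{h}_i^{n},\mathbf{c}_k)=k^{\star}$, i.e. $S(\overline{\mathbf{x}}_i^{m},\overline{\mathbf{x}}_i^{n})=1$ by Definition~3; the extension to $V>2$ is immediate since the full loss sums $\ell_{cc}^{m,n}$ over all ordered view pairs, propagating the agreement to every view at once.

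I expect the main obstacle to be making the fixed-point claim in the third step rigorous rather than heuristic: one must argue that the joint minimization over network parameters, prototypes, and the balanced pseudo-labels $\mathbf{Q}$ actually \emph{realizes} $\mathbf{p}_i^{m}=\mathbf{p}_i^{n}$, not merely that this configuration has zero distillation loss. This is where I would either invoke the MVC consistency-and-complementarity assumption (so that a shared, well-separated $\mathbf{C}$ discriminating all $K$ clusters exists and is attainable), or retreat to the weaker reading the theorem already hedges — that such a $\mathbf{C}$ is \emph{feasible} — since the inherited relations $I=1$ and $C=1$ guarantee the pair carries a single, unambiguous semantic label for any such $\mathbf{C}$ to latch onto. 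A secondary technical point is controlling the entropy regularizer $\alpha\mathcal{R}(\mathbf{Q})$ and the constraint set $\mathcal{Q}$: one should check that for $\alpha$ small enough the sharpening preserves the arg-max, so the balance constraint perturbs the pseudo-labels without moving their mode off $k^{\star}$.
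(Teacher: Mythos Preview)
Your first step (dispatching $I=1$ and $C=1$ directly from Definitions~1 and~2) matches the paper exactly. From that point on, however, you take a genuinely different and considerably heavier route.

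The paper's argument for the semantic-consensus step is purely logical and does not touch the training machinery at all. Having established $C(\overline{\mathbf{x}}_i^{m},\overline{\mathbf{x}}_i^{n})=1$, it simply notes that (i) the prototype set $\mathbf{C}$ is globally shared by construction, and (ii) if $\arg\min_k\rho(\overline{\mathbf{x}}_i^{m},\mathbf{c}_k)=k$ but $\arg\min_k\rho(\overline{\mathbf{x}}_i^{n},\mathbf{c}_k)=k'\neq k$, then the two observations would be placed in different clusters, contradicting $C=1$. That contradiction is the whole proof --- no swapped distillation, no Sinkhorn pseudo-labels, no fixed-point analysis.

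Your route instead argues that the minimizer of $\ell_{cc}^{m,n}$, coupled with the entropy-regularized optimal-transport step, drives $\mathbf{p}_i^{m}$ and $\mathbf{p}_i^{n}$ to a common near-one-hot vector. This is a stronger, mechanism-level statement: it explains \emph{how} training could produce consensus rather than merely that consensus is logically consistent with the premises. The cost is exactly what you flag as the main obstacle --- making the fixed-point claim rigorous and controlling $\alpha$ so the sharpening preserves the mode. None of that is needed for the theorem as stated (``can achieve''), which is why the paper's contradiction argument suffices. Your fallback to ``the weaker reading'' is in fact the paper's entire proof; you could have stopped there.
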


\begin{proof}
\textbf{Instance-level Consistency:} 
According to Definition 1, paired observations $ (\overline{\mathbf {x}}_i^{{m}},\overline{\mathbf {x}}_i^{n})$ satisfy the condition that both are cross-view observations of the same instance $\mathbf{x}_i$, thus they are instance-level consistency  $I\bigl(\overline{\mathbf{x}}_i^m, \overline{\mathbf{x}}_i^n\bigr) = 1.$
Two observations essentially belong to the same underlying instance, with only view-specific noise or modality discrepancies causing observational differences.

\textbf{Cluster-level Consistency:}
$\overline{\mathbf{x}}_i^m$ and $\overline{\mathbf{x}}_i^n$ are cross-view observations of the same instance, they must belong to the same cluster. According to Definition 2, paired observations $ (\overline{\mathbf {x}}_i^{{m}},\overline{\mathbf {x}}_i^{n})$ are instance-level consistency  $C\bigl(\overline{\mathbf{x}}_i^m, \overline{\mathbf{x}}_i^n\bigr) = 1.$
This further ensures that, in addition to being similar in features, these two observations are also consistent in their cluster structure, indicating that both are grouped into the same semantic cluster across different views.

\textbf{Semantic-level Consistency:}
According to Definition 3, semantic-level consensus requires:
\begin{itemize}
    \item {Shared Cluster Prototypes}: All observations share the same set of prototypes \(\mathbf{C} = \{c_k\}_{k=1}^{K}\).
    \item {Consistent Prototype Assignment}: \(\arg\min\limits_{k} \rho(\overline{x}_i^m, c_k) = \arg\min\limits_{k} \rho(\overline{x}_i^n, c_k)\).
\end{itemize}

Paired observations $ (\overline{\mathbf {x}}_i^{{m}},\overline{\mathbf {x}}_i^{n})$ satisfy the following conditions:
\begin{itemize}
    \item {Condition 1}: Since \( \mathbf{C} \) is globally shared, observations from all views are assigned based on the same set of prototypes.
    \item{Condition 2}:
    Assume the nearest prototype for \(\overline{\mathbf{x}}_i^m\) is \( \mathbf{c}_k \):
    \[
    \arg \min_{k} d(\overline{\mathbf{x}}_i^m,\mathbf{c}_k) = k.
    \]
    Since \(\overline{\mathbf{x}}_i^m\) and \(\overline{\mathbf{x}}_i^n\) belong to the same cluster \( \mathbf{c}_k \) (CC), and prototype \(\mathbf{c}_k \) is the central representation of this cluster, the nearest prototype for \(\overline{\mathbf {x}}_i^n\) should also be \( \mathbf{c}_k \).
    Otherwise, if the nearest prototype for \(\overline{\mathbf{x}}_i^n\) is \( \mathbf{c}_{k'} \) (\( k' \neq k \)), it would contradict the cluster consistency (CC). Therefore, it must satisfy:
    \[
    \arg \min\limits_{k} \rho(\overline{\mathbf {x}}_i^m, \mathbf{c}_k) = \arg \min\limits_{k} \rho(\overline{\mathbf {x}}_i^n, \mathbf{c}_k) = k.\]
\end{itemize}

The conditions of SC all hold. According to Definition 3,  the paired observations $ (\overline{\mathbf {x}}_i^{{m}},\overline{\mathbf {x}}_i^{n})$  have reached semantic-level consensus $S(\mathbf{x}_i^m, \mathbf{x}_i^n) = 1.$

\end{proof}

\section{Appendix C: Experiments}
\subsection{Experimental Settings}\label{sec21}

\textbf{Datasets.} From the perspective of clustering task complexity in the number of clusters, views, feature dimensions, and samples,  six widely applied public datasets are selected for experiments:
\begin{table}[!htbp]
\caption{Multi-view benchmark datasets in experiments.}
\label{tab:dataset}
\vspace{-8pt}
\centering
\resizebox{.49\textwidth}{!}{
\begin{tabular}{ccccc}
\toprule
Dataset         & Samples   & Clusters & Views & Dimensionality \\
\midrule
Yale \cite{zhu2019multi}       & 165       & 15       & 3     & 3304/6750/4096  \\
Caltech-5V\cite{MFLVC}        & 1400       & 7       & 5     & 1984/512/928/254/40  \\
NUSWIDEOBJ10\cite{huang2019multi}   & 6251       & 10      & 5     & 129/74/145/226/65   \\
ALOI-100\cite{du2021deep}       & 10800      & 100     & 4     & 77/13/64/125    \\
YouTubeFace10\cite{huang2023fast}   & 38654      & 10      & 4     & 944/576/512/640  \\
NoisyMNIST\cite{lin2021completer}      & 70000      & 10      & 2     & 784/784          \\
\bottomrule
\end{tabular}}
\label{tab1:dataset}
\end{table}

\textbf{Competitors.} To validate the effectiveness of our model from the perspective of consistency learning, imputation and alignment, we select seven state-of-the-art methods as competitors and summarize them in Table \ref{tab:sota methods} according to the consistency, imputation and alignment techniques they employ. 
\begin{itemize}
\item CPM-Net \cite{zhang2020deep}, encodes view-specific information into a common representation based on instance-level consistency and employs GANs to impute missing data across views.
\item COMPLETER \cite{lin2021completer}, maximize mutual information and minimize conditional entropy across views based on instance-level consistency to achieve contrastive representation learning and duel missing prediction.
\item DIMVC \cite{xu2022deep}, performs instance-level contrastive learning to construct a common representation, while aligns view-specific cluster assignments with the common assignment for decision fusion.
\item SURE \cite{yang2022robust}, introduces an adaptive distance threshold for positive-negative pairs to identify and penalize false negative pairs, enabling cluster-level contrastive learning. Additionally, it transfers the cluster relationships from other complete views to the missing views for imputation.  
\item ProImp \cite{li2023incomplete}, conducts instance-level contrastive learning and prototypes alignment to ensure consistency across views, then fills in missing observations by referring to prototypes in the missing views and the observation-prototype relationships in other complete views. 
\item ICMVC \cite{chao2024incomplete}, transfers graph relationships from complete views to missing views for imputation based on instance-level consistency. To further enhance consistency in cluster assignments, it constrains view-specific assignments to align with the high-confidence common representation.  
\item DIVIDE \cite{lu2024decoupled}, leverages random walks to progressively discover positive and negative pairs for cross-view cluster alignment. Through cluster-level contrastive learning, it explores cross-view consistency information to recover missing views.
\end{itemize}

\begin{table}[!htbp]
\caption{SOTA methods categorized by the types of techniques for consistency, imputation, and alignment.}
\label{tab:methods}
\centering
\resizebox{0.49\textwidth}{!}{
\begin{tabular}{cccc}
\toprule
Competitors & Consistency & Imputation & Alignment \\
\midrule
{CPM-Nets (TPAMI'20)}       & instance-level       & mutual information interaction & \textbackslash    \\
{COMPLETER (CVPR'21)}       & instance-level       & mutual information interaction & \textbackslash     \\
{DIMVC (AAAI'22)}          & instance-level       & \textbackslash & assignment-based      \\
{SURE (TPAMI'23)}          & cluster-level        & graph structure transfer & \textbackslash  \\
{ProImp (IJCAI'23)}        & instance-level       & sample-prototype relationship inheritance & prototype-based\\
{ICMVC (AAAI'24)}          & instance-level       & graph structure transfer & assignment-based\\
{DIVIDE (AAAI'24)}         & cluster-level        & mutual information interaction & \textbackslash \\
\bottomrule
\end{tabular}}
\end{table}

\subsection{Implementation details}\label{experi:ablation}
Our model consists of three modules: reconstruction (REC) module, consistency semantic learning (CSL) module and cluster semantic enhancement (CSE) module, as well as four components: encoder, decoder, contrastive clustering and graph clustering. The implementation details are as follows:
\begin{table}[!htbp]
\caption{FreeCSL architecture details.}
\label{tab:methods}
\vspace{-5pt}
\centering
\resizebox{0.49\textwidth}{!}{
\begin{tabular}{ccc}
\toprule
Component    & Layer    & Dimension  \\
\midrule
Encoder    & 4-layer MLPs   &  $\textit{view\_dim} \rightarrow 500 \rightarrow 500 \rightarrow 2000 \rightarrow 64 $ \\
Decoder    & 4-layer MLPs   &  $  64 \rightarrow 2000 \rightarrow 500 \rightarrow 500 \rightarrow \textit{view\_dim}$  \\
contrastive clustering    & 1-layer FC   &  $64 \rightarrow 64 $\\
graph clustering   & 2-layer GCNs and 1-layer FC   &  $64 \rightarrow 128 \rightarrow 64 \rightarrow \textit{cluster\_num} $\\
\bottomrule
\end{tabular}}
\label{tab1:methods}
\end{table}

\subsection{Competitiveness of FreeCSL}\label{sec21}
\begin{table*}[!htbp]
\centering
\caption{Clustering performance comparisons on Yale and NUSWIDEOBJ10. The best and second - best results are highlighted in \textcolor{red}{red} and \textcolor{blue}{blue}.}
\vspace{-5pt}
\renewcommand{\arraystretch}{0.9} 
\resizebox{1\textwidth}{!}{
\begin{tabular}{@{\hspace{10pt}}cc|ccc|ccc|ccc|ccc@{\hspace{10pt}}}
\toprule
\multirow{2}{*}{} & Missing rates & \multicolumn{3}{c|}{$r = 0.1$} & \multicolumn{3}{c|}{$r = 0.3$} & \multicolumn{3}{c|}{$r = 0.5$} & \multicolumn{3}{c}{$r = 0.7$} \\
\cmidrule(lr){2 - 14}
& Metrics & ACC (\%) & NMI (\%) & ARI (\%) & ACC (\%) & NMI (\%) & ARI (\%) & ACC (\%) & NMI (\%) & ARI (\%) & ACC (\%) & NMI (\%) & ARI (\%) \\
\midrule
\multirow{8}{*}{\rotatebox{90}{\textbf{Yale}}}
 & {CPM-Nets} & 54.24 & 60.82 & 37.55 & 56.66 & \textcolor{blue}{63.25} & \textcolor{blue}{40.22} & 53.34 & 59.58 & 34.22 & \textcolor{blue}{55.76} & \textcolor{blue}{58.20} & \textcolor{blue}{33.10} \\
 & {COMPLETER} & 29.09 & 37.10 & 2.36  & 20.30 & 29.61 & 1.20  & 16.97 & 26.08 & 0.97  & 10.91 & 16.88 & 0.32  \\
 & {DIMVC} & 27.91 & 32.27 & 7.94  & 23.12 & 26.79 & 2.85  & 21.76 & 26.92 & 3.46  & 34.32 & 39.47 & 11.21 \\
 & {SURE} & 42.30 & 49.57 & 22.12 & 38.91 & 43.90 & 13.61 & 34.30 & 39.07 & 9.08  & 25.33 & 33.79 & 3.92  \\
 &{ProImp} & \textcolor{blue}{57.98} & \textcolor{blue}{63.37} & \textcolor{blue}{38.95} & \textcolor{blue}{56.77} & 60.43 & 35.54 & \textcolor{blue}{55.96} & \textcolor{blue}{58.47} & \textcolor{blue}{32.91} & 52.12 & 56.11 & 30.19 \\
 & {ICMVC} & 49.70 & 61.52 & 30.64 & 50.30 & 61.62 & 31.49 & 46.67 & 58.91 & 27.84 & 42.42 & 54.43 & 23.21 \\
 & {DIVIDE} & 55.15 & 56.37 & 28.97 & 42.42 & 45.38 & 18.25 & 32.12 & 35.80 & 8.40  & 30.91 & 32.03 & 6.48  \\
 & \textbf{{Our}} &\textcolor{red}{ 62.42} & \textcolor{red}{65.87} & \textcolor{red}{45.71} & \textcolor{red}{60.00} & \textcolor{red}{64.73} & \textcolor{red}{41.33} & \textcolor{red}{60.00} & \textcolor{red}{63.14} & \textcolor{red}{40.85} & \textcolor{red}{60.61} & \textcolor{red}{60.30} & \textcolor{red}{37.69} \\ 
\midrule
\multirow{8}{*}{\rotatebox{90}{\textbf{NUSWIDEOBJ10}}}
 & CPM-Nets & 21.07 & 7.76 & 3.93 & 22.39 & 6.88 & 3.97 & 21.18 & 5.97 & 3.06 & 20.24 & 4.60 & 1.86 \\
 & COMPLETER & 23.38 & 8.16 & 2.58 & 21.36 & 9.90 & 4.61 & \textcolor{blue}{23.34} & 9.94 & 4.60 & \textcolor{blue}{23.48} & {10.96} & 5.37 \\
 & DIMVC & 22.51 & 11.46 & 6.61 & 21.33 & 11.89 & 5.43 & 21.26 & 10.64 & 5.03 & 23.04 & 10.40 & 5.68 \\
 & SURE & 20.87 & 10.90 & 5.39 & 21.83 & 11.24 & 6.07 & 21.93 & 11.14 & 5.92 & 22.78 & 10.54 & \textcolor{blue}{6.16} \\
 & ProImp & 22.81 & 11.31 & 5.85 & 22.88 & 11.40 & 6.11 & 23.26 & 11.20 & 6.20 & 22.55 & \textcolor{blue}{11.24} & 5.94 \\
 & ICMVC & 20.92 & 10.15 & 5.06 & 21.10 & 10.59 & 5.19 & 20.89 & 10.20 & 5.04 & 20.09 & 9.58 & 5.06 \\
 & DIVIDE & \textcolor{blue}{23.95} & \textcolor{blue}{12.97} & \textcolor{blue}{7.75} & \textcolor{blue}{24.24} & \textcolor{blue}{13.22} & \textcolor{blue}{7.67} & 22.81 & \textcolor{blue}{12.90} & \textcolor{blue}{7.43} & 23.45 & 10.78 & 6.05 \\
 & \textbf{Ours} & \textcolor{red}{25.61} & \textcolor{red}{16.31} & \textcolor{red}{8.75} & \textcolor{red}{24.68} & \textcolor{red}{15.14} & \textcolor{red}{7.95} & {\textcolor{red}{24.03}} & {\textcolor{red}{14.10}} &\textcolor{red} {{7.69}} & \textcolor{red}{{23.88}} & {\textcolor{red}{12.67}} & {\textcolor{red}{6.82}} \\
\bottomrule
\end{tabular}}
\label{tab:nud}
\end{table*}

To further enhance the credibility of our model, we supply a comparative experiment on Yale and NUSWIDEOBJ10 dataset, and present the comparison results, along with the visualizations based on ACC and NMI metric, in Table \ref{tab:nud} and Fig. \ref{fig:two performance}. As mentioned in our main text, FreeCSL surpasses all competitors and demonstrates more stable performance in various missing rates even on the small-sample dataset Yale, as FreeCSL avoids the errors associated with imputation and alignment.


\begin{figure}[t]  
    \centering  
    \begin{subfigure}[t]{0.49\linewidth}  
        \includegraphics[width=\linewidth]{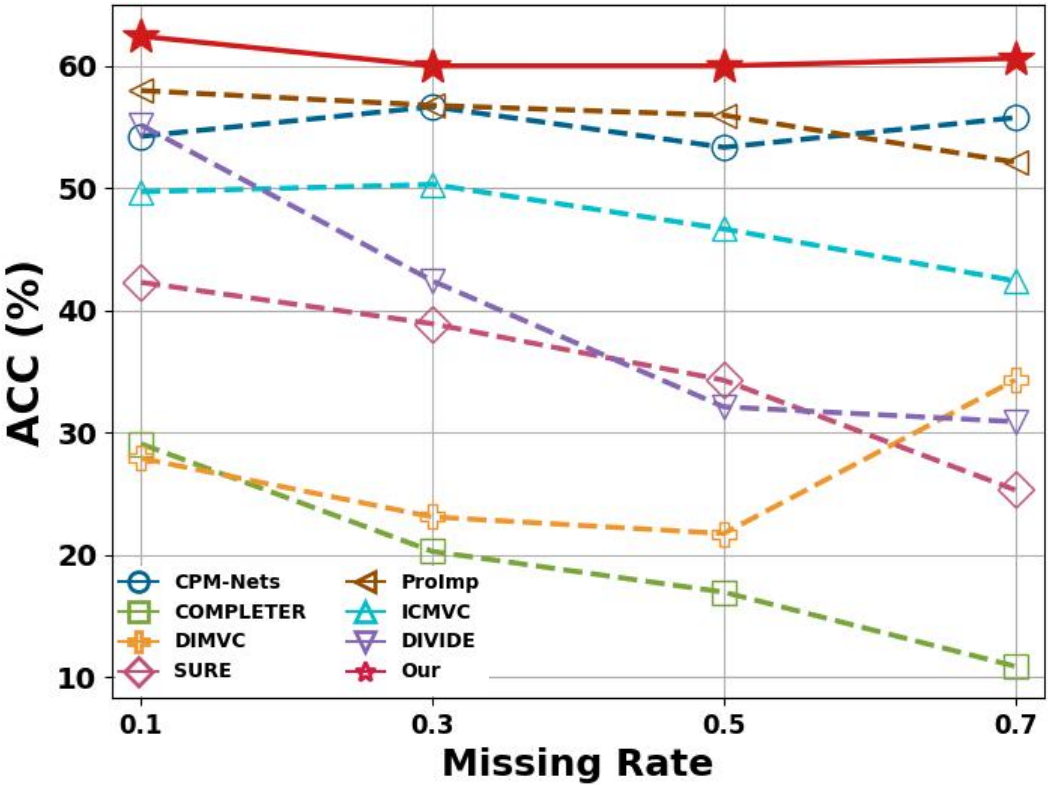}  
        \caption{Yale}  
    \end{subfigure}  
    \hfill
    \begin{subfigure}[t]{0.49\linewidth}  
        \includegraphics[width=\linewidth]{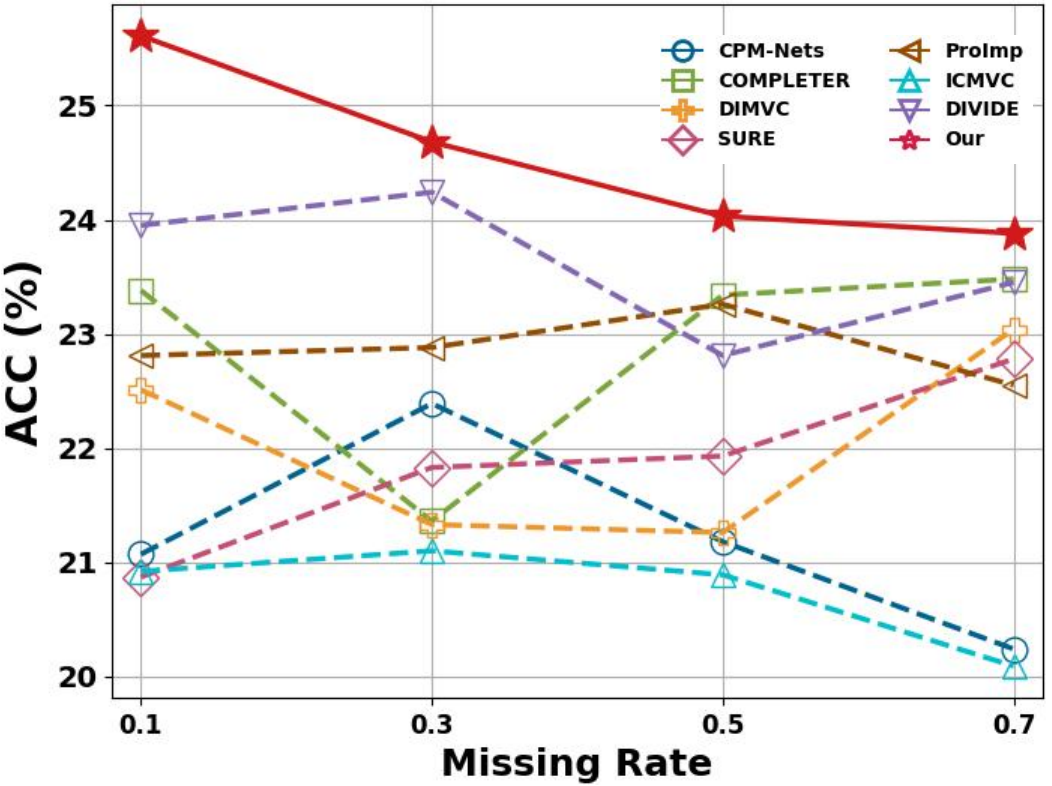}  
        \caption{NUSWIDEOBJ10}  
    \end{subfigure}
    \vspace{-5pt}
    \caption{Visualization for Table \ref{tab:nud} based on metric ACC.}
\label{fig:two performance}
\end{figure}

\subsection{Understanding FreeCSL}\label{sec21}
\begin{table*}[!htbp]
\centering
\caption{Ablation study on YoutubeFace10, NoisyMNIST, Yale and NUSWIDEOBJ10. $\checkmark$ denotes FreeCSL with
the component and the best results are highlighted in \textcolor{red}{red}.}
\vspace{-5pt}
\renewcommand{\arraystretch}{1} 
\resizebox{1\textwidth}{!}{
\begin{tabular}{@{\hspace{10pt}}cccc|ccc|ccc|ccc|ccc@{\hspace{10pt}}}
\toprule
\noalign{\vspace{-1pt}}
\multirow{2}{*}{} &\multicolumn{3}{c|}{Components} &\multicolumn{3}{c|}{$r=0.1$} & \multicolumn{3}{c|}{$r=0.3$} & \multicolumn{3}{c|}{$r=0.5$} & \multicolumn{3}{c}{$r=0.7$} \\
\noalign{\vspace{-2pt}} \cmidrule(lr){2-16}
\noalign{\vspace{-2pt}}
& $\mathcal{L}_{rec}$ & $\mathcal{L}_{cc}$ & $\mathcal{L}_{gc}$& ACC (\%) & NMI (\%) & ARI (\%) & ACC (\%) & NMI (\%) & ARI (\%) & ACC (\%) & NMI (\%) & ARI (\%) & ACC (\%) & NMI (\%) & ARI (\%) \\
\noalign{\vspace{-2pt}}\\\midrule

\multirow{4}{*}{\rotatebox{90}{\scriptsize\textbf{YouTube10}}}
 & \checkmark &   & &71.57 & 75.65 & 64.43 & 68.37 & 75.63 & 63.87 & 65.28 & 69.41 & 55.66 & 59.63 & 62.36 & 46.51 \\
& \checkmark &  \checkmark & &79.20 & 81.58 & 71.60 & 77.13 & 80.07 & 68.86 & 75.13 & 79.17 & 65.38 & 71.80 & 75.95 & 65.24 \\
& \checkmark &  &\checkmark&76.55 & 80.13 & 69.21 & 72.89 & 70.94 & 63.68 & 72.08 & 68.06 & 61.62 & 68.88 & 64.71 & 56.58 \\
& \checkmark &  \checkmark &\checkmark &{\textcolor{red}{82.93}} & {\textcolor{red}{83.55}} & {\textcolor{red}{74.76}} & {\textcolor{red}{80.77}} & {\textcolor{red}{81.46}} & {\textcolor{red}{71.62}} & {\textcolor{red}{80.19}} & {\textcolor{red}{81.07}} & {\textcolor{red}{71.37}} & {\textcolor{red}{76.62}} & {\textcolor{red}{81.31}} & {\textcolor{red}{73.22}} \\
\midrule
\multirow{4}{*}{\rotatebox{90}{\scriptsize\textbf{NoisyMNIST}}}
 & \checkmark &   &&33.45 & 26.44 & 16.81 & 25.25 & 14.80 & 8.01 & 23.52 & 15.46 & 7.46 & 24.17 & 15.46 & 6.68 \\
& \checkmark &  \checkmark & &{98.17} & {97.04} & {96.97} &{96.27} & {93.69} & {93.91} & {95.25} & {89.08} & {90.38} & 90.96 & 81.15 & 82.37 \\
& \checkmark &  &\checkmark&53.38 & 50.60 & 37.16 & 39.24 & 37.59 & 20.84 & 33.89 & 29.02 & 14.19 & 33.08 & 26.70 & 14.23 \\
& \checkmark &  \checkmark &\checkmark  & \textcolor{red}{{99.13}} & \textcolor{red}{{97.23}} & \textcolor{red}{{98.10}} & \textcolor{red}{{97.68}} & \textcolor{red}{{93.94}} & \textcolor{red}{{94.94}} & \textcolor{red}{{96.04}} & \textcolor{red}{{89.81}} & \textcolor{red}{{91.48}} & {\textcolor{red}{92.19}} & {\textcolor{red}{82.50}} & {\textcolor{red}{83.56}} \\
\midrule
\multirow{4}{*}{\rotatebox{90}{\scriptsize\textbf{Yale}}}
 & \checkmark &   &&50.91 & 60.79 & 36.40 & 44.85 & 50.61 & 25.49 & 34.55 & 44.81 & 17.03 & 33.33 & 40.25 & 12.86 \\
& \checkmark &  \checkmark & &55.15 & 58.99 & 34.75 & 56.97 & 59.11 & 36.01 & 56.97 & 61.71 & 61.71 & 56.36 & 59.57 & 35.19 \\
& \checkmark &  &\checkmark&54.55 & 58.72 & 33.98 & 46.06 & 46.06 & 23.87 & 36.36 & 47.59 & 19.45 & 35.15 & 42.16 & 12.20 \\
& \checkmark &  \checkmark &\checkmark  &\textcolor{red} {62.42} & \textcolor{red} {65.87} & \textcolor{red} {45.71} & \textcolor{red} {60.00} & \textcolor{red} {64.73} & \textcolor{red} {42.14} & \textcolor{red} {60.00} & \textcolor{red} {63.14} & \textcolor{red} {40.85} & \textcolor{red} {60.61} & \textcolor{red} {60.30} & \textcolor{red} {37.69} \\
\midrule
\multirow{4}{*}{\rotatebox{90}{\scriptsize\textbf{NUSWIDEOBJ}}}
 & \checkmark &   &&19.32 & 5.69 & 2.70 & 18.00 & 3.80 & 1.57 & 17.65 & 2.96 & 0.57 & 19.20 & 3.57 & 0.45 \\
& \checkmark &  \checkmark & &23.68 & 16.13 & 8.50 & 23.56 & 14.84 & 7.62 & 23.36 & 13.63 & 7.29 & 22.86 & 12.60 & 6.50 \\
& \checkmark &  &\checkmark&23.23 & 9.67 & 4.98 & 23.63 & 8.21 & 4.95 & 20.83 & 5.67 & 2.96 & 20.22 & 6.10 & 2.42 \\
& \checkmark &  \checkmark &\checkmark  & \textcolor{red}{25.61} & \textcolor{red}{16.31} & \textcolor{red}{8.75} & \textcolor{red}{24.68} & \textcolor{red}{15.14} & \textcolor{red}{7.95} & {\textcolor{red}{24.03}} & {\textcolor{red}{14.10}} &\textcolor{red} {{7.69}} & \textcolor{red}{{23.88}} & {\textcolor{red}{12.67}} & {\textcolor{red}{6.82}} \\
\bottomrule
\end{tabular}}
\label{tab:three ablation study}
\end{table*}

\textbf{Ablation Study.} 
The proposed FreeCSL contains three modules: reconstruction (REC) module, cross-view consistency semantic learning (CSL) module, and within-view cluster semantic enhancement (CSE) module. To further verify the importance of each module, we conducted extra ablation experiments on YoutubeFace10, NoisyMNIST, Yale and NUSWIDEOBJ10 datasets as shown in Table \ref{tab:three ablation study}. With the REC module as the baseline, both CSE module and CSE module contribute significantly to the improved performance of all datasets. Furthermore, due to the synergistic effect of the three modules, our model exhibits more confident and stable performance across different missing rates compared to the ablation group.

\textbf{Imputation- and Alignment-free CSL.}
To demonstrate our model can learn semantic knowledge from view data and achieve consistent and reliable clustering assignments without imputation or alignment, we make efforts in two aspects: conducting imputation experiments and visualizing similarity matrices, both based on latent and semantic representations learned from YoutubeFace10, NoisyMNIST, and NUSWIDEOBJ10 datasets. 

\begin{table*}[!htbp]
\caption{Imputation- and alignment-free study on YoutubeFace10, NoisyMNIST, Yale and NUSWIDEOBJ10. ILR and ISR are filled by K-NN imputation via cross - view
graph for and semantic representations $\mathbf{Z}^{(v)}$, $\mathbf{H}^{(v)}$. The best results are highlighted in \textcolor{red}{red}.}
\label{tab:Three Imputation study}
\vspace{-5pt}
\renewcommand{\arraystretch}{1} 
\centering
\resizebox{0.99\textwidth}{!}{
\begin{tabular}{@{\hspace{10pt}}cc|ccc|ccc|ccc|ccc@{\hspace{10pt}}}
\toprule
\multirow{2}{*}{} & Missing rates & \multicolumn{3}{c|}{$r = 0.1$} & \multicolumn{3}{c|}{$r = 0.3$} & \multicolumn{3}{c|}{$r = 0.5$} & \multicolumn{3}{c}{$r = 0.7$} \\
\cmidrule(lr){2 - 14}
& Metrics & ACC (\%) & NMI (\%) & ARI (\%) & ACC (\%) & NMI (\%) & ARI (\%) & ACC (\%) & NMI (\%) & ARI (\%) & ACC (\%) & NMI (\%) & ARI (\%) \\
\midrule

\multirow{3}{*}{\rotatebox{90}{\scriptsize\textbf{YouTube10}}}
&ILR&82.66 & 82.79 & 74.20 & 80.46 & 81.18 & 71.54 & 80.37 & 81.28 & 71.58 & 73.68 & 75.70 & 63.39 \\
&ISR&82.72 & 82.86 & 72.69 & \textcolor{red}{81.07} & \textcolor{red}{82.63} & \textcolor{red}{72.82} & \textcolor{red}{80.63} & \textcolor{red}{81.67} & \textcolor{red}{72.00} & 73.91 & 75.84 & 63.81 \\
&FreeCSL&{\textcolor{red}{82.93}} & {\textcolor{red}{83.55}} & {\textcolor{red}{74.76}} & {{80.77}} & {{81.46}} & {{71.62}} & {{80.19}} & {{81.07}} & {{71.37}} & {\textcolor{red}{76.62}} & {\textcolor{red}{81.31}} & {\textcolor{red}{73.22}} \\
\midrule
\multirow{3}{*}{\rotatebox{90}{\tiny\textbf{NoisyMNIST}}}
&ILR&99.12 & 97.21 & 98.08 & \textcolor{red}{98.06} & \textcolor{red}{94.50} & \textcolor{red}{95.83} & 95.98 & 89.74 & 91.10 & 90.99 & 80.19 & 81.16 \\
&ISR&\textcolor{red}{99.15} & \textcolor{red}{97.31} & \textcolor{red}{98.15}& 97.83 & 93.86 & 95.28 & 95.80 & 89.23 & 90.98 & 90.69 & 79.76 & 80.57 \\
&FreeCSL&{{99.13}} & {{97.23}} & \textcolor{red}{{98.10}} & {{97.68}} & {{93.94}} & {{94.94}} & \textcolor{red}{{96.04}} & \textcolor{red}{{89.81}} & \textcolor{red}{{91.48}} & {\textcolor{red}{92.19}} & {\textcolor{red}{82.50}} & {\textcolor{red}{83.56}} \\
\midrule
\multirow{3}{*}{\rotatebox{90}{\textbf{Yale}}}
&ILR&55.15 & 61.63 & 37.54 & 56.36 & 62.72 & 40.53 & 53.33 & 59.89 & 35.05 & 50.30 & 55.21 & 29.07 \\
&ISR&58.18 & 60.84 & 37.37 & \textcolor{red}{60.00} & 64.26 & \textcolor{red}{42.48} & 56.97 & 60.33 & 36.39 & 52.73 & 55.56 & 29.91 \\
&FreeCSL&\textcolor{red} {62.42} & \textcolor{red} {65.87} & \textcolor{red} {45.71} & \textcolor{red} {60.00} & \textcolor{red} {64.73} & {42.14} & \textcolor{red} {60.00} & \textcolor{red} {63.14} & \textcolor{red} {40.85} & \textcolor{red} {60.61} & \textcolor{red} {60.30} & \textcolor{red} {37.69} \\
\midrule
\multirow{3}{*}{\rotatebox{90}{\tiny\textbf{NUSWIDEOBJ}}}
&ILR&24.09 & 15.29 & 7.48 & 24.50 & 14.06 & 7.39 & 22.32 & 12.67 & 5.79 & 22.78 & 11.39 & 5.32 \\
&ISR&24.22 & \textcolor{red}{16.44} & 8.53 & \textcolor{red}{25.07} & \textcolor{red}{15.26} & \textcolor{red}{8.33} & 23.93 & 13.79 & 7.13 & 22.45 & 11.83 & 6.16 \\
&FreeCSL&\textcolor{red}{25.61} & {16.31} & \textcolor{red}{8.75} & {24.68} & {15.14} & {7.95} & {\textcolor{red}{24.03}} & {\textcolor{red}{14.10}} &\textcolor{red} {{7.69}} & \textcolor{red}{{23.88}} & {\textcolor{red}{12.67}} & {\textcolor{red}{6.82}} \\
\bottomrule
\end{tabular}}
\end{table*}

Notably, both the latent and semantic representations $\{\mathbf{Z}^v\}_{v=1}^V, \{\mathbf{H}^v\}_{v=1}^V$ are outputs of our model after training. The latent representation $\mathbf{Z}^v$ refers to the output after the decoder but before the CSL module, while the semantic representation $\mathbf{H}^v$ has undergone nonlinear mapping through the CSL module. We impute the missing views for two sets $\{\mathbf{Z}^v\}_{v=1}^V$ and $\{\mathbf{H}^v\}_{v=1}^V$, with mean values based on the neighborhood relationships observed in complete view data. Finally, we perform K-means on consensus representations $\mathbf{Z}$ and $\mathbf{H}$ fused by the representation fusion manner $\mathbb{T}(\{\mathbf{Z}^v\}_{v=1}^V$, $\mathbb{T}(\{\mathbf{H}^v\}_{v=1}^V$ described in Section 2.3 of our main text.

In Table \ref{tab:Three Imputation study}, at small missing rates, our model performs comparably regardless of whether the missing data are imputed or not. As the missing rate increases and the available information for imputation decreases, our model without imputation exhibit superior robustness. Improper imputation introduces noise, while our model, combining the CSL and CSE modules, successfully captures semantic knowledge from view data (embedded in both latent and semantic representations) and leveraging the fusion method $\mathcal{T}(\cdot)$, effectively integrate the consistency and complementary information across views. Thus, our FreeCSL achieves excellent performance without incurring extra computational cost or suffering clustering accuracy loss arising from imputation.

We visualize the cosine similarity matrices of the latent representations $\{\mathbf{Z}^v\}_{v=1}^V$, semantic representations $\{\mathbf{H}^v\}_{v=1}^V$, and their consensus representations $\mathbf{Z}$, $\mathbf{H}$ learned from YoutubeFace10, NoisyMNIST, Yale and NUSWIDEOBJ10 datasets in Fig. \ref{fig:you}-\ref{fig:6}, further confirming the advantages of our model in consensus semantic learning. The experimental results on Four datasets commonly reflect two findings:
\begin{itemize}
    \item The similarity matrices of semantic representations, compared to latent ones, show a clearer and more uniform block structure along the diagonal. This indicates that the semantic representations, jointly optimized by  the CSL and CSE modules, are well-suited for clustering task.
    \item Our consensus prototype-based semantic learning and  consensus representation-based semantic clustering, effectively reduces entropy within clusters and enhances more confident assignments by integrating view-specific information.
\end{itemize}

\begin{figure}[htbp]
    \centering
    \begin{minipage}{0.5\textwidth}
        \begin{subfigure}{0.189\textwidth}
            \centering
            \includegraphics[width=\linewidth]{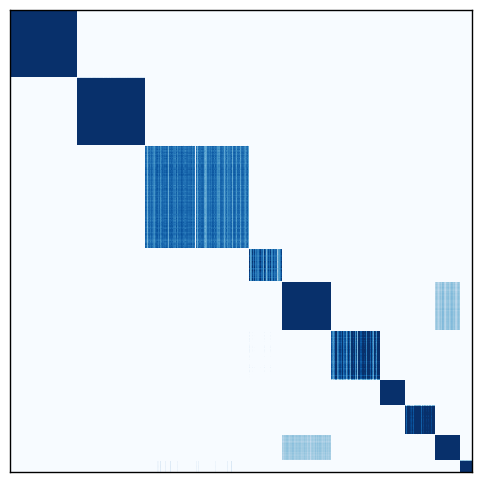}
            \caption{$\mathbf{Z}^{(1)}$}
        \end{subfigure}%
        \hfill
        \begin{subfigure}{0.189\textwidth}
            \centering
            \includegraphics[width=\linewidth]{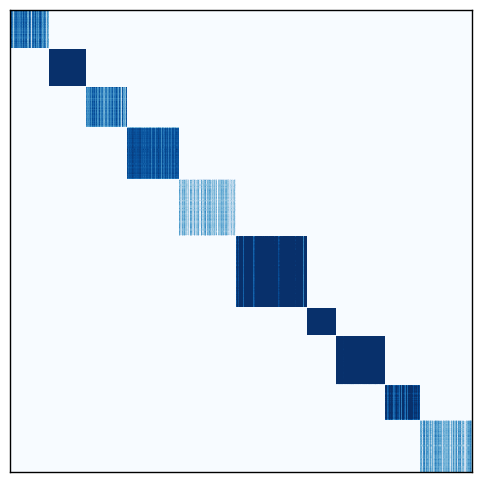}
            \caption{$\mathbf{Z}^{(2)}$}
        \end{subfigure}%
        \hfill
        \begin{subfigure}{0.189\textwidth}
            \centering
            \includegraphics[width=\linewidth]{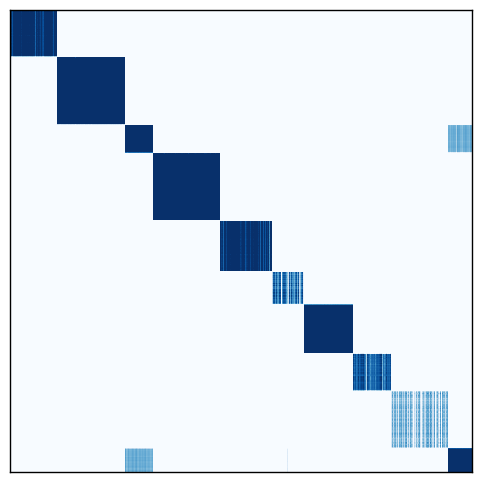}
            \caption{$\mathbf{Z}^{(3)}$}
        \end{subfigure}%
        \hfill
        \begin{subfigure}{0.189\textwidth}
            \centering
            \includegraphics[width=\linewidth]{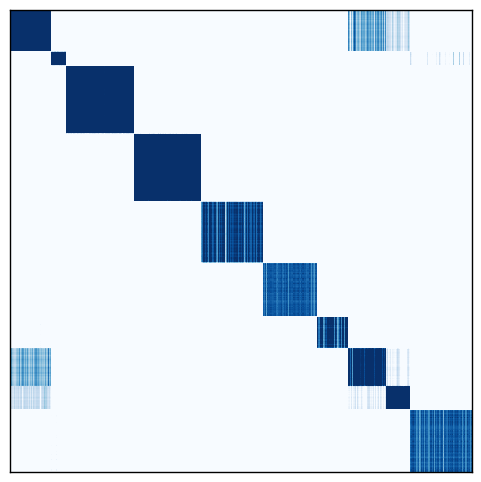}
            \caption{$\mathbf{Z}^{(4)}$}
        \end{subfigure}%
        \hfill
        \begin{subfigure}{0.19\textwidth}
            \centering
            \includegraphics[width=\linewidth]{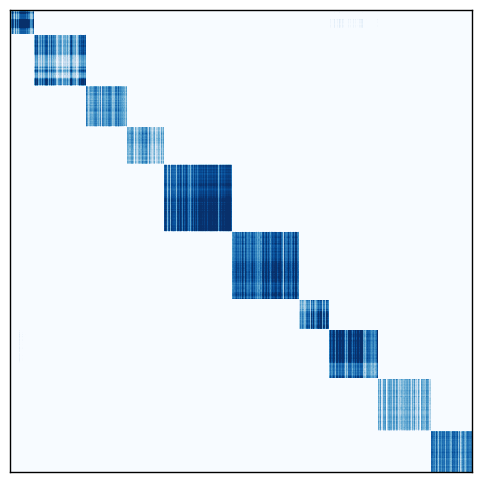}
            \caption{$\mathbf{Z}$}
        \end{subfigure}
    \end{minipage}
    \vspace{-10pt}
    \caption{Similarity matrices of $ \{\mathbf{Z}^{{v}}\}_{v=1}^{4}$, $\mathbf{Z}$ on YouTubeFace10 with $r=0.5$.}
    \label{fig:you}
\end{figure}
\begin{figure}[htbp]
    \centering
    \begin{minipage}{0.5\textwidth}
        \begin{subfigure}{0.19\textwidth}
            \centering
            \includegraphics[width=\linewidth]{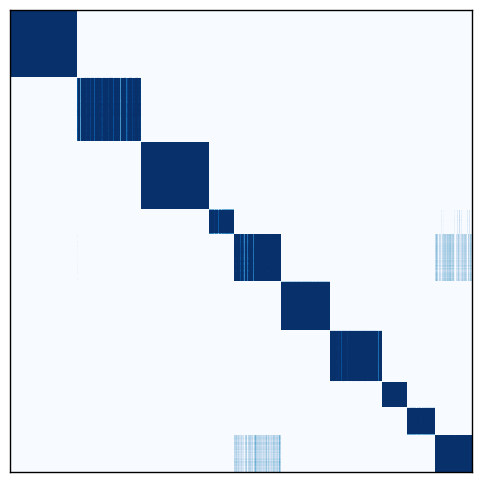}
            \caption{$\mathbf{H}^{(1)}$}
        \end{subfigure}%
        \hfill
        \begin{subfigure}{0.19\textwidth}
            \centering
            \includegraphics[width=\linewidth]{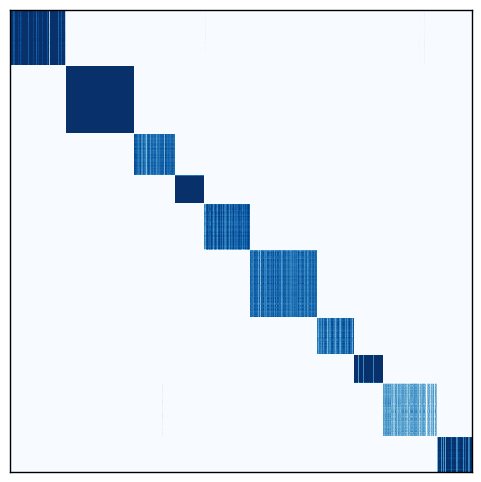}
            \caption{$\mathbf{H}^{(2)}$}
        \end{subfigure}%
        \hfill
        \begin{subfigure}{0.19\textwidth}
            \centering
            \includegraphics[width=\linewidth]{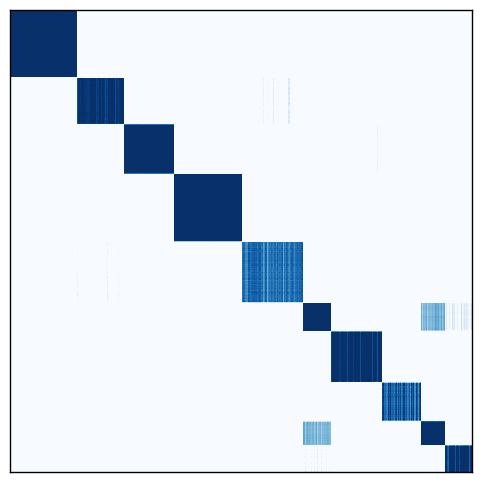}
            \caption{$\mathbf{H}^{(3)}$}
        \end{subfigure}%
        \hfill
        \begin{subfigure}{0.19\textwidth}
            \centering
            \includegraphics[width=\linewidth]{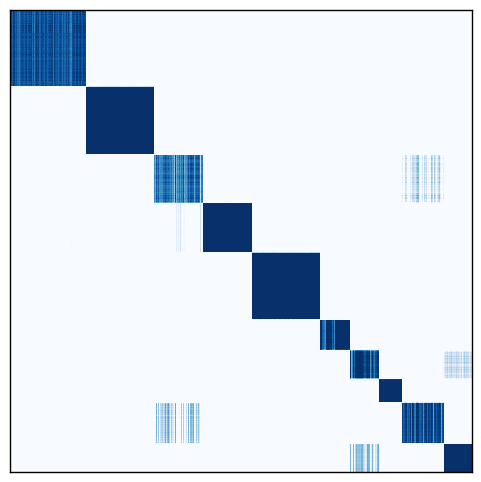}
            \caption{$\mathbf{H}^{(4)}$}
        \end{subfigure}%
        \hfill
        \begin{subfigure}{0.19\textwidth}
            \centering
            \includegraphics[width=\linewidth]{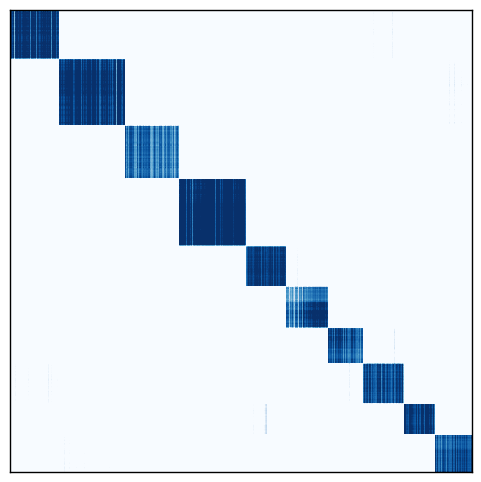}
            \caption{$\mathbf{H}$}
        \end{subfigure}
    \end{minipage}
    \vspace{-10pt}
    \caption{Similarity matrices of $ \{\mathbf{H}^{{v}}\}_{v=1}^{4}$, $\mathbf{H}$ on YouTubeFace10 with $r=0.5$.}
    \label{fig:two-row}
\end{figure}
\vspace{-2pt}
\begin{figure}[htbp]
    \centering
    \begin{minipage}{0.5\textwidth}
        \begin{subfigure}{0.15\textwidth}
            \centering
            \includegraphics[width=\linewidth]{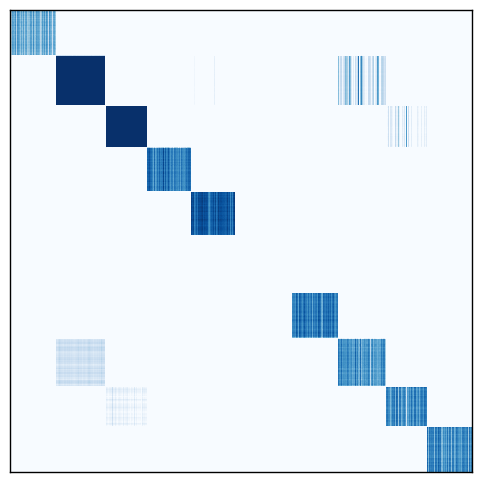}
            \caption{$\mathbf{Z}^{(1)}$}
        \end{subfigure}%
        \hfill
        \begin{subfigure}{0.15\textwidth}
            \centering
            \includegraphics[width=\linewidth]{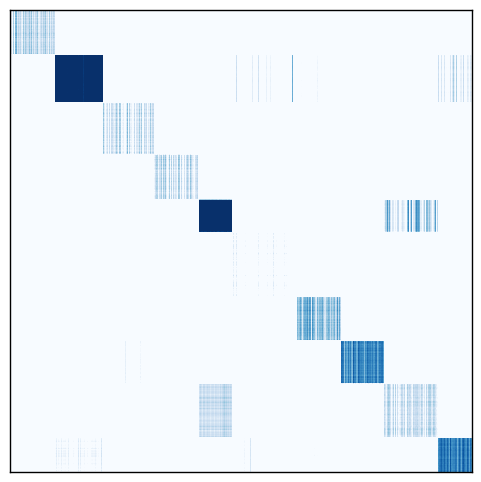}
            \caption{$\mathbf{Z}^{(2)}$}
        \end{subfigure}%
        \hfill
        \begin{subfigure}{0.15\textwidth}
            \centering
            \includegraphics[width=\linewidth]{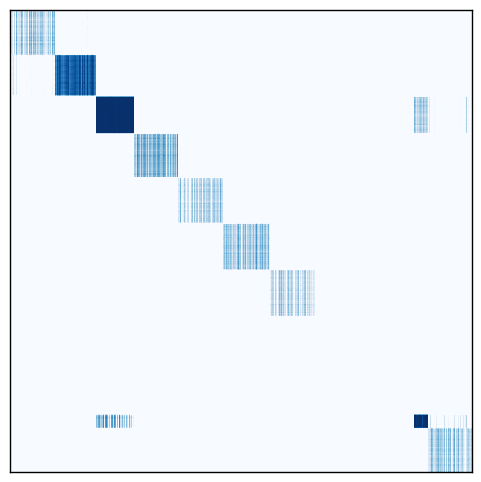}
            \caption{$\mathbf{Z}$}
        \end{subfigure}%
        \hfill
        \begin{subfigure}{0.15\textwidth}
            \centering
            \includegraphics[width=\linewidth]{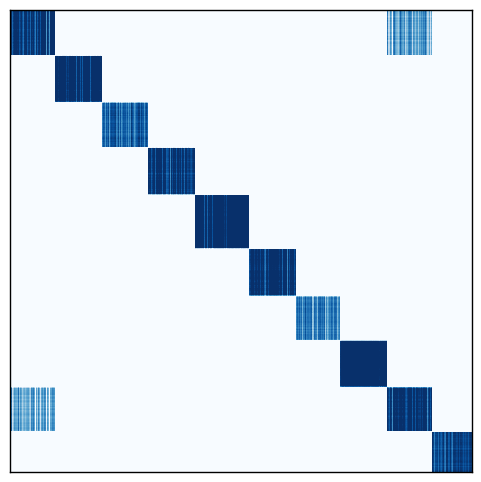}
            \caption{$\mathbf{H}^{(1)}$}
        \end{subfigure}%
        \hfill
        \begin{subfigure}{0.15\textwidth}
            \centering
            \includegraphics[width=\linewidth]{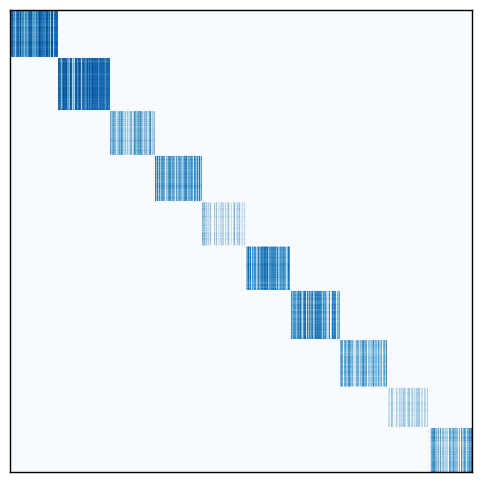}
            \caption{$\mathbf{H}^{(2)}$}
        \end{subfigure}%
        \hfill
        \begin{subfigure}{0.15\textwidth}
            \centering
            \includegraphics[width=\linewidth]{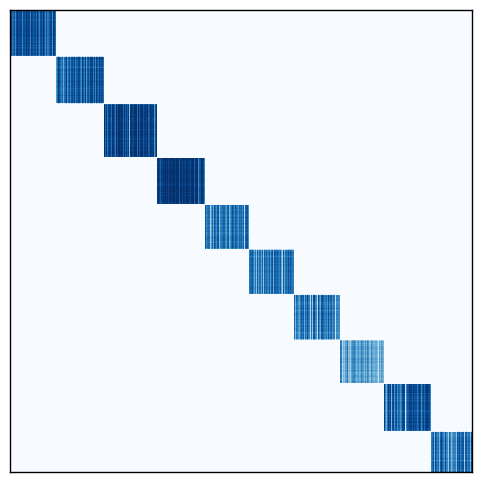}
            \caption{$\mathbf{H}$}
        \end{subfigure}
    \end{minipage}
    \vspace{-10pt}
    \caption{Similarity matrices of $ \{\mathbf{Z}^{{v}}\}_{v=1}^{2}$ and $\mathbf{Z}$,  $ \{\mathbf{H}^{{v}}\}_{v=1}^{2}$ and $\mathbf{H}$ on NoisyMNIST with $r=0.5$.}
    \label{fig:2}
\end{figure}
\vspace{-2pt}
\begin{figure}[htbp]
    \centering
    \begin{minipage}{0.5\textwidth}
        \begin{subfigure}{0.24\textwidth}
            \centering
            \includegraphics[width=\linewidth]{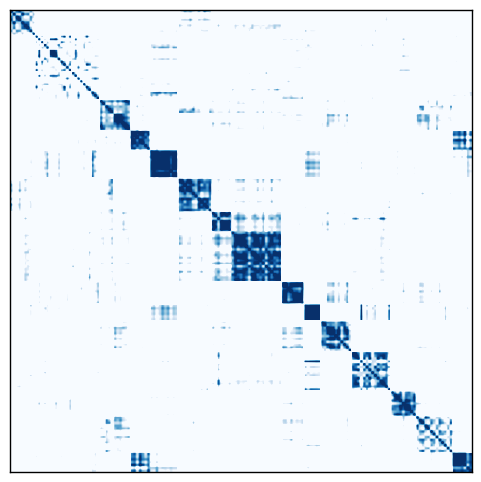}
            \caption{$\mathbf{Z}^{(1)}$}
        \end{subfigure}%
        \hfill
        \begin{subfigure}{0.24\textwidth}
            \centering
            \includegraphics[width=\linewidth]{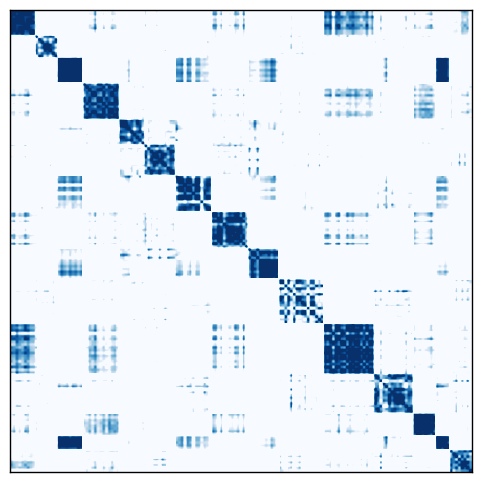}
            \caption{$\mathbf{Z}^{(2)}$}
        \end{subfigure}%
        \hfill
        \begin{subfigure}{0.24\textwidth}
            \centering
            \includegraphics[width=\linewidth]{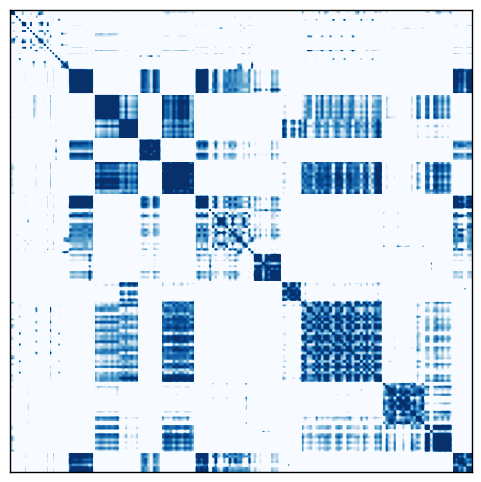}
            \caption{$\mathbf{Z}^{(3)}$}
        \end{subfigure}%
        \hfill
        \begin{subfigure}{0.24\textwidth}
            \centering
            \includegraphics[width=\linewidth]{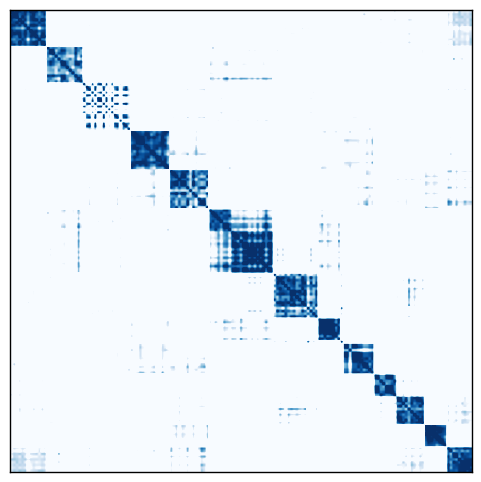}
            \caption{$\mathbf{Z}$}
        \end{subfigure}%
    \end{minipage}
    \vspace{-10pt}
    \caption{Similarity matrices of $ \{\mathbf{Z}^{{v}}\}_{v=1}^{3}$, $ \mathbf{Z}$ on Yale with $r=0.5$.}
    \label{fig:one-row-four}
\end{figure}
\vspace{-2pt}
\begin{figure}[htbp]
    \centering
    \begin{minipage}{0.5\textwidth}
        \begin{subfigure}{0.24\textwidth}
            \centering
            \includegraphics[width=\linewidth]{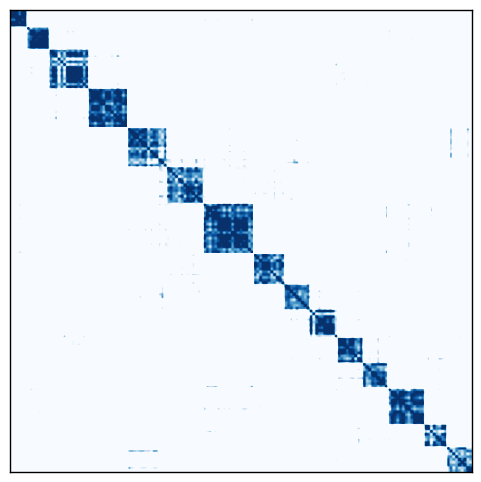}
            \caption{$\mathbf{H}^{(1)}$}
        \end{subfigure}%
        \hfill
        \begin{subfigure}{0.24\textwidth}
            \centering
            \includegraphics[width=\linewidth]{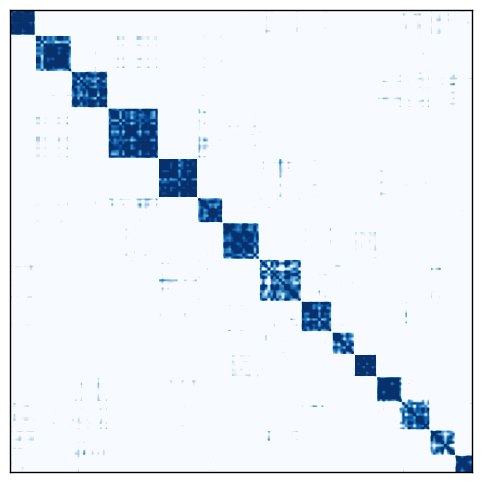}
            \caption{$\mathbf{H}^{(2)}$}
        \end{subfigure}%
        \hfill
        \begin{subfigure}{0.24\textwidth}
            \centering
            \includegraphics[width=\linewidth]{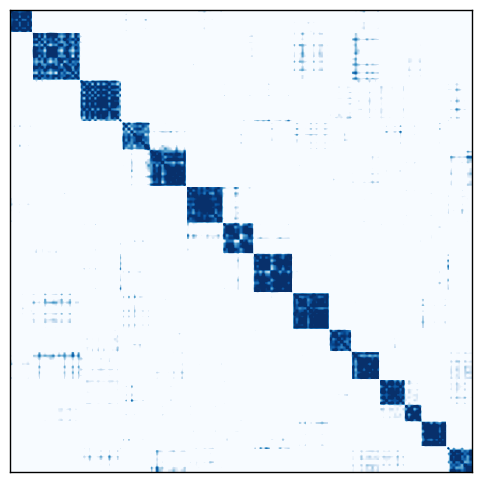}
            \caption{$\mathbf{H}^{(3)}$}
        \end{subfigure}%
        \hfill
        \begin{subfigure}{0.24\textwidth}
            \centering
            \includegraphics[width=\linewidth]{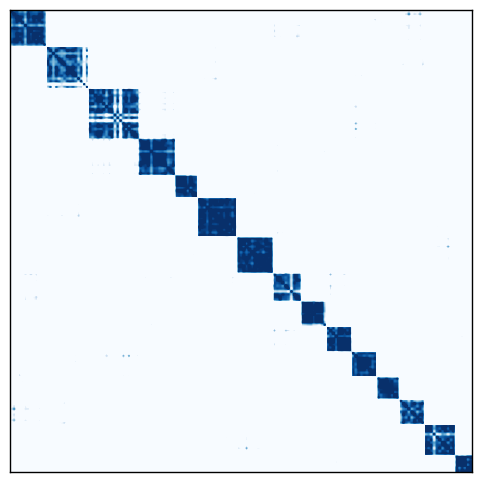}
            \caption{$\mathbf{H}$}
        \end{subfigure}%
    \end{minipage}
    \vspace{-10pt}
    \caption{Similarity matrices of $ \{\mathbf{H}^{{v}}\}_{v=1}^{3}$, $ \mathbf{H}$ on Yale with $r=0.5$.}
    \label{fig:one-row-four}
\end{figure}
\vspace{-2pt}
\begin{figure}[htbp]
    \centering
    \begin{minipage}{0.5\textwidth}
        \begin{subfigure}{0.15\textwidth}
            \centering
            \includegraphics[width=\linewidth]{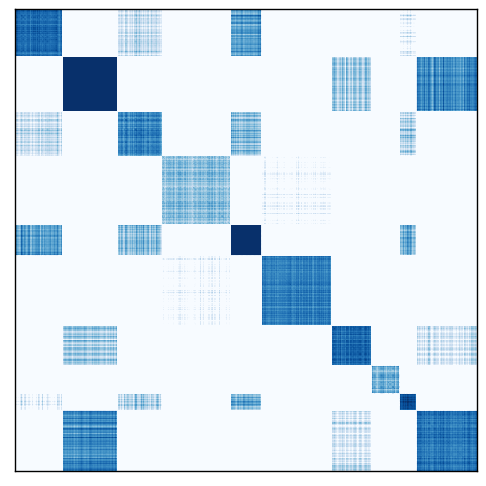}
            \caption{$\mathbf{Z}^{(1)}$}
        \end{subfigure}%
        \hfill
        \begin{subfigure}{0.15\textwidth}
            \centering
            \includegraphics[width=\linewidth]{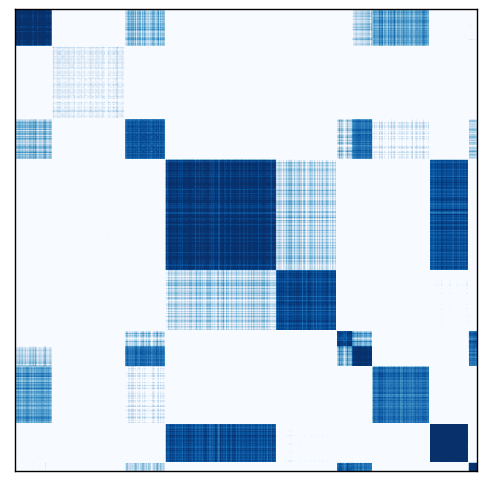}
            \caption{$\mathbf{Z}^{(2)}$}
        \end{subfigure}%
        \hfill
        \begin{subfigure}{0.15\textwidth}
            \centering
            \includegraphics[width=\linewidth]{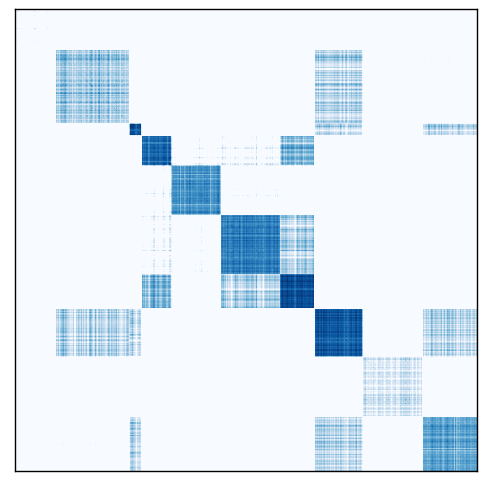}
            \caption{$\mathbf{Z}^{(3)}$}
        \end{subfigure}%
        \hfill
        \begin{subfigure}{0.15\textwidth}
            \centering
            \includegraphics[width=\linewidth]{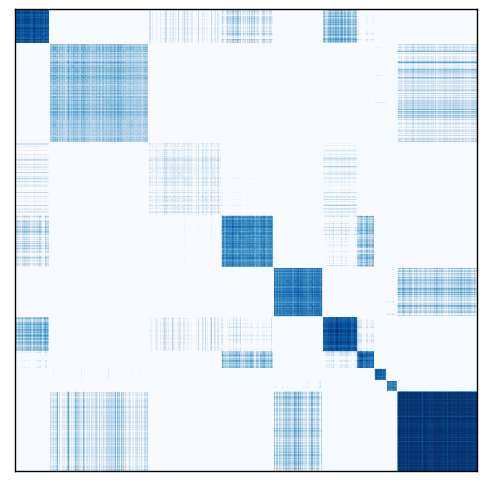}
            \caption{$\mathbf{Z}^{(4)}$}
        \end{subfigure}%
        \hfill
        \begin{subfigure}{0.15\textwidth}
            \centering
            \includegraphics[width=\linewidth]{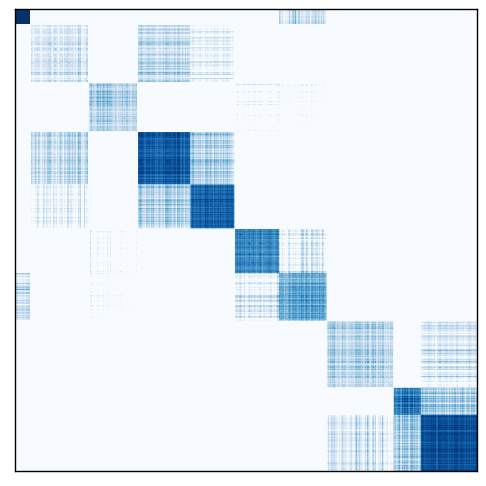}
            \caption{$\mathbf{Z}^{(5)}$}
        \end{subfigure}%
        \hfill
        \begin{subfigure}{0.15\textwidth}
            \centering
            \includegraphics[width=\linewidth]{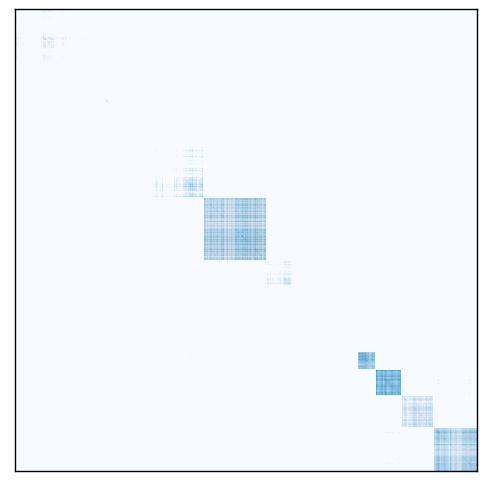}
            \caption{$\mathbf{Z}$}
        \end{subfigure}
    \end{minipage}
    \vspace{-10pt}
    \caption{Similarity matrices of $ \{\mathbf{Z}^{{v}}\}_{v=1}^{5}$ and $\mathbf{Z}$ on NUSWIDEOBJECT10 with $r=0.5$.}
    \label{fig:one-row}
\end{figure}
\vspace{-2pt}
\begin{figure}[htbp]
    \centering
    \begin{minipage}{0.5\textwidth}
        \begin{subfigure}{0.15\textwidth}
            \centering
            \includegraphics[width=\linewidth]{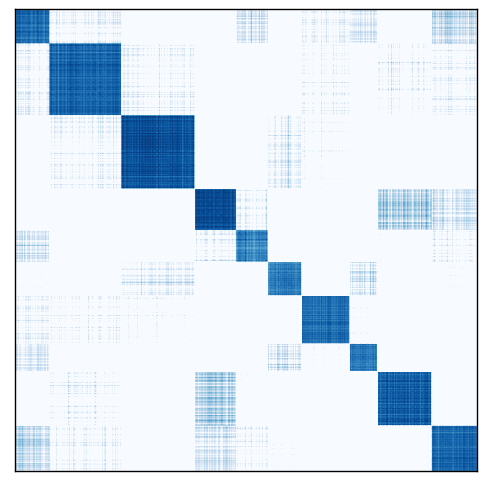}
            \caption{$\mathbf{H}^{(1)}$}
        \end{subfigure}%
        \hfill
        \begin{subfigure}{0.15\textwidth}
            \centering
            \includegraphics[width=\linewidth]{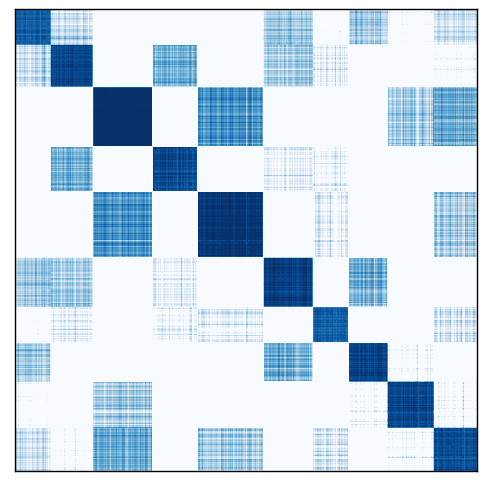}
            \caption{$\mathbf{H}^{(2)}$}
        \end{subfigure}%
        \hfill
        \begin{subfigure}{0.15\textwidth}
            \centering
            \includegraphics[width=\linewidth]{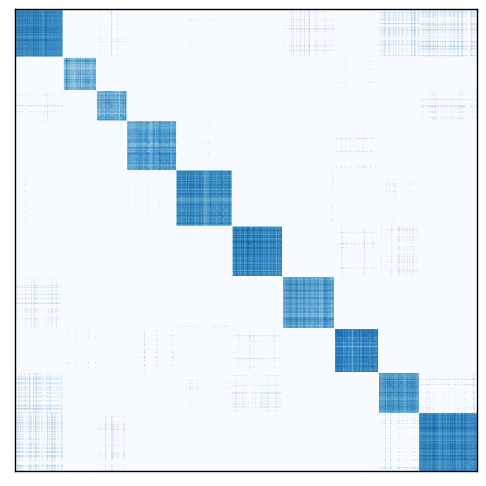}
            \caption{$\mathbf{H}^{(3)}$}
        \end{subfigure}%
        \hfill
        \begin{subfigure}{0.15\textwidth}
            \centering
            \includegraphics[width=\linewidth]{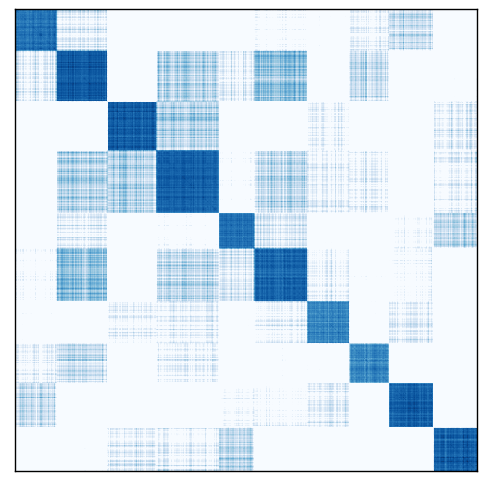}
            \caption{$\mathbf{H}^{(4)}$}
        \end{subfigure}%
        \hfill
        \begin{subfigure}{0.15\textwidth}
            \centering
            \includegraphics[width=\linewidth]{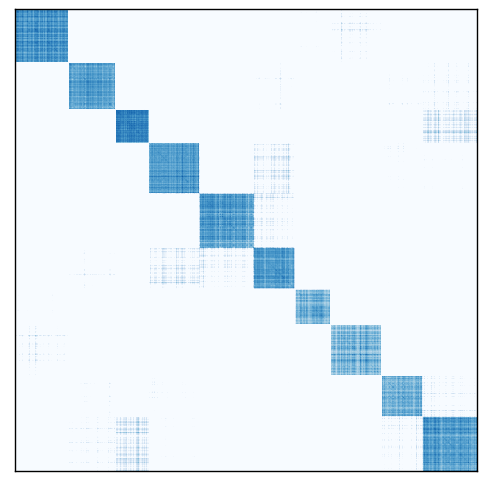}
            \caption{$\mathbf{H}^{(5)}$}
        \end{subfigure}%
        \hfill
        \begin{subfigure}{0.15\textwidth}
            \centering
            \includegraphics[width=\linewidth]{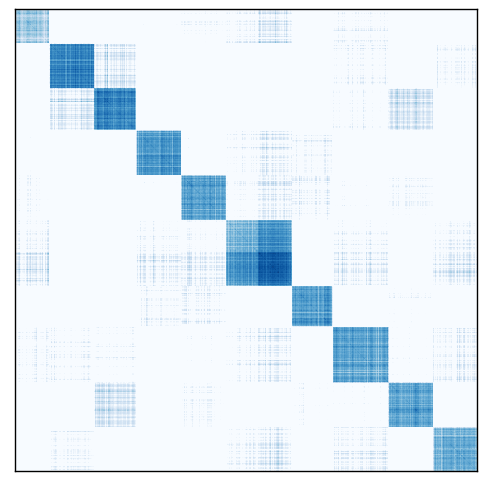}
            \caption{$\mathbf{H}$}
        \end{subfigure}
    \end{minipage}
    \vspace{-10pt}
    \caption{Similarity matrices of $ \{\mathbf{H}^{{v}}\}_{v=1}^{5}$ and $\mathbf{H}$ on NUSWIDEOBJECT10 with $r=0.5$.}
    \label{fig:6}
\end{figure}

\subsection{Analysis on FreeCSL}\label{experi:ablation}
\noindent\textbf{Parameter Sensitivity Analysis.} 
As in Section 3.5, we perform a parameter sensitivity analysis on the number of neighbors $\lambda$ and the regularization coefficient $\zeta$ in graph clustering, on YoutubeFace10, NoisyMNIST, Yale and NUSWIDEOBJ10 datasets. Fig. \ref{fig:Sensitivity} shows our model is highly stable, with minimal performance fluctuation even when $\lambda$ and $\zeta$ are adjusted to ranges of 3 to 32 and 0.05 to 0.5, respectively. A smaller number of neighbors $\lambda$ and more relaxed regularization constraints $\zeta$, will yield higher clustering accuracy (ACC). Except for the large-scale NoisyMNIST dataset, where a larger number of neighbors effectively enhance model performance by aggregating more useful neighbor information to discover cluster structures. In conclusion, our model present outstanding performance in complex clustering tasks without sacrificing computational resources for clustering accuracy or relying on strict regularization constraints.

\begin{figure}[!ht]
    \centering  
    \begin{subfigure}[t]{0.48\linewidth}  
        \includegraphics[width=\linewidth]{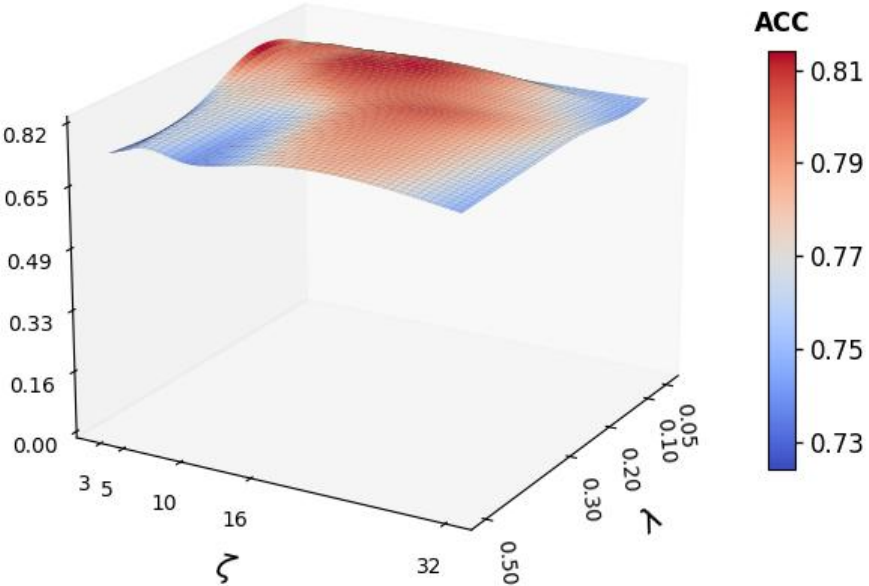}  
        \caption{YoutubeFace10}  
    \end{subfigure}  
    \hfill
    \begin{subfigure}[t]{0.48\linewidth}  
        \includegraphics[width=\linewidth]{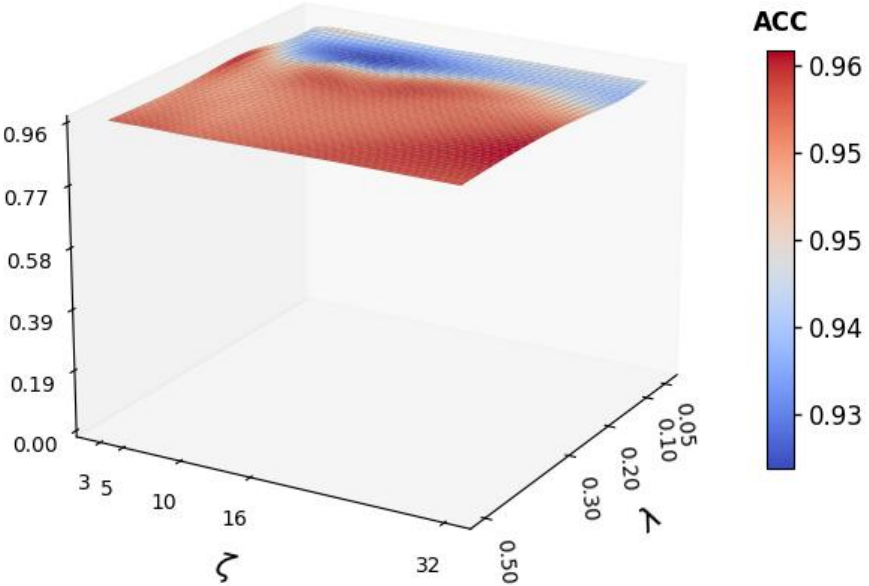}  
        \caption{NoisyMNIST}  
    \end{subfigure}  
    \vspace{5pt}
    \begin{subfigure}[t]{0.48\linewidth}  
        \includegraphics[width=\linewidth]{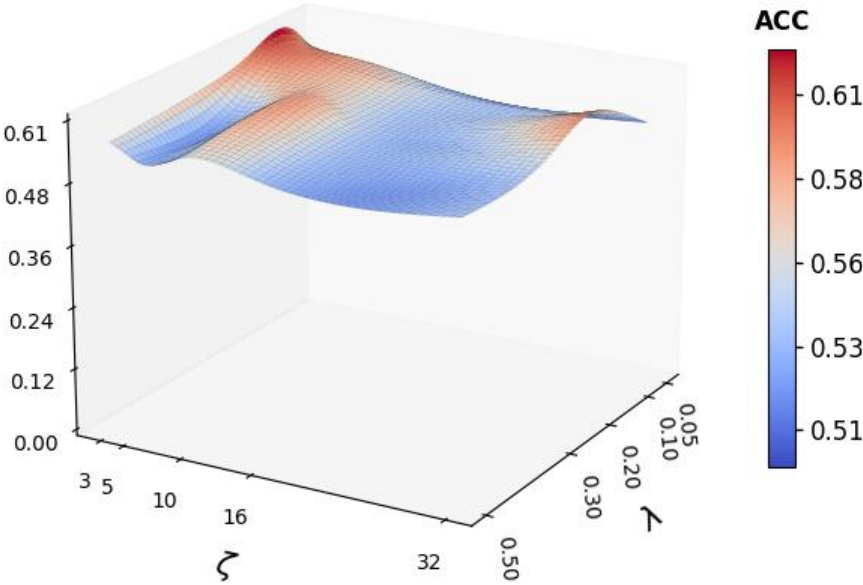}  
        \caption{Yale}  
    \end{subfigure}  
    \hfill
    \begin{subfigure}[t]{0.48\linewidth}  
        \includegraphics[width=\linewidth]{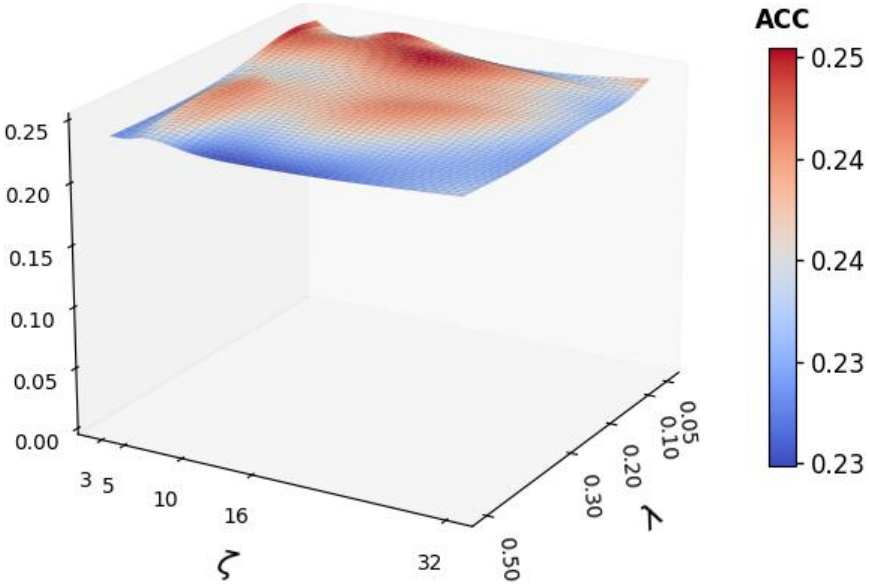}  
        \caption{NUSWIDEOBJ10}  
    \end{subfigure}  
    \vspace{-5pt}  
    \caption{Parameter analyses for $\zeta$ and $\lambda$ with $r=0.5$.}  
    \label{fig:Sensitivity}  
\end{figure}
\FloatBarrier
\begin{figure}[!ht] 
    \centering  
    \begin{subfigure}[t]{0.45\linewidth}  
        \includegraphics[width=\linewidth]{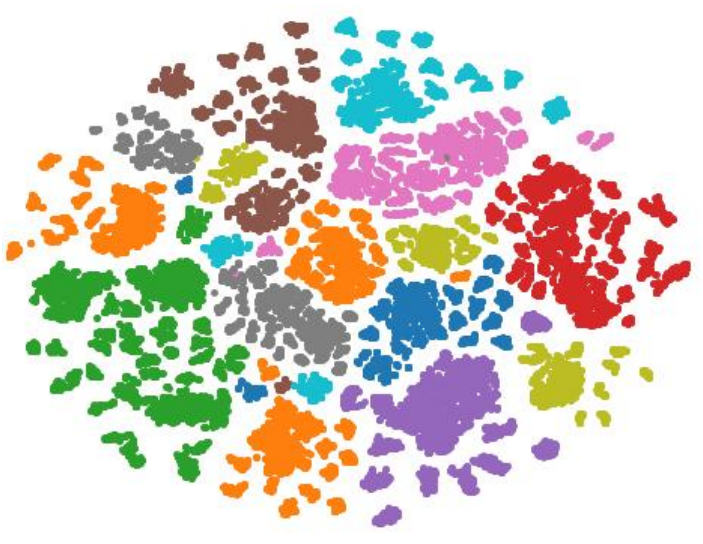}  
        \scriptsize\caption{{Pre-training on YoutubeFace10 (NMI = 69.41\%)}}  
    \end{subfigure}  
    \hfill  
    \begin{subfigure}[t]{0.45\linewidth}  
        \includegraphics[width=\linewidth]{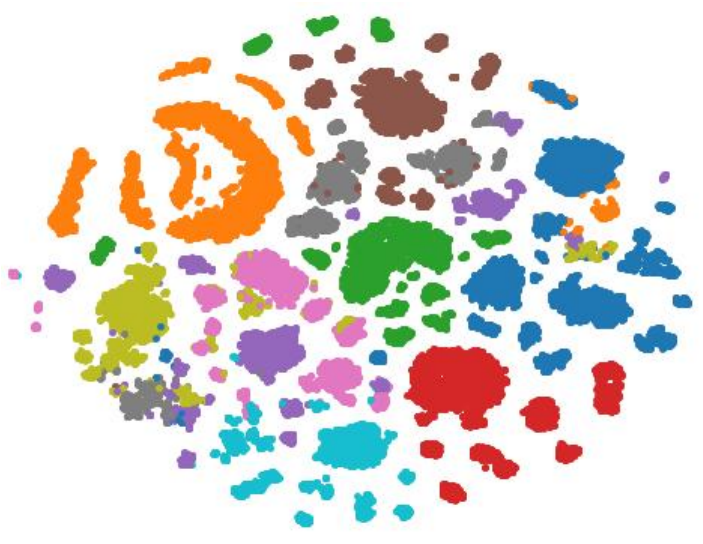}  
        \scriptsize\caption{{Training on YoutubeFace10 (NMI = 81.07\%)}}  
    \end{subfigure}  

    \vspace{0.5em} 
    \begin{subfigure}[t]{0.45\linewidth}  
        \includegraphics[width=\linewidth]{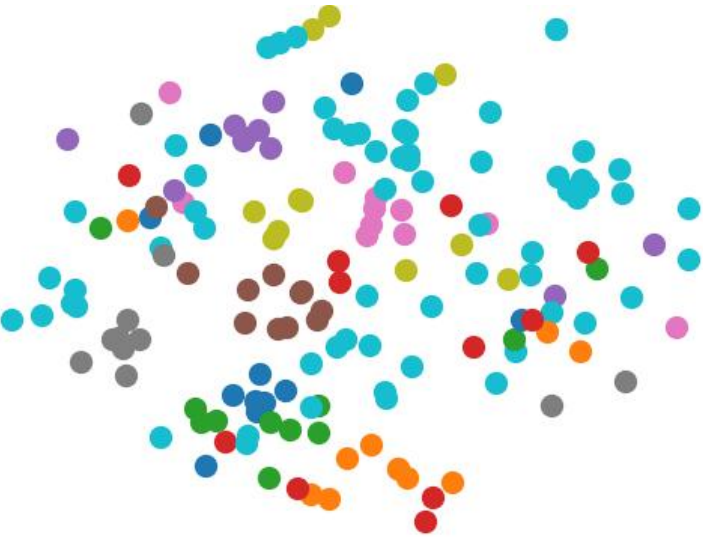}  
        \scriptsize{\caption{{Pre-training on Yale (NMI = 44.81\%)}} } 
    \end{subfigure}  
    \hfill  
    \begin{subfigure}[t]{0.45\linewidth}  
        \includegraphics[width=\linewidth]{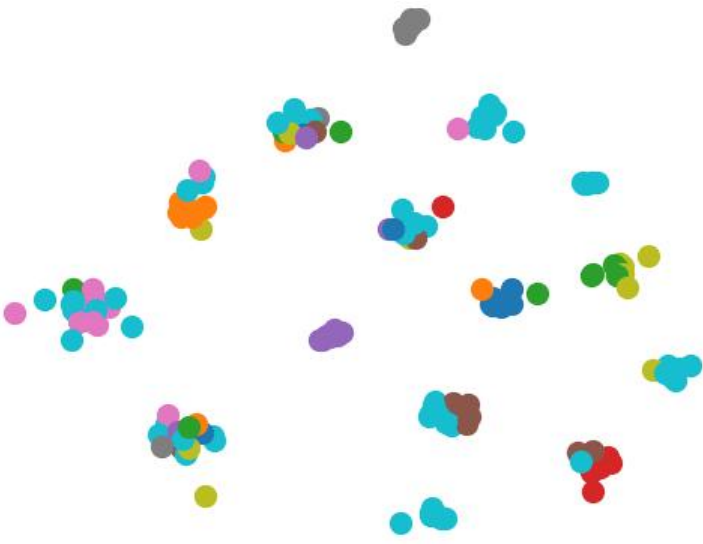}  
        \scriptsize\caption{{Training on Yale (NMI = 63.14\%)}}  
    \end{subfigure}  
    \vspace{-3pt}
    \caption{Visualization on YoutubeFace10 and Yale with $r=0.5$.}
    \vspace{-10pt}
\label{fig:t-SNE}
\end{figure}

\subsection{Visualization for Consensus Semantic Clusters}\label{experi:ablation}
Referring to true labels, we visualize the clustering effect of consensus semantic representations on YoutubeFace10 and Yale with the setting of missing rate $r=0.5$,
shown in Fig. \ref{fig:t-SNE} respectively. We can
observe that after the training of our model, all instances converge toward their respective clusters, where instances within the same cluster become more compact, and instances from
different clusters are separated far away.
In addition, the visualization results of the prototypes of
each cluster further confirm that through consensus prototype-based semantic learning, the shifted prototypes are re-estimated and accurately calibrated without the need for extra alignment processes.

\end{document}